\pgfplotsset{compat=1.15}
\newtheorem{theorem}{Theorem}[section]
\newtheorem{lemma}[theorem]{Lemma}
\newtheorem{claim}[theorem]{Claim}
\newtheorem{corollary}[theorem]{Corollary}
\DeclarePairedDelimiter\ceil{\lceil}{\rceil}
\DeclarePairedDelimiter\floor{\lfloor}{\rfloor}
\def\expect#1{\mathbb{E}\left[#1\right]}
\def\prob#1{\mathbb{P}\left[#1\right]}
\def\pr{\mathbb{P}}
\def\abs#1{\left\lvert#1\right\rvert}
\newcommand{\Pois}{\mathrm{Pois}}
\def\rbr#1{\left(#1\right)}   %round bracket
\def\sbr#1{\left[#1\right]}   %square bracket
\def\cbr#1{\left\{#1\right\}}  %curly bracket
\def\norm#1{\lVert#1\rVert}
\newcommand{\R}{\mathbb{R}}
\newcommand{\real}{\mathbb{R}}
\newcommand{\de}{\mathrm{d}}
\newcommand{\opt}{\mathrm{OPT}}
\newcommand{\cost}{\mathrm{cost}}
\newcommand{\cluster}{P}
\newcommand{\total}{\mathbf{X}}
\newcommand{\Y}{\mathbf{Y}}
\newcommand{\centerset}{C}
\newcommand{\potential}{\Psi}
\newcommand{\dimension}{d}
\newcommand{\pointset}{\mathbf{X}}
\newcommand{\extracenters}{\Delta}
\newcommand{\calE}{\mathcal{E}}
\newcommand{\calD}{\mathcal{D}}
\newcommand{\MM}{\partial \calM}
\newcommand{\ip}[1]{\left\langle #1 \right\rangle}
\newcommand{\one}{\mathds{1}}
\newcommand{\kpp}{\text{$\parallel_{\Pois}$}}
\newcommand{\kpois}{\text{$++_{\text{ER}}$}}
\newcommand{\calP}{{\cal{P}}}
\newcommand{\calM}{{\cal{M}}}
\newcommand{\calH}{{\cal{H}}}
\newcommand{\E}{\mathbb{E}}
\newcommand{\OPT}{\operatorname{OPT}}
\newcommand{\HH}{\widetilde{H}}
\renewcommand{\cite}[1]{\citep{#1}}
  \definecolor{mydarkblue}{rgb}{0,0.08,0.45}
\renewcommand\footnotemark{}%removes * in the title
\title{Improved Guarantees for $k$-means++ and $k$-means++ Parallel\footnote{The conference version of this paper will appear in the proceedings of the 34th Conference on Neural Information Processing Systems (NeurIPS 2020). Author order is alphabetical.}}
\author{Konstantin Makarychev}
\author{Aravind Reddy}
\author{Liren Shan}
\affil{Department of Computer Science
    \\Northwestern University
    \\Evanston, IL, USA}
\begin{document}
\date{}
\maketitle

\begin{abstract}
In this paper, we study $k$-means++ and $k$-means$\parallel$,  the two most popular algorithms for the classic $k$-means clustering problem. We provide novel analyses and show improved approximation and bi-criteria approximation guarantees for $k$-means++ and $k$-means$\parallel$. Our results give a better theoretical justification for why these algorithms  perform extremely well in practice. We also propose a new variant of $k$-means$\parallel$ algorithm (Exponential Race $k$-means++) that has the same approximation guarantees as $k$-means++.
\end{abstract}
\section{Introduction}

$k$-means clustering is one of the most commonly encountered unsupervised learning problems. Given a set of $n$ data points in Euclidean space, our goal is to partition them into $k$ clusters (each characterized by a center), such that the sum of squares of distances of data points to their nearest centers is minimized.
The most popular heuristic for solving this problem is Lloyd's algorithm \cite{lloyd}, often referred to simply as ``the $k$-means algorithm".

Lloyd's algorithm uses iterative improvements to find a locally optimal $k$-means clustering.  The performance of Lloyd's algorithm crucially depends on the quality of the initial clustering, which is defined by the initial set of centers, called a \emph{seed}. \citet*{arthur2007k} and \citet*{ostrovsky2006effectiveness} developed an elegant randomized seeding algorithm, known as the $k$-means++ algorithm. It works by choosing the first center uniformly at random from the data set and then choosing the subsequent $k-1$ centers by randomly sampling a single point in each round with the sampling probability of every point proportional to its current cost. That is, the probability of choosing any data point $x$ is proportional to the squared distance to its closest already chosen center. This squared distance is often denoted by $D^2(x)$. \citet*{arthur2007k} proved that the expected cost of the initial clustering obtained by $k$-means++ is at most $8\rbr{\ln k + 2}$ times the cost of the optimal clustering i.e., $k$-means++ gives an $8\rbr{\ln k + 2}$-approximation for the $k$-means problem. They also provided a family of $k$-means instances for which the approximation factor of $k$-means++ is $2\ln k$ and thus showed that
their analysis of $k$-means++ is almost tight.

Due to its speed, simplicity, and good empirical performance, $k$-means++ is the most widely used algorithm for $k$-means clustering. It is employed by such machine learning libraries as Apache Spark MLlib, Google BigQuery, IBM SPSS, Intel DAAL, and Microsoft ML.NET. In addition to $k$-means++, these libraries implement a scalable variant of $k$-means++ called $k$-means$\parallel$ (read ``$k$-means parallel'') designed by \citet*{bahmani2012scalable}. Somewhat surprisingly, $k$-means$\parallel$ not only works better in parallel than $k$-means++ but also slightly outperforms $k$-means++ in practice in the single machine setting (see~\citet{bahmani2012scalable} and Figure~\ref{fig:experiment} below).
However, theoretical guarantees for $k$-means$\parallel$ are substantially weaker than for $k$-means++.

The $k$-means$\parallel$ algorithm makes $T$ passes over the data set (usually $T = 5$). In every round, it independently draws approximately $\ell = \Theta(k)$ random centers according to the $D^2$ distribution. After each round it recomputes the distances to the closest chosen centers and updates $D^2(x)$ for all $x$ in the data set. Thus, after $T$ rounds, $k$-means$\parallel$ chooses approximately $T\ell$ centers. It then selects $k$ centers among $T\ell$ centers using $k$-means++.

\medskip

\noindent\textbf{Our contributions. } In this paper, we improve the theoretical guarantees for $k$-means++, $k$-means$\parallel$, and Bi-Criteria $k$-means++ (which we define below).

First, we show that the expected cost of the solution output by $k$-means++ is at most $5(\ln k +2)$ times the optimal solution's cost. This improves upon the bound of $8(\ln k +2)$ shown by \citet*{arthur2007k} and directly improves the approximation factors for several algorithms which use $k$-means++ as a subroutine like Local Search k-means++ \citep*{lattanzi2019better}. To obtain this result, we give a refined analysis of the expected cost of \emph{covered clusters} (see Lemma 3.2 in \citet*{arthur2007k} and Lemma~\ref{lem:5OPT} in this paper). We also show that our new bound on the expected cost of \emph{covered clusters} is tight (see Lemma~\ref{thm:5-approx-tight}).

Then, we address the question of why the observed performance of $k$-means$\parallel$ is better than the performance of $k$-means++. There are two possible explanations for this fact. (1) This may be the case because $k$-means$\parallel$ picks $k$ centers in two stages. At the first stage, it samples $\ell T \geq k$ centers. At the second stage, it prunes centers and chooses $k$ centers among $\ell T$ centers using $k$-means++.  (2) This may also be the case because $k$-means$\parallel$ updates the distribution function  $D^2(x)$ once in every round. That is, it recomputes $D^2(x)$ once for every $\ell$ chosen centers, while $k$-means++ recomputes $D^2(x)$ every time it chooses a center. In this paper, we empirically demonstrate that the first explanation is correct.
First, we noticed that $k$-means$\parallel$ for $\ell\cdot T = k$ is almost identical with $k$-means++ (see Appendix~\ref{sec:experiments}).
Second, we compare $k$-means$\parallel$ with another algorithm which we call Bi-Criteria $k$-means++ with Pruning. This algorithm also works in two stages: At the Bi-Criteria $k$-means++ stage, it chooses $k+\Delta$ centers in the data set using $k$-means++. Then, at the Pruning stage, it picks $k$ centers among the $k+\Delta$ centers selected at the first stage again using $k$-means++. Our experiments on the standard data sets BioTest from KDD-Cup 2004 \cite{kddcup2004} and COVTYPE from the UCI ML repository \cite{Dua:2019} show that the performance of $k$-means$\parallel$ and Bi-Criteria $k$-means++ with Pruning are essentially identical (see Figures~\ref{fig:experiment} and Appendix~\ref{sec:experiments}).

\begin{figure}
    \begin{minipage}[h]{0.45\linewidth}
        \begin{tikzpicture}[scale=0.7]
        \begin{axis}[
        title= {(a) BioTest},
        xlabel={\#centers},
        ylabel={cost},
        grid = major]
        \addplot[black,domain=1:2, line width = 1pt]  table[x=centers,y=avgKMeansPP,col sep=comma] {plotdata/bio-test10-50.csv}; \addlegendentry{$k$-means++}
        \addplot[red,domain=1:2, line width = 0.5pt]  table[x=centers,y=avgBicriteria,col sep=comma] {plotdata/bio-test10-50.csv}; \addlegendentry{BiCriteria $k$-means++ w/Pruning}
        \addplot [blue, domain=1:2, dashed, line width = 1pt] table[x=centers,y=avgKMeansParallel,col sep=comma] {plotdata/bio-test10-50.csv}; \addlegendentry{$k$-means$\parallel$}
        \end{axis}
        \end{tikzpicture}
        \captionsetup{justification=centering}
        %\caption{BioTest}
        %\label{fig:BioTest}
    \end{minipage}
    \quad
    \begin{minipage}[h]{0.45\linewidth}
        \begin{tikzpicture}[scale=0.7]
        \begin{axis}[
        title= {(b) COVTYPE},
        xlabel={\#centers},
        ylabel={cost},
        grid = major]
        \addplot[black,domain=1:2, line width = 1pt]  table[x=centers,y=avgKMeansPP,col sep=comma] {plotdata/covtype10-50.csv}; \addlegendentry{$k$-means++}
        \addplot[red,domain=1:2, line width = 0.5pt]  table[x=centers,y=avgBicriteria,col sep=comma] {plotdata/covtype10-50.csv}; \addlegendentry{BiCriteria $k$-means++ w/Pruning}
        \addplot [blue, domain=1:2, dashed, line width = 1pt] table[x=centers,y=avgKMeansParallel,col sep=comma] {plotdata/covtype10-50.csv};\addlegendentry{$k$-means$\parallel$}
        \end{axis}
        \end{tikzpicture}
        \captionsetup{justification=centering}
        %\caption{COVTYPE}
        %\label{fig:CovType}
    \end{minipage}
    \caption{Performance of $k$-means++, $k$-means$\parallel$, and Bi-Criteria  $k$-means++ with pruning on the BioTest and COVTYPE datasets. For $k=10,15,\cdots, 50$, we ran these algorithms for 50 iterations and took their average. We normalized the clustering costs by dividing them by $\cost_{1000}(\pointset)$.}
    \label{fig:experiment}
\end{figure}
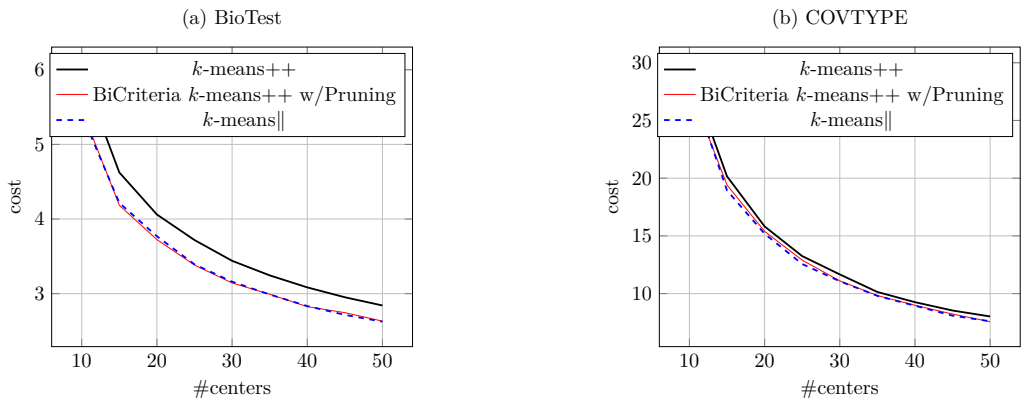

These results lead to another interesting question: How good are $k$-means++ and $k$-means$\parallel$ algorithms that sample $k+\Delta$ instead of $k$ centers? The idea of oversampling using $k$-means++ was studied earlier in the literature under the name of \emph{bi-criteria approximation}. ~\citet*{aggarwal2009adaptive} showed that with constant probability, sampling $k + \Delta$ centers by $k$-means++ provides a constant-factor approximation if $\Delta \geq \beta k$ for some constant $\beta > 0$. \citet{wei2016constant} improved on this result by showing an expected approximation ratio of $8(1+1.618 k/\extracenters)$. Note that for bi-criteria algorithms we compare the expected cost of the clustering with $k+\Delta$ centers they produce and the cost of the optimal clustering with exactly $k$ centers.

In this paper, we show that the expected bi-criteria approximation ratio for $k$-means++ with $\Delta$ additional centers is at most the minimum of two bounds:
$$\text{(A) } 5\rbr{2 + \frac{1}{2e} + \ln{\frac{2k}{\extracenters}}} \text{ for } 1\leq \Delta \leq 2k;\text{ and (B) }5\rbr{1+ \frac{k}{e\rbr{\extracenters-1}}} \text{ for }\Delta \geq 1$$
Both bounds are better than the bound by~\citet{wei2016constant}. The improvement is especially noticeable for small values of $\Delta$. More specifically, when the number of additional centers is $\Delta = k/ \log k$, our approximation guarantee is $O(\log\log k)$ while  \citet{wei2016constant} gives an $O(\log k)$ approximation.

We believe that our results for small values of $\Delta$ provide an additional explanation for why $k$-means++ works so well in practice. Consider a data scientist who wants to cluster a data set $\pointset$ with $k^*$ \emph{true clusters} (i.e. $k^*$ latent groups). Since she does not know the actual value of $k^*$, she uses the \emph{elbow method} \cite{boehmke2019hands} or some other heuristic to find $k$. Our results indicate that if she chooses slightly more number of clusters (for instance, $1.05 k^*$), then she will get a constant bi-criteria approximation to the optimal clustering.

We also note that our bounds on the approximation factor smoothly transition from the regular ($\Delta = 0$) to bi-criteria ($\Delta > 0$) regime. We complement our analysis with an almost matching lower bound of $\Theta(\log (k/\Delta))$ on the approximation factor of $k$-means for $\Delta \leq k$ (see Appendix~\ref{sec:lb}).

We then analyze Bi-Criteria $k$-means$\parallel$ algorithm, the variant of $k$-means$\parallel$ that does not prune centers at the second stage. In their original paper, \citet*{bahmani2012scalable} showed that the expected cost of the solution for $k$-means$\parallel$ with $T$ rounds and oversampling parameter $\ell$ is at most:
$$\frac{16}{1 - \alpha} \opt_k(\pointset) + \Big(\frac{1+\alpha}{2}\Big)^T \opt_1(\pointset),$$
where $\alpha = \exp\rbr{-\rbr{1-e^{-\ell/(2k)}}} $;
$\opt_k(\pointset)$ is the cost of the optimal $k$-means clustering of $\pointset$; $\opt_1(\pointset)$ is the cost of the optimal clustering of $X$ with 1 center (see Section~\ref{sec:prelim} for details). We note that $\opt_1(\pointset) \gg \opt_k(\pointset)$. For $\ell = k$, this result gives a bound of $\approx 49\, \opt_k(\pointset) + 0.83^T\opt_1(\pointset)$.
\citet*{bachem2017distributed} improved the approximation guarantee for $\ell \geq k$ to $$26 \opt_k(\pointset) + 2\Big(\frac{k}{e\ell}\Big)^T\opt_1(\pointset).$$
In this work, we improve this bound for $\ell \geq k$ and also obtain a better bound for $\ell < k$. For $\ell \geq k$, we show that the cost of $k$-means$\parallel$
without pruning is at most
$$8 \opt_k(\pointset) + 2\Big(\frac{k}{e\ell}\Big)^T\opt_1(\pointset).$$
For $\ell < k$, we give a bound of
$$
\frac{5}{1-e^{-\frac{\ell}{k}}}\;\opt_k(\pointset)+
2\rbr{e^{-\frac{\ell}{k}}}^T \opt_1(\pointset)$$

Finally, we give a new parallel variant of the $k$-means++ algorithm, which we call \emph{Exponential Race $k$-means++} ($k$-means$\kpois$). This algorithm is similar to $k$-means$\parallel$.
In each round, it also selects $\ell$ candidate centers in parallel (some of which may be dropped later) making one pass over the data set. However, after $T$ rounds, it returns exactly $k$ centers. The probability distribution of these centers is identical to the distribution of centers output by $k$-means++. The expected number of rounds is bounded as follows:
$$O\bigg(\frac{k}{\ell}+ \log\frac{\OPT_1(\pointset)}{\OPT_k(\pointset)}\bigg).$$
This algorithm offers a unifying view on $k$-means++ and $k$-means$\parallel$.
We describe it in Section~\ref{sec:pois-kmeans-pp}.

\medskip

\noindent\textbf{Other related work.} \citet*{dasgupta2008hardness} and \citet*{aloise2009np} showed that $k$-means problem is NP-hard. \citet*{awasthi2015hardness} proved that it is also NP-hard to approximate $k$-means objective within a factor of $(1+\varepsilon)$ for some constant $\varepsilon>0$ (see also \citet*{LSW17}). We also mention that $k$-means was studied not only for Euclidean spaces but also for arbitrary metric spaces.

There are several known \emph{constant} factor approximation algorithms for the $k$-means problem. \citet*{kanungo2004local} gave a $9+\varepsilon$ approximation local search algorithm. \citet*{ahmadian2019better} proposed a primal-dual algorithm with an approximation factor of $6.357$. This is the best known approximation for $k$-means. \citet*{makarychev2016bi} gave constant-factor bi-criteria approximation algorithms based on linear programming and local search. Note that although these algorithms run in polynomial time, they do not scale well to massive data sets. \citet*{lattanzi2019better} provided a constant factor approximation by combining the local search idea with the $k$-means++ algorithm. \citet*{choo2020k} further improved upon this result by reducing the number of local search steps needed from $O(k\log\log k)$ to $O(k)$.

Independently and concurrently to our work, \citet*{rozhovn2020simple} gave an interesting analysis for $k$-means$\parallel$ by viewing it as a \textit{balls into bins} problem and showed that $O(\log n / \log \log n)$ rounds suffice to give a constant approximation with high probability.

\medskip

\noindent\textbf{Acknowledgments.}
We would like to thank all the reviewers for their helpful comments. Konstantin Makarychev, Aravind Reddy, and Liren Shan were supported in part by NSF grants CCF-1955351 and HDR TRIPODS CCF-1934931. Aravind Reddy was also supported in part by NSF CCF-1637585.

\section{Preliminaries}\label{sec:prelim}
Given a set of points $\pointset = \cbr{x_1,x_2,\cdots, x_n} \subseteq \real^\dimension$ and an integer $k \geq 1$, the $k$-means clustering problem is to find a set $\centerset$ of $k$ centers in $\real^\dimension$ to minimize
$$
    \cost(\pointset,C) \coloneqq \sum_{x \in \pointset} \min_{c\in \centerset} \norm{x-c}^2.
$$
For any integer $i \geq 1$, let us define $ \opt_i(\pointset) \coloneqq \min_{\abs{\centerset} = i} \cost\rbr{\pointset, \centerset}.$ Thus, $\opt_k(\pointset)$ refers to the cost of the optimal solution for the $k$-means problem. Let $\centerset^*$ denote a set of optimal centers. We use $\cbr{\cluster_i}_{i=1}^k$ to denote the clusters induced by the center set $\centerset^*$.

For any $\Y \subseteq \pointset$, the cost of $\Y$ with center set $C$, denoted by $\cost\rbr{\mathbf{Y},\centerset} = \sum_{x \in \mathbf{Y}} \min_{c\in \centerset} \norm{x-c}^2$. The optimal cost for subset $\Y$ with $i$ centers is $\opt_i(\Y)$.
Let $\mu =\sum_{x \in \Y} x / \abs{\Y}$ be the \textit{centroid} of the cluster $\Y$.  Then, we have the following closed form expression for the optimal cost of $\Y$ with one center (see Appendix~\ref{sec:prelim_details} for proof),
\begin{align}\label{eq:opt-1-closed-form}
    \opt_1(\Y) = \sum_{x\in \Y} \norm{x-\mu}^2 = \frac{\sum_{(x,y)\in\Y\times\Y} \norm{x-y}^2}{2\abs{\Y}}.
\end{align}

\textbf{$k$-means++ seeding:}
The $k$-means++ algorithm samples the first center uniformly at random from the given points and then samples $k-1$ centers sequentially from the given points with probability of each point being sampled proportional to its cost i.e. $\cost(x,C)/ \cost(\pointset,C)$.

\begin{algorithm}
   \caption{$k$-means++ seeding}
   \label{alg:kmeans++}
\begin{algorithmic}[1]
   \STATE Sample a point $c$ uniformly at random from $\pointset$ and set $\centerset_1 = \cbr{c}$.
   \FOR{$t=2$ {\bfseries to} $k$}
   \STATE Sample $x \in \pointset$ w.p. $\cost(x,C_t)/\cost(\pointset,C_t)$.
   \STATE $\centerset_t = \centerset_{t-1} \cup \{x\}$.
   \ENDFOR
   \STATE \textbf{Return} $C_k$
\end{algorithmic}
\end{algorithm}

\textbf{$k$-means$\parallel$ and $k$-means$\kpp$ seeding:}
In the $k$-means$\parallel$ algorithm, the first center is chosen uniformly at random from $\pointset$. But after that, at each round, the algorithm samples each point independently with probability $\min\cbr{\ell\cdot\cost(x,C)/\cost(\pointset,C),1}$ where $\ell$ is the \textit{oversampling parameter} chosen by the user and it usually lies between $0.1k$ and $10k$. The algorithm runs for $T$ rounds (where $T$ is also a parameter chosen by the user) and samples around $\ell T$ points, which is usually strictly larger than $k$. This oversampled set is then weighted using the original data set $\pointset$ and a weighted version of $k$-means++ is run on this set to get the final $k$-centers. We only focus on the stage in which we get the oversampled set because the guarantees for the second stage come directly from $k$-means++.

For the sake of analysis, we also consider a different implementation of $k$-means$\parallel$, which we call $k$-means\kpp\; (Algorithm~\ref{alg:kmeanpp_pd}). This algorithm differs from $k$-means$\parallel$ in that each point is sampled independently with probability $1-\exp(-\ell\cdot\cost(x,C)/\cost(\pointset,C))$ rather than $\min\{\ell\cdot\cost(x,C)/\cost(\pointset,C),1\}$. In practice, there is essentially no difference between $k$-means$\parallel$ and $k$-means$\kpp$, since $\ell\cdot\cost(x,C)/\cost(\pointset,C)$ is a very small number for all $x$ and thus the sampling probabilities for $k$-means$\parallel$ and $k$-means$\kpp$ are almost equal.

\begin{minipage}{0.45\textwidth}
\begin{algorithm}[H]
\caption{$k$-means$\parallel$ seeding}
\label{alg:kmeanpp}
\begin{algorithmic}[1]
\STATE Sample a point $c$ uniformly from $\total$ and set $\centerset_1 = \cbr{c}$
\FOR{$t=1$ {\bfseries to} $T$}
\STATE Sample each point $x$ into $\centerset^\prime$ independently w.p. $\min\{1,\lambda_{t}(x)\}$ where \\$\lambda_{t}(x) = \ell\cdot\cost(x,C_t)/ \cost(\total,C_t)$
\STATE Let $\centerset_{t+1} = \centerset_{t} \cup \centerset^\prime$.
\ENDFOR
\end{algorithmic}
\end{algorithm}
\end{minipage}
\quad
\begin{minipage}{0.45\textwidth}
\begin{algorithm}[H]
\caption{$k$-means$\kpp$ seeding}
\label{alg:kmeanpp_pd}
\begin{algorithmic}[1]
\STATE Sample a point $c$ uniformly from $\total$ and set $\centerset_1 = \cbr{c}$
\FOR{$t=1$ {\bfseries to} $T$}

\STATE Sample each point $x$ into $\centerset^\prime$ independently w.p. $1-e^{-\lambda_{t}(x)}$ where \\$\lambda_{t}(x) = \ell\cdot\cost(x,C_t)/ \cost(\total,C_t)$
\STATE Let $\centerset_{t+1} = \centerset_{t} \cup \centerset^\prime$.
\ENDFOR
\end{algorithmic}
\end{algorithm}
\end{minipage}

In the rest of the paper, we focus only on the \emph{seeding} step of $k$-means++, $k$-means$\parallel$, and $k$-means$\kpp$ and ignore Lloyd's iterations as the approximation guarantees for these algorithms come entirely from the seeding step.

\section{General framework}\label{sec:framework}
In this section, we describe a general framework we use to analyze $k$-means++ and $k$-means\kpp. Consider $k$-means++ or $k$-means$\kpp$ algorithm.
Let $C_t$ be the set of centers chosen by this algorithm after step $t$. For the sake of analysis, we assume that $C_t$ is an ordered set or list of centers, and the order of centers in $C_t$ is the same as the order in which our algorithm chooses these centers.
We explain how to order centers in
$k$-means$\kpp$ algorithm in Section~\ref{sec:po-kmeans-parallel}.
We denote by $T$ the stopping time of the algorithm.
Observe that after step $t$ of the algorithm, the probabilities of choosing a new center in $k$-means++ or a batch of new centers in $k$-means$\kpp$ are defined by the current costs of points in $\pointset$ which, in turn, are completely determined by the current set of centers $\centerset_t$. Thus, the states of the algorithm form a Markov chain.

In our analysis, we fix the optimal clustering $\calP = \{\cluster_1,\dots, \cluster_k\}$ (if this clustering is not unique, we pick an arbitrary optimal clustering). The optimal cost of each cluster $\cluster_i$ is $\opt_1(\cluster_i)$ and the optimal cost of the entire clustering is
$\opt_k(\pointset)=\sum_{i=1}^k \opt_1(\cluster_i)$.

Following the notation in \citet*{arthur2007k}, we say that a cluster $\cluster_i$ is \emph{hit} or \emph{covered} by a set of centers $C$ if $C\cap P_i \neq \varnothing$; otherwise, we say that $P_i$ is \emph{not hit} or \emph{uncovered}. We split the cost of each cluster $P_i$ into two
components which we call the covered and uncovered costs of $P_i$. For a given set of centers $\centerset$,
\begin{align*}
    \text{The covered or hit cost of } P_i,\qquad H(P_i,C) &\coloneqq
\begin{cases}
  \cost(P_i,C), & \mbox{if $P_i$ is covered by $C$ } \\
  0, & \mbox{otherwise}.
\end{cases}
\\\text{The uncovered cost of } P_i,\qquad U(P_i,C) &\coloneqq
\begin{cases}
  0, & \mbox{if $P_i$ is covered by $C$ } \\
  \cost(P_i,C), & \mbox{otherwise}.
\end{cases}
\end{align*}
\iffalse
%The covered or hit cost $H(P_i,\centerset)$ of $P_i$ for a given set of centers $C$ equals
$$H(P_i,C) =
\begin{cases}
  \cost(P_i,C), & \mbox{if $P_i$ is covered by $C$ } \\
  0, & \mbox{otherwise}.
\end{cases}$$
The uncovered cost $U(P_i,C)$ equals
$$U(P_i,C) =
\begin{cases}
  0, & \mbox{if $P_i$ is covered by $C$ } \\
  \cost(P_i,C), & \mbox{otherwise}.
\end{cases}$$
\fi
Let $H(\pointset,\centerset) = \sum_{i=1}^k H(\cluster_i,\centerset)$ and $U(\pointset,\centerset) = \sum_{i=1}^k U(\cluster_i,\centerset)$. Then,
$$\cost(\pointset,C) = H(\pointset,C) + U(\pointset,C).$$
For the sake of brevity, we define $\cost_t(\Y) \coloneqq \cost(\Y,C_t)$ for any $\Y \subseteq \pointset$, $H_t(P_i) \coloneqq H(P_i,C_t)$, and $U_t(P_i) \coloneqq U(P_i,C_t)$.
In Section~\ref{sec:5OPT}, we show that for any $t$, we have $\E[H_t(\pointset)]\leq 5\opt_k(\pointset)$,
which is an improvement over the bound of $8\opt_k(\pointset)$ given by \citet*{arthur2007k}.
Then, in Sections~\ref{sec:po-bi-criteria} and \ref{sec:po-kmeans-parallel}, we analyze the expected uncovered cost $U(\pointset,C_T)$ for $k$-means++ and $k$-means$\parallel$ algorithms.

Consider a center $c$ in $\centerset$. We say that $c$ is a \textit{miss} if another center $c'$ covers the same cluster in $\calP$ as $c$, and $c'$ appears before $c$ in the ordered set $C$.
We denote the number of misses in $C$ by $M(C)$ and the the number of clusters in $\calP$ not covered by centers in $C$ by $K(C)$.

Observe that the stochastic processes $U_t(P_i)$ with discrete time $t$ are non-increasing since the algorithm never removes centers from the set $C_t$ and therefore the distance from any point $x$ to $C_t$ never increases. Similarly, the processes $H_t(P_i)$ are non-increasing after the step $t_i$ when $P_i$ is covered first time.
In this paper, we sometimes use a proxy $\HH_t(P_i)$ for $H_t(P_i)$, which we define as follows.
If $P_i$ is covered by $C_t$, then $\HH_t(P_i) = H_{t_i}(P_i)$, where $t_i\leq t$ is the first time when $P_i$ is covered by $C_t$.
If $P_i$ is not covered by $C_t$, then $\HH_t(P_i)=5\opt_1(P_i)$. It is easy to see that $H_{t}(P_i) \leq \HH_{t'}(P_i)$ for all
$t\leq t'$.
In Section~\ref{sec:5OPT}, we also show that $\HH_t(P_i)$ is a supermartingale i.e., $\E [\HH_{t'}(P_i) \mid C_{t}] \leq \HH_{t}(P_i)$ for all $t\leq t'$.

\section{Bound on the cost of covered clusters}\label{sec:5OPT}
In this section, we improve the bound by \citet*{arthur2007k} on the expected cost of a covered cluster in $k$-means++. Our bound also works for $k$-means$\kpp$ algorithm. Pick an arbitrary cluster $P_i$ in the optimal solution $\calP= \cbr{P_1,\dots, P_k}$ and consider an arbitrary state
$C_t = \cbr{c_1,\dots,c_N}$ of the $k$-means++ or $k$-means$\parallel_{\Pois}$ algorithm. Let $D_{t+1}$ be the set of new centers the algorithm adds to $C_t$
at step $t$ (for $k$-means++, $D_{t+1}$ contains only one center). Suppose now that centers in $D_{t+1}$ cover $P_i$ i.e. $D_{t+1}\cap P_i\neq \varnothing$.
We show that the expected cost of cluster $P_i$ after step $(t+1)$ conditioned on the event $\{D_{t+1}\cap P_i\neq \varnothing\}$ and the current state of the algorithm $C_t$ is upper bounded by $5\opt_1(P_i)$ i.e.
\begin{equation}\label{eq:5OPT}
\E\sbr{\cost(\cluster_i, C_{t+1}) \mid C_t, \{D_{t+1} \cap \cluster_i\neq \varnothing\}} \leq 5 \opt_1(P_i).
\end{equation}

We now prove the main lemma.
\begin{lemma}\label{lem:5OPT}
Consider an arbitrary set of centers $C=\{c_1,\dots, c_N\} \subseteq \R^\dimension$ and an arbitrary set $P\subseteq \pointset$. Pick a random point $c$ in $P$ with probability $\Pr(c = x) = \cost(x, C)/\cost(P,C)$. Let $C' = C\cup \{c\}$. Then, $ \E_c\sbr{\cost(P,C')} \leq 5 \opt_1(P)$.
\end{lemma}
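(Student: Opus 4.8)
The plan is to follow the template of Lemma~3.2 of \citet*{arthur2007k} but to estimate more tightly precisely where that proof is wasteful. First I would write the expectation out explicitly. Since $C' = C\cup\cbr{c}$, for every $y\in P$ the new cost is $\cost(y,C') = \min\cbr{\cost(y,C),\,\norm{y-c}^2}$, so, with $\cost(P,C) = \sum_{y\in P}\cost(y,C)$,
\[
\E_c\sbr{\cost(P,C')} \;=\; \frac{1}{\cost(P,C)}\sum_{x\in P}\cost(x,C)\sum_{y\in P}\min\cbr{\cost(y,C),\,\norm{x-y}^2}.
\]

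The factor $\cost(x,C)$ in front is the sampling‑probability weight, and the first job is to remove it. The triangle inequality gives $\sqrt{\cost(x,C)}\le \sqrt{\cost(z,C)}+\norm{x-z}$ for every $z\in P$, hence $\cost(x,C)\le 2\cost(z,C)+2\norm{x-z}^2$; averaging over $z\in P$ yields
\[
\cost(x,C)\;\le\;\frac{2}{\abs{P}}\,\cost(P,C)\;+\;\frac{2}{\abs{P}}\sum_{z\in P}\norm{x-z}^2 .
\]
Substituting this into the expectation splits it into two sums, and with the closed form $\sum_{(x,z)\in P\times P}\norm{x-z}^2 = 2\abs{P}\,\opt_1(P)$ from \eqref{eq:opt-1-closed-form} both sums can be compared to $\opt_1(P)$. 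If one now crudely bounds $\sum_{y}\min\cbr{\cost(y,C),\norm{x-y}^2}\le\sum_y\norm{x-y}^2$ in the first sum and $\le\cost(P,C)$ in the second, one recovers exactly the bound $8\,\opt_1(P)$ of \citet*{arthur2007k}.

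The improvement comes from not discarding the minimum twice: for each $y$ only one of $\cost(y,C)$ and $\norm{x-y}^2$ actually equals the minimum, so applying ``$\le\norm{x-y}^2$'' and ``$\le\cost(y,C)$'' to the same sum over‑counts. I would exploit this by splitting $P$, for each fixed $x$, into $\cbr{y:\norm{x-y}^2\le\cost(y,C)}$ and its complement, so that the minimum is replaced by $\norm{x-y}^2$ only on the first set and by $\cost(y,C)$ only on the second — equivalently, writing $\sum_y\min\cbr{\cdot,\cdot} = \sum_y\norm{x-y}^2 - \sum_y(\norm{x-y}^2-\cost(y,C))^{+}$ and lower‑bounding the subtracted overshoot. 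In parallel I would dispose of the case $\cost(P,C)\le 5\,\opt_1(P)$ for free, since there $\cost(P,C')\le\cost(P,C)\le 5\,\opt_1(P)$ by monotonicity of cost under adding centers; thus the delicate estimate is only needed when $\cost(P,C)$ is large relative to $\opt_1(P)$, and in that regime the minimum is pinned to $\norm{x-y}^2$ for the bulk of the sampling probability, so the estimate collapses toward the ideal $2\,\opt_1(P)$.

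The hard part will be the accounting in the last paragraph: quantifying how much the retained minimum saves in each of the two sums and checking that the savings combine to bring the constant down to exactly $5$, uniformly over the uncontrolled parameter $\cost(P,C)$. Because $\cost(P,C)$ can be arbitrarily small or arbitrarily large, the bound must degrade smoothly between the two easy extremes, and the extremal configuration — which, by Lemma~\ref{thm:5-approx-tight}, genuinely attains $5\,\opt_1(P)$ — lies strictly between them, so none of the inequalities along the way can afford slack. I expect the bookkeeping (choosing the right split of the inner sum, and the right way to distribute the two upper bounds $\cost(P,C)$ and $\sum_z\norm{x-z}^2$ over it) to be the only real difficulty; the individual inequalities are all elementary.
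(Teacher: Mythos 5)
There is a genuine gap, and you have not actually completed the argument; more importantly, the accounting you defer to at the end cannot succeed in the form you describe. Your starting inequality $\cost(x,C)\le\frac{2}{|P|}\cost(P,C)+\frac{2}{|P|}\sum_{z\in P}\norm{x-z}^2$ already costs more than the savings you hope to recover from the min term. Run your decomposition on the instance from Theorem~\ref{thm:5-approx-tight} ($t$ copies of $0$, one point at $1$, $C=\{-1\}$): even if the minimum $\min\{\cost(y,C),\norm{x-y}^2\}$ is kept exactly rather than crudely bounded, the two resulting sums converge to $4\opt_1(P)$ and $2\opt_1(P)$ respectively, for a total of $6\opt_1(P)$ as $t\to\infty$. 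So perfect bookkeeping of the min inside your decomposition stops at $6$, not $5$. This also shows your closing intuition is false: in that example $\cost(P,C)\gg\opt_1(P)$ and the minimum is pinned to $\norm{x-y}^2$ for every pair $(x,y)$, yet the expectation equals $5\opt_1(P)$, not anything near $2\opt_1(P)$; the $D^2$ weighting concentrates mass on points $x$ for which $\sum_y\norm{x-y}^2$ is large, so ``min is pinned to $\norm{x-y}^2$'' does not make the estimate collapse.

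The paper's proof avoids this problem by never decomposing $\cost(x,C)$ via the triangle inequality at all. Instead it symmetrizes the double sum into $f(x,y)=\cost(x,C)\min\{\norm{x-y}^2,\cost(y,C)\}+\cost(y,C)\min\{\norm{x-y}^2,\cost(x,C)\}$, proves the pointwise bound $f(x,y)\le 5\min\{\cost(x,C),\cost(y,C)\}\norm{x-y}^2$ by a three-case argument that uses the triangle inequality once, in the form $\cost(y,C)\le\bigl(\sqrt{\cost(x,C)}+\norm{x-y}\bigr)^2$ — this is where the factor $5$ genuinely arises — and then shows via a separate optimization lemma that $\frac{1}{2\cost(P,C)}\sum_{x,y}\min\{\cost(x,C),\cost(y,C)\}\norm{x-y}^2\le\opt_1(P)$ by maximizing over all nonnegative weight vectors $\phi$ and observing that the maximum is attained at a $\{0,1\}$-vector, where the expression reduces to $\opt_1(P')$ for some $P'\subseteq P$. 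Your write-up contains neither the symmetrization, nor the pointwise $5$-bound on $f$, nor the optimization step, and the calculation above shows that the path you sketch cannot substitute for them.
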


\noindent\textbf{Remarks:} Lemma 3.2 in the paper by \citet*{arthur2007k} gives
a bound of $8 \opt_1(P)$. We also show in Appendix~\ref{sec:lb} that our bound is tight (see Lemma~\ref{thm:5-approx-tight}).

\begin{proof}
The cost of any point $y$ after picking
center $c$ equals the squared distance from $y$ to the set of
centers $C' = C \cup \{c\}$, which in turn equals
$\min\{\cost(y,C), \|y-c\|^2\}$. Thus, if a point
$x\in P$ is chosen as a center, then the
cost of point $y$ equals $\min\{\cost(y,C), \norm{x-y}^2\}$.
Since $\Pr(c=x) = \cost(x,C)/\cost(P,C)$, we have
\begin{align*}
\E_c\sbr{\cost(P,C')} = \sum_{\substack{x \in P\\y\in P}} \frac{\cost(x,C)}{\cost(P,C)}\cdot \min\{\cost(y,C), \|x-y\|^2\}. 
\end{align*}
We write the right hand side in a symmetric form
with respect to $x$ and $y$. To this end,
we define function $f$ as follows:
\begin{align*}
f(x,y) = \cost(x,C) \cdot \min\cbr{\norm{x-y}^2, \cost(y,C)} + \cost(y,C) \cdot \min\cbr{\norm{x-y}^2, \cost(x,C)}.
\end{align*}
Note that $f(x,y) = f(y,x)$. Then,
$$\E_c\sbr{\cost(P,C')} =
\frac{1}{2\cost(P,C)} \sum_{(x, y)\in \cluster\times \cluster} f(x,y).
$$

We now give an upper bound on $f(x,y)$ and then use
this bound to finish the proof of Lemma~\ref{lem:5OPT}.

\begin{lemma}\label{lem:fxy}
For any $x,y \in \cluster$, we have $f(x,y) \leq 5 \min\cbr{\cost(x,C),\cost(y,C)} \norm{x-y}^2$.
\end{lemma}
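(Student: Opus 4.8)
The plan is to reduce the claim to an elementary inequality in three nonnegative quantities and then dispatch it with a two-case argument, using essentially one geometric fact. Since $f(x,y)=f(y,x)$, I would first assume without loss of generality that $\cost(x,C)\le\cost(y,C)$ and abbreviate $a=\cost(x,C)$, $b=\cost(y,C)$, $D=\norm{x-y}^2$, so that $0\le a\le b$ and the target becomes $a\min\cbr{D,b}+b\min\cbr{D,a}\le 5aD$.

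The only geometric input I would use is the triangle inequality for the distance from a point to the set $C$, namely $\mathrm{dist}(y,C)\le\norm{x-y}+\mathrm{dist}(x,C)$, which after squaring reads $\sqrt b\le\sqrt D+\sqrt a$. Everything else is bookkeeping on which of $D$, $a$, $b$ is the smallest, and the natural case split is on how $D$ compares with $a$.

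If $D\le a$ (hence $D\le a\le b$), both minima equal $D$ and the left-hand side is $(a+b)D$; here the square-rooted triangle inequality gives $\sqrt b\le 2\sqrt a$, so $b\le 4a$ and $(a+b)D\le 5aD$. If $D>a$, then $\min\cbr{D,a}=a$ while $\min\cbr{D,b}\le D$, so the left-hand side is at most $aD+ab=a(D+b)$; now $\sqrt b\le\sqrt D+\sqrt a\le 2\sqrt D$, so $b\le 4D$ and $a(D+b)\le 5aD$. Combining the two cases proves the lemma, and then plugging the bound into the symmetric expression for $\E_c[\cost(P,C')]$ from the proof of Lemma~\ref{lem:5OPT} finishes that lemma as well.

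I do not expect any serious obstacle: there is no induction and no delicate estimate — the main work is just organizing the case split so that the triangle inequality can be invoked in the right form. The one point worth noting is that the constant $5$ is forced in the first case, where the triangle inequality must be used with essentially no slack: the extremal configuration $D=a$, $b=4a$ (squared distances to $C$ in ratio $1:4$, with $x$, $y$, and the nearest center roughly collinear) shows $5$ cannot be lowered here, which is precisely why the downstream bound $5\,\opt_1(P)$ is tight, matching the instance of Lemma~\ref{thm:5-approx-tight}.
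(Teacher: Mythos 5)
Your proof is correct and follows essentially the same approach as the paper's: reduce to the three scalars $a=\cost(x,C)$, $b=\cost(y,C)$, $D=\norm{x-y}^2$, assume WLOG $a\le b$, and invoke the square-root triangle inequality $\sqrt b\le\sqrt a+\sqrt D$ in a case split. The only (cosmetic) difference is that the paper splits into three cases according to the full ordering of $a,b,D$, whereas you collapse the two cases with $D>a$ into one by the common bound $\min\{D,b\}\le D$, which is slightly tidier and loses nothing since only the constant $5$ is claimed.
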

\begin{proof}
Since $f(x,y)$ is a symmetric function with respect to $x$ and $y$, we may assume without loss of generality that $\cost(x,C) \leq \cost(y,C)$. Then, we need to show that $f(x,y) \leq 5 \cost(x,C) \norm{x-y}^2$.
Consider three cases.

\medskip

\noindent\textbf{Case 1:} If $\cost(x,C) \leq \cost(y,C) \leq \norm{x-y}^2$, then
\begin{align*}
    f(x,y) = 2\cost(x,C) \cost(y,C) \leq 2\cost(x,C) \norm{x-y}^2.
\end{align*}

\medskip

\noindent\textbf{Case 2:} If $\cost(x,C) \leq \norm{x-y}^2 \leq \cost(y,C)$, then
\begin{align*}
    f(x,y) = \cost(x,C) \norm{x-y}^2 + \cost(y,C) \cost(x,C).
\end{align*}
By the triangle inequality, we have
$$\cost(y,C) \leq \rbr{\sqrt{\cost(x,C)} + \norm{x-y}}^2 \leq 4 \norm{x-y}^2.$$
Thus, $f(x,y) \leq 5 \cost(x,C) \norm{x-y}^2$.

\medskip

\noindent\textbf{Case 3:} If $\norm{x-y}^2 \leq \cost(x,C) \leq \cost(y,C)$, then
$$f(x,y) = \rbr{\cost(x,C) + \cost(y,C)} \norm{x-y}^2.$$
By the triangle inequality,
$$\cost(y,C) \leq \rbr{\sqrt{\cost(x,C)}+ \norm{x-y}}^2 \leq 4 \cost(x,C).$$
Thus, we have
$f(x,y) \leq 5 \cost(x,C) \norm{x-y}^2$.

In all cases, the desired inequality holds. This concludes the proof of Lemma~\ref{lem:fxy}.
\end{proof}

\medskip

We use Lemma~\ref{lem:fxy} to bound the expected cost of $P$.
Let $\phi^*$ be a vector in $\real^{P}$ with $\phi^*_x = \cost(x,C)$ for any $x \in \cluster$. Then,
$
f(x,y) \leq 5 \min\cbr{\phi^*_x,\phi^*_y} \norm{x-y}^2$.
Since $\cost(P,C) = \sum_{z\in P} \phi^*_z$, we have
$$
\E_c\sbr{\cost(P,C')} \leq
\underbrace{
    \frac{5\sum_{(x, y)\in \cluster\times \cluster} \min\cbr{\phi^*_x,\phi^*_y} \norm{x-y}^2}{2\sum_{z\in P}\phi^*_z}
}_{5F(\phi^*)}.
$$
For arbitrary vector $\phi \in \real_{\geq 0}^{P}$, define the
following function:
\begin{equation}\label{eq:def:F}
F(\phi) = \frac{\sum_{(x,y) \in \cluster\times \cluster}  \min\cbr{\phi_x,\phi_y} \norm{x-y}^2}{2\sum_{z\in P} \phi_z}. 
\end{equation}
We have $\E_c\sbr{\cost(P,C')} \leq 5F(\phi^*)$. Thus,
to finish the proof of Lemma~\ref{lem:5OPT}, it suffices to show that
$F(\phi)\leq \opt_1(P)$ for every $\phi \geq 0$ and particularly for
$\phi=\phi^*$. By Lemma~\ref{lem:5OPT:integral} (which we state and prove below), function
$F(\phi)$ is maximized when $\phi\in \{0,1\}^P$. Let $\phi^{**}$
be a maximizer of $F(\phi)$ in $\{0,1\}^P$
and
$P'=\{x\in P: \phi^{**}_x = 1\}$. Observe that
\begin{align*}
    F(\phi^{**}) =  
\frac{\sum_{(x,y) \in \cluster'\times \cluster'}
\norm{x-y}^2}{2|P'|}
= \opt_1(P').
\end{align*}
Here we used the closed form expression
(\ref{eq:opt-1-closed-form}) for
the optimal cost of cluster $P'$. Since $P'\subset P$, we
have $\opt_1(P')\leq \opt_1(P)$. Thus,
$F(\phi^*)\leq F(\phi^{**})\leq \opt_1(P)$.
\end{proof}

\begin{lemma}\label{lem:5OPT:integral}
There exists a maximizer $\phi^{**}$ of $F(\phi)$ in the region $\{\phi \geq 0\}$
such that $\phi\in \{0,1\}^P$.
\end{lemma}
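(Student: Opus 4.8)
The plan is to use the layer-cake (superlevel-set) representation of $F$ and thereby reduce the maximization over all nonnegative $\phi$ to a maximization over subsets of $P$. Since $P$ is finite there are no measure-theoretic subtleties: every ``integral'' below is actually a finite sum. First I would note that $F(c\phi) = F(\phi)$ for all $c>0$, and that $F(\phi)$ is only undefined at $\phi = 0$, so it suffices to exhibit a $\{0,1\}$-maximizer of $F$ over $\real_{\geq 0}^P\setminus\{0\}$. For $t > 0$ let $S_t = S_t(\phi) := \{z\in P : \phi_z \geq t\}$ be the superlevel set at threshold $t$. Using $\min\{\phi_x,\phi_y\} = \int_0^\infty \one[\phi_x\geq t]\,\one[\phi_y\geq t]\,dt$ and $\phi_z = \int_0^\infty \one[\phi_z\geq t]\,dt$, the numerator and denominator in~\eqref{eq:def:F} become
\begin{align*}
\sum_{(x,y)\in P\times P}\min\{\phi_x,\phi_y\}\,\norm{x-y}^2 &= \int_0^\infty g(S_t)\,dt,\qquad g(S) := \sum_{(x,y)\in S\times S}\norm{x-y}^2,\\
2\sum_{z\in P}\phi_z &= \int_0^\infty 2\abs{S_t}\,dt.
\end{align*}
As $t$ ranges over $(0,\infty)$ the set $S_t$ is a decreasing step function taking only finitely many values, so both integrals are finite nonnegative linear combinations of the quantities $g(S)$ and $2\abs{S}$ over the (finitely many) sets $S$ that occur.

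Next I would apply the elementary fact that a weighted average of ratios is at most the largest ratio. Writing the two integrals as $\sum_j \alpha_j\, g(S^{(j)})$ and $\sum_j \alpha_j\, 2\abs{S^{(j)}}$ with weights $\alpha_j>0$, and discarding any thresholds for which $S_t = \varnothing$ (these contribute $0$ to both sums), each remaining $S^{(j)}$ is nonempty, so $2\abs{S^{(j)}} > 0$ and
$$
F(\phi) = \frac{\sum_j \alpha_j\, g(S^{(j)})}{\sum_j \alpha_j\, 2\abs{S^{(j)}}}\;\leq\; \max_j \frac{g(S^{(j)})}{2\abs{S^{(j)}}}\;\leq\; \max_{\varnothing\neq S\subseteq P}\frac{g(S)}{2\abs{S}}\;=\;\max_{\varnothing\neq S\subseteq P} F(\one_S),
$$
where $\one_S\in\{0,1\}^P$ is the indicator vector of $S$ and the last equality is immediate from the definition of $F$. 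Hence $\sup_{\phi\geq 0,\,\phi\neq 0}F(\phi)$ equals the right-hand side; being a maximum over the finite set $\{0,1\}^P\setminus\{0\}$, it is attained, and taking $\phi^{**} = \one_{S^*}$ for an optimizing $S^*$ gives the claimed $\{0,1\}$-valued maximizer.

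The only thing that needs care is the bookkeeping around empty superlevel sets, so that the ``average $\le$ max'' step never divides by zero, together with the observation that finiteness of $P$ turns the layer-cake integrals into honest finite sums. Once those are in place the argument is purely the elementary inequality ``weighted average of ratios $\le$ largest ratio,'' so I do not expect any real obstacle. (I would also remark in passing that $g(S)/(2\abs{S}) = \opt_1(S)$ by the closed form~\eqref{eq:opt-1-closed-form}, which is exactly how Lemma~\ref{lem:5OPT} uses this statement, but that identity is not needed for the proof of Lemma~\ref{lem:5OPT:integral} itself.)
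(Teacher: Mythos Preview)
Your argument is correct and takes a genuinely different route from the paper. The paper normalizes to the simplex $\{\sum_z \phi_z = 1\}$ using scale invariance, partitions the nonnegative orthant into the $m!$ order cones $O_\pi = \{0 \le \phi_{\pi(1)} \le \cdots \le \phi_{\pi(m)}\}$, observes that on each cone the $\min$'s in the numerator are resolved so $F$ becomes linear, and concludes that the maximum on the polytope $O_\pi \cap V$ is attained at a vertex, which is a scaled indicator. You instead use the layer-cake identity $\min\{\phi_x,\phi_y\} = \int_0^\infty \one[\phi_x\ge t]\one[\phi_y\ge t]\,dt$ to write $F(\phi)$ directly as a weighted mediant $\bigl(\sum_j \alpha_j\, g(S^{(j)})\bigr)\big/\bigl(\sum_j \alpha_j\, 2|S^{(j)}|\bigr)$ over the superlevel sets of $\phi$, and finish with the elementary inequality that such a mediant is at most the largest of the individual ratios $g(S^{(j)})/(2|S^{(j)}|) = F(\one_{S^{(j)}})$. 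Your approach is shorter and avoids both the $m!$-fold case split and the vertex enumeration; it also makes explicit that every $\phi$ is already a ``positive combination'' of indicators of its own superlevel sets, which is precisely the structure the paper's extreme-point argument is detecting. The only advantage of the paper's route is that it would adapt with no change to any numerator that is piecewise linear on the order cones, whereas your argument uses the specific layer-cake representation of $\min$.
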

\begin{proof}
Let $m=|P|$ be the size of the cluster $P$ and
$\Pi$ be the set of all bisections or permutations $\pi: \{1,\dots, m\}\to P$. Partition the set $\{\phi \geq 0\}$ into $m!$ regions (``cones over order polytopes''):
$$\{\phi: \phi \geq 0\}=\cup_{\pi \in \Pi} O_{\pi},$$
where $O_{\pi} = \{\phi : 0 \leq \phi_{\pi(1)} \leq \phi_{\pi(2)}\leq \cdots \leq \phi_{\pi(m)}\}$.
We show that for every $\pi \in \Pi$, there exists a maximizer $\phi^{**}$ of $F(\phi)$ in the
region $O_{\pi}$, such that $\phi^{**}\in \{0,1\}^P$. Therefore, there exists a global
maximizer $\phi^{**}$ that belongs $\{0,1\}^P$

Fix a $\pi\in \Pi$. Denote by $V$ the hyperplane $\{\phi: \sum_{x\in P} \phi_x = 1\}$. Observe that
$F$ is a scale invariant function i.e., $F(\phi)= F(\lambda \phi)$ for every $\lambda > 0$. Thus,
for every $\phi \in O_{\pi}$, there exists
a $\phi' \in O_\pi \cap V$
(namely,  $\phi' = \phi / (\sum_{x\in P} \phi_x)$) such that $F(\phi') = F(\phi)$.
Hence, $\max\{F(\phi): \phi \in O_{\pi}\} = \max\{F(\phi): \phi \in O_{\pi}\cap V\}$.
Note that for $\phi\in V$, the denominator of~(\ref{eq:def:F}) equals 2, and
for $\phi\in O_{\pi}$, the numerator of~(\ref{eq:def:F}) is a linear function of $\phi$.
Therefore, $F(\phi)$ is a linear function in the convex set $O_{\pi}\cap V$. Consequently,
one of the maximizers of $F$ must be an extreme point of $O_{\pi}\cap V$.

The polytope $O_{\pi}\cap V$ is defined by $m$ inequalities and one equality. Thus, for every
extreme point $\phi$ of this polytope, all inequalities $\phi_{\pi(i)}\leq \phi_{\pi(i+1)}$
but one must be tight. In other words, for some $j< m$, we have
\begin{equation}\label{eq:all-but-one-equal}
0 = \phi_{\pi(1)} =  \cdots = \phi_{\pi(j)} < \phi_{\pi(j+1)} = \cdots = \phi_{\pi(m)}. 
\end{equation}
Therefore, there exists a maximizer $\phi$ of $F(\phi)$ in $O_{\pi} \cap V$ satisfying
(\ref{eq:all-but-one-equal}) for some
$j$. After rescaling $\phi$ -- multiplying
all coordinates of $\phi$ by $(m-j)$ -- we obtain a vector $\phi^{**}$ whose first $j$ coordinates
$\phi^{**}_{\pi(1)}, \dots, \phi^{**}_{\pi(j)}$ are zeroes and the last $m-j$ coordinates
$\phi^{**}_{\pi(j+1)}, \dots, \phi^{**}_{\pi(m)}$ are ones. Thus,
$\phi^{**}\in\{0,1\}^P$. Since $F$ is rescaling invariant,
$F(\phi^{**}) = F(\phi)$. This concludes the proof.
\end{proof}

Replacing the bound in Lemma 3.2 from the analysis of \citet{arthur2007k} by our bound from Lemma~\ref{lem:5OPT} gives the following result (see also Lemma~\ref{thm:log-bound}).
\begin{theorem}
The approximation factor of $k$-means++ is at most $5(\ln k + 2)$.
\end{theorem}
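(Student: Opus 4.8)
\medskip\noindent\textbf{Proof proposal.} The plan is to reuse the analysis of \citet{arthur2007k} (their Lemmas~3.1--3.3 and Theorem~3.1) essentially verbatim, but to substitute our Lemma~\ref{lem:5OPT} for their Lemma~3.2 wherever it is invoked; this replaces the constant $8$ by $5$ everywhere. The one other input needed is the elementary fact (Lemma~3.1 of \citet{arthur2007k}, which also follows from~(\ref{eq:opt-1-closed-form})) that if the first center $c_1$ is drawn uniformly from an optimal cluster $P_i$, then $\E[\cost(P_i,\{c_1\})]=2\,\opt_1(P_i)\le 5\,\opt_1(P_i)$.

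The core is the inductive claim, which is exactly Lemma~3.3 of \citet{arthur2007k} with $8$ replaced by $5$. Fix a state $C$ of $k$-means++ whose covered clusters are $\mathcal{H}$ and uncovered clusters are $\mathcal{U}$; suppose $u:=|\mathcal{U}|\ge 1$ and that the algorithm will add $t\le u$ more centers, producing a final set $C'$. I claim
\[
\E\bigl[\cost(\pointset,C')\mid C\bigr]\;\le\;\Bigl(H(\pointset,C)+5\sum_{P_i\in\mathcal{U}}\opt_1(P_i)\Bigr)\Bigl(1+\sum_{j=1}^{t}\tfrac1j\Bigr)+\frac{u-t}{u}\,U(\pointset,C).
\]
The proof is by induction on $t$, copying \citet{arthur2007k}: the base case $t=0$ is trivial, and in the inductive step one conditions on which optimal cluster the next sampled center hits. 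With probability $H(\pointset,C)/\cost(\pointset,C)$ it hits an already covered cluster, the covered cost only decreases, and one applies the hypothesis with parameters $(u,t-1)$; with the complementary probability $U(\pointset,C)/\cost(\pointset,C)$ it hits a currently uncovered cluster $P_j$ (chosen with conditional probability $\cost(P_j,C)/U(\pointset,C)$), in which case the conditional law of the new center is exactly that of Lemma~\ref{lem:5OPT} with $P=P_j$, so $\E[\cost(P_j,C\cup\{c\})\mid c\in P_j]\le 5\,\opt_1(P_j)$, $P_j$ joins $\mathcal{H}$, and one applies the hypothesis with $(u-1,t-1)$. Collapsing the weighted average over the $P_j$'s with the power-mean inequality $\sum_j a_j^2\ge(\sum_j a_j)^2/u$ and the harmonic recursion, exactly as in \citet{arthur2007k}, yields the claimed bound.

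To finish, instantiate the claim immediately after $c_1$ is chosen. Let $P_{i_0}$ be the optimal cluster containing $c_1$; then exactly $u=k-1$ clusters are uncovered, the algorithm adds precisely $t=k-1$ more centers, the $\tfrac{u-t}{u}$ term vanishes, and $H(\pointset,C_1)=\cost(P_{i_0},C_1)$. Taking the expectation over $c_1$ and using $\E[\cost(P_{i_0},C_1)\mid c_1\text{ uniform in }P_{i_0}]=2\,\opt_1(P_{i_0})$ together with the pointwise bound $2\,\opt_1(P_{i_0})+5\bigl(\opt_k(\pointset)-\opt_1(P_{i_0})\bigr)=5\,\opt_k(\pointset)-3\,\opt_1(P_{i_0})\le 5\,\opt_k(\pointset)$ gives
\[
\E\bigl[\cost(\pointset,C_k)\bigr]\;\le\;5\,\opt_k(\pointset)\Bigl(1+\sum_{j=1}^{k-1}\tfrac1j\Bigr)\;\le\;5(\ln k+2)\,\opt_k(\pointset),
\]
using $1+\sum_{j=1}^{k-1}\tfrac1j\le 2+\ln k$. (Alternatively, the same bound is the $\Delta=0$ case of Lemma~\ref{thm:log-bound}.) I expect the only real obstacle to be a careful, faithful reproduction of the inductive step of Lemma~3.3 of \citet{arthur2007k} — the per-cluster weighting together with the power-mean and harmonic bookkeeping — since everything else is an unchanged copy of their argument with $8$ replaced by $5$, a replacement that is licensed precisely by Lemma~\ref{lem:5OPT}.
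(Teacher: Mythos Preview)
Your proposal is correct and is precisely the paper's approach: the paper's entire proof of this theorem is the single sentence ``Replacing the bound in Lemma~3.2 from the analysis of \citet{arthur2007k} by our bound from Lemma~\ref{lem:5OPT} gives the following result (see also Lemma~\ref{thm:log-bound}),'' and you have faithfully spelled out exactly that replacement. Your remark that the $\Delta=0$ case of Lemma~\ref{thm:log-bound} gives the same bound is also noted in the paper.
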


We now state an important corollary of
Lemma~\ref{lem:5OPT}.
\begin{corollary}~\label{cor:5opt-martingale}
For every $P\in\calP$, the process $\HH_{t}(P)$ for $k$-means++ is a supermartingale i.e.,
$$\expect{\HH_{t+1}(\pointset) \mid C_t} \leq \HH_{t}(\pointset).$$
\end{corollary}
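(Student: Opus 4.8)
The plan is to prove the inequality for each optimal cluster $P\in\calP$ separately, i.e., to show $\expect{\HH_{t+1}(P)\mid C_t}\le \HH_t(P)$, and then sum over $P_1,\dots,P_k$ using linearity of expectation together with $\HH_t(\pointset)=\sum_{i=1}^k\HH_t(P_i)$. So I would fix $P\in\calP$, condition on the current state $C_t$, and split into two cases according to whether $P$ is already covered by $C_t$.

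If $P$ is covered by $C_t$, then by definition $\HH_t(P)=H_{t_P}(P)$, where $t_P\le t$ is the first step at which $P$ gets covered. Since $C_{t+1}\supseteq C_t$, the cluster $P$ is also covered by $C_{t+1}$ and its first covering time is still $t_P$, so $\HH_{t+1}(P)=H_{t_P}(P)=\HH_t(P)$ holds deterministically and the desired inequality is an equality. If $P$ is not covered by $C_t$, then $\HH_t(P)=5\opt_1(P)$, and I would look at the single new center $c$ added at step $t$, which is drawn from $\pointset$ with probability proportional to $\cost(x,C_t)$. On the event $\{c\notin P\}$ the cluster $P$ stays uncovered, so $\HH_{t+1}(P)=5\opt_1(P)$; on the event $\{c\in P\}$ the cluster becomes covered for the first time at step $t+1$, so $\HH_{t+1}(P)=H_{t+1}(P)=\cost(P,C_{t+1})$. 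The key point is that the conditional distribution of $c$ given $\{c\in P\}$ puts mass $\cost(x,C_t)/\cost(P,C_t)$ on each $x\in P$, which is exactly the sampling distribution in Lemma~\ref{lem:5OPT} with $C=C_t$; hence that lemma yields $\expect{\cost(P,C_{t+1})\mid C_t,\ c\in P}\le 5\opt_1(P)$. Combining the two events,
$$\expect{\HH_{t+1}(P)\mid C_t}\le \prob{c\notin P\mid C_t}\cdot 5\opt_1(P)+\prob{c\in P\mid C_t}\cdot 5\opt_1(P)=5\opt_1(P)=\HH_t(P).$$

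Summing over the $k$ clusters then gives $\expect{\HH_{t+1}(\pointset)\mid C_t}\le\HH_t(\pointset)$, and iterating this one-step bound along the Markov chain $C_t$ (the tower property) extends it to $\expect{\HH_{t'}(P)\mid C_t}\le\HH_t(P)$ for all $t\le t'$ if needed. The argument is a short case analysis rather than a hard computation; the only subtle point — and the crux — is recognizing that restricting the $D^2$-sampling step of $k$-means++ to the event that the new center falls inside $P$ produces precisely the distribution required by Lemma~\ref{lem:5OPT}, which is exactly what licenses the $5\opt_1(P)$ bound on the newly covered cost and makes the ``uncovered value'' $5\opt_1(P)$ the right choice for the proxy $\HH_t$.
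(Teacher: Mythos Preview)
Your proposal is correct and follows essentially the same approach as the paper: both arguments observe that $\HH_t(P)$ can change only when a previously uncovered cluster $P$ becomes covered, and both invoke Lemma~\ref{lem:5OPT} (via the observation that conditioning the $D^2$-sample on $\{c\in P\}$ reproduces exactly the sampling distribution of that lemma) to conclude that the expected new value is at most $5\opt_1(P)$. Your write-up simply makes the case split and the conditioning step explicit, whereas the paper compresses this into two sentences.
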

\begin{proof}
The value of $\HH_{t}(\pointset)$ changes only
if  at step $t$, we cover a yet uncovered cluster $P$. In this case, the value of $\HH_{t+1}(P)$ changes by the new cost of $P$ minus $5\opt(P)$. By Lemma~\ref{lem:5OPT} this quantity is non-positive in expectation. %To apply Lemma~\ref{lem:5OPT} to $k$-means\kpp, we use the approach from Section~\ref{sec:one-at-time}.
\end{proof}

Since the process $\HH_{t}(P)$ is a supermartingale, we have $\E[\HH_{t}(P)] \leq \HH_{0}(P) = 5\opt_1(P)$.
Hence, $\E[H_{t}(P)] \leq \E[\HH_{t}(P)] = 5\opt_1(P)$. Thus,
$\E[H_{t}(X)] \leq 5\opt_k(\pointset)$.
Since $\cost_t(\pointset) = H_t(\pointset) +
U_t(\pointset)$ and we have a bound on the expectation of the covered cost, $H_t(\pointset)$, in the remaining sections, we shall only analyze the uncovered cost $U_t(\pointset)$.

\section{\texorpdfstring{Bi-criteria approximation of $k$-means++}{Bi-criteria approximation of  k-means++}}\label{sec:po-bi-criteria}

In this section, we give a bi-criteria approximation guarantee for $k$-means++.

\begin{theorem}\label{thm:kmeanspp-main}
Let $\cost_{k+\extracenters}\rbr{\total}$ be the cost of the clustering with $k+\extracenters$ centers sampled by the $k$-means++ algorithm. Then,
for $\extracenters \geq 1$, the expected cost
$\expect{\cost_{k+\extracenters}(\pointset)}$ is upper bounded by (below $(a)^+$ denotes $\max(a,0)$).
$$\min\Big\{
2 + \frac{1}{2e} + \Big(\ln{\frac{2k}{\extracenters}}\Big)^+,
1+ \frac{k}{e\rbr{\extracenters-1}}
\Big\}\, 5 \opt_k(\pointset).$$
\end{theorem}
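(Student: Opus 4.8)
The plan is to split the clustering cost into its covered and uncovered parts, dispose of the covered part using the supermartingale from Section~\ref{sec:5OPT}, and then bound the expected uncovered cost by an inductive argument that extends the one of \citet{arthur2007k} to the oversampled regime.

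\textbf{Reduction to the uncovered cost.} Write $\cost_{k+\extracenters}(\pointset)=H_{k+\extracenters}(\pointset)+U_{k+\extracenters}(\pointset)$. By Corollary~\ref{cor:5opt-martingale} the process $\HH_t(\pointset)$ is a supermartingale, so
$$\E\!\left[H_{k+\extracenters}(\pointset)\right]\le \E\!\left[\HH_{k+\extracenters}(\pointset)\right]\le \HH_0(\pointset)=5\opt_k(\pointset).$$
Hence it suffices to prove the two bounds
$$\E\!\left[U_{k+\extracenters}(\pointset)\right]\le \Big(1+\tfrac1{2e}+\big(\ln\tfrac{2k}{\extracenters}\big)^{+}\Big)5\opt_k(\pointset)\qquad\text{and}\qquad \E\!\left[U_{k+\extracenters}(\pointset)\right]\le \tfrac{k}{e(\extracenters-1)}\,5\opt_k(\pointset),$$
add $5\opt_k(\pointset)$ to each, and take the minimum.

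\textbf{The recursion for the uncovered cost.} For a state $C$ in which exactly $j\ge1$ optimal clusters are still uncovered (call their union $A$), I would upper bound the expected uncovered cost after $m$ further $k$-means++ steps, by induction on $m$. Condition the next step on which cluster its center falls into: this is cluster $P_i$ with probability $\cost(P_i,C)/\cost(\pointset,C)$. If $P_i$ is already covered (``a miss''), the uncovered cost does not increase and we recurse with the same $j$ and with $m-1$ steps. If $P_i$ is uncovered, it becomes covered: its cost $\cost(P_i,C)$ leaves the uncovered total and is thenceforth carried by the covered cost $H$, which the reduction has already bounded — this is precisely where the improved factor $5$ of Lemma~\ref{lem:5OPT} enters — so nothing is added back, and we recurse with $j-1$ and $m-1$. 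Using $\sum_{i\text{ uncov}}\cost(P_i,C)^2\ge \cost(A,C)^2/j$ to lower bound the expected decrease of $U$ per step, the recursion closes. The point of departure from \citet{arthur2007k} and from Lemma~\ref{thm:log-bound} is that here $m$ may exceed $j$ — we sample $k+\extracenters$ rather than $k$ centers — so the argument must carry a ``surplus'' of $\extracenters$ steps through the induction. This surplus acts as a head start: where the $\extracenters=0$ analysis charges the $j$-th cluster roughly $1/j$ to a harmonic-type sum, the oversampled analysis charges it roughly $1/(j+\extracenters)$. Applying the lemma at the state right after the first (uniform) center, with $j=k-1$ uncovered clusters and $m=k+\extracenters-1$ remaining steps, and then taking expectations, yields $\E[U_{k+\extracenters}(\pointset)]\le 5\opt_k(\pointset)\cdot Q(k,\extracenters)$ for an explicit quantity $Q(k,\extracenters)$, essentially $\sum_{j}\tfrac1{j+\extracenters}$ up to lower-order boundary corrections.

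\textbf{Extracting the two closed forms and the main obstacle.} It then remains to estimate $Q(k,\extracenters)$ two ways. Comparing the sum to $\int\tfrac{\de x}{x+\extracenters}$ produces a term of the shape $\ln\tfrac{k+\extracenters}{\extracenters}$; since $\ln\tfrac{k+\extracenters}{\extracenters}\le\big(\ln\tfrac{2k}{\extracenters}\big)^{+}$ for $1\le\extracenters\le2k$, and folding the boundary corrections into the additive constant $1+\tfrac1{2e}$, one gets bound (A). For bound (B) one instead uses a cruder/direct estimate: the worst case concentrates the uncovered mass on few clusters, so one optimizes a per-cluster contribution of the form $c\mapsto c\,e^{-(\extracenters-1)c/(5\opt_k(\pointset))}$ (maximum $\tfrac{5\opt_k(\pointset)}{e(\extracenters-1)}$) over the at most $k$ clusters. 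Taking the minimum finishes the proof. The genuinely delicate step is the induction in the previous paragraph: the potential argument of \citet{arthur2007k} is tailored to the case of at least as many uncovered clusters as remaining centers, and it has to be reworked so that the extra $\extracenters$ centers really drive the bound down rather than being wasted — in a way that interpolates smoothly between the $\extracenters=0$ bound of $5(\ln k+2)\opt_k(\pointset)$ and a true constant factor when $\extracenters=\Omega(k)$. A secondary, purely computational difficulty is pinning down the exact constants ($\tfrac1{2e}$, the factor $2$ inside the logarithm, and the $\extracenters-1$ rather than $\extracenters$ in bound (B)), which come from handling the first center and the boundary terms of the harmonic sum carefully.
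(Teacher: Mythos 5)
Your first step — peeling off the covered cost via the $\HH_t$ supermartingale so that only $\E[U_{k+\Delta}(\pointset)]$ remains to be bounded — is exactly the paper's reduction, and it is correct. After that, however, your proposal diverges substantially from the paper's argument and, more importantly, the hard steps are sketched rather than carried out; I do not think the sketch closes as written.

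For bound (B), the paper does not argue ``per cluster, optimize $c\mapsto c\,e^{-(\Delta-1)c/(5\opt_k)}$''. It fixes one optimal cluster $P$, builds two auxiliary Markov chains $X_t$ and $Y_t$ over the states of $k$-means++ (stratified by the miss count $M(C)$ and by whether $P$ is hit), and proves a coupling inequality (Lemma~\ref{lem:coupling}) that compares the probability of reaching a given frontier state under $X$ versus under $Y$ by a factor $\bigl(\HH/(\HH+U(P))\bigr)^{\Delta'}$. Combined with the fact that $\HH(\pointset,Y_t)$ is a supermartingale (Lemma~\ref{lem:supermartingale}) and the optional stopping theorem, this produces $\E[U_{k+\Delta}(P)\mid C_t]\le \HH_t(\pointset)/\bigl(e(\Delta-M(C_t)-1)\bigr)$ via the elementary maximization of $f(x)=x(1+x)^{-\Delta'}$. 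Your formula $c\,e^{-(\Delta-1)c/(5\opt_k)}$ is a heuristic look-alike of this $f$, but it replaces the actual (random) covered cost $\HH(\pointset,C)$ by its unconditional expectation $5\opt_k$ and replaces $(1+x)^{-\Delta}$ by $e^{-(\Delta-1)x}$; neither substitution is justified, and in particular the random denominator is exactly what the supermartingale-plus-optional-stopping machinery is there to handle. Without it you cannot legitimately pull $5\opt_k$ inside the per-cluster optimization.

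For bound (A), the paper does not run a single AV-style induction ``carrying a surplus of $\Delta$ steps''. It uses Dasgupta's potential $\Psi_t=M(C_t)U_t(\pointset)/K(C_t)$, but only up to a stopping time $\tau$, defined as the first time $K(C_\tau)\le\kappa$ with $\kappa=\lfloor(\Delta-1)/2\rfloor$ (Lemma~\ref{lem:many-misses}), obtaining the harmonic bound $5(1+\ln\tfrac{k}{\kappa+1})\opt_k$. It then decomposes $\E[U_{k'}(\pointset)]$ into the $\tfrac{M(C_\tau)}{K(C_\tau)}$ piece (controlled by $\Psi_\tau$) and the $\tfrac{K(C_\tau)-M(C_\tau)}{K(C_\tau)}$ piece, and bounds the latter by \emph{invoking bound (B)} at time $\tau$ for the remaining $\Delta-M(C_\tau)$ surplus rounds. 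So the two bounds are not proved in parallel; the logarithmic bound genuinely depends on the exponential one. Your sketch instead posits a one-shot induction that would charge cluster $j$ roughly $1/(j+\Delta)$, but you do not give a potential function or an induction hypothesis for which this charging scheme is valid, and you yourself flag this as ``the genuinely delicate step''. The reason it is delicate is concrete: for $t>k$, the Dasgupta potential has $M(C_t)>K(C_t)$ and is no longer bounded by $U_t$, so the $\Delta=0$ argument does not extend by merely ``adding $\Delta$ to the denominator''. In short, the outer decomposition matches the paper, but the two core lemmas that drive the theorem (the Markov-chain coupling for bound~(B), and the stopping-time/two-phase combination for bound~(A)) are missing from your proposal, and the substitutes you propose are not shown to work.
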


Note that the above approximation guarantee is  the minimum of two bounds: (1)
$2 + \frac{1}{2e} + \ln{\frac{2k}{\extracenters}}$
for $1\leq \Delta \leq 2k$; and (2) $1+ \frac{k}{e\rbr{\extracenters-1}}$
for $\Delta \geq 1$. The second bound is stronger than the first bound when $\extracenters/k \gtrapprox 0.085$.

\subsection{Proof overview of Theorem~\ref{thm:kmeanspp-main}}
We now present a high level overview of the proof and then give a formal proof. Our proof consists of three steps.

First, we prove bound (2) on the expected cost of the clustering returned by $k$-means++ after $k+\Delta$ rounds. We argue that the expected cost of the covered clusters is bounded by $5\opt_k(\pointset)$ (see Section~\ref{sec:framework}) and thus it is sufficient to bound the expected cost of uncovered clusters. Consider an optimal cluster $P\in \calP$. We need to estimate the probability that it is not covered after $k + \Delta$ rounds. We upper bound this probability by the probability that the algorithm does not cover $P$ before it makes $\Delta$ misses (note: after $k+\Delta$ rounds $k$-means++ must make at least  $\Delta$ misses).

In this overview, we make the following simplifying assumptions (which turn out to be satisfied in the worst case for bi-criteria $k$-means++): Suppose that the uncovered cost of cluster $P$ does not decrease before it is covered and equals $U(P)$ and, moreover, the total cost of all covered clusters almost does not change and equals $H(\pointset)$ (this may be the case if one large cluster contributes most of the covered cost, and that cluster is covered at the first step of $k$-means++). Under these assumptions, the probability that $k$-means++ chooses $\Delta$ centers in the already covered clusters and does not choose a single center in $P$ equals $(H(\pointset)/(U(P)+H(\pointset)))^\Delta$. If $k$-means++ does not choose a center in $P$, the \emph{uncovered} cost of cluster $P$ is $U(P)$; otherwise, the \emph{uncovered} cost of cluster $P$ is $0$. Thus, the expected~\emph{uncovered cost} of $P$ is $(H(\pointset)/(U(P)+H(\pointset)))^\Delta U(P)$. It is  easy to show that $(H(\pointset)/(U(P)+H(\pointset)))^\Delta U(P) \leq H(\pointset)/(e (\Delta - 1))$. Thus, the expected \emph{uncovered cost} of all clusters is at most
$$\frac{k}{(e (\Delta - 1))} \E[H(\pointset)]\leq \frac{k}{(e (\Delta - 1))} 5\opt_k(\pointset).$$

Then,  we use ideas from \citet*{arthur2007k},~\citet{dasgupta-notes} to prove the following statement: Let us count the cost of uncovered clusters only when the number of misses after $k$ rounds of $k$-means++ is greater than $\Delta/2$. Then the expected cost of uncovered clusters is at most $O(\log (k/\Delta))\cdot \opt_k(\pointset)$. That is, $\E[H(U_k(\pointset)\cdot \mathbf{1}\{M(C_k)\geq \Delta/2\}]\leq O(\log (k/\Delta))\cdot \opt_k(\pointset)$.

Finally, we combine the previous two steps to get bound (1). We argue that if the number of misses after $k$ rounds of $k$-means++ is less than $\Delta/2$, then
almost all clusters are covered. Hence, we can apply bound (2) to $k'\leq \Delta/2$ uncovered clusters and $\Delta$ remaining rounds of $k$-means++ and get a $5(1+1/(2e))$ approximation. If the number of misses is greater than $\Delta/2$, then the result from the previous step yields an $O(\log (k/\Delta))$ approximation.

\subsection{\texorpdfstring{Analysis of $k$-means++}{Analysis of k-means++}}\label{sec:apx-bi-criteria-kmeans}

In this section, we analyze the bi-criteria $k$-means++ algorithm and prove Theorem~\ref{thm:kmeanspp-main}. To this end, we establish the first and second bounds from Theorem~\ref{thm:kmeanspp-main} on the expected cost of the clustering after $k+\Delta$ rounds of $k$-means. We will start with the second bound.

\subsubsection{\texorpdfstring{Bi-criteria bound for large $\Delta$}{Bi-criteria bound for large Delta}}

\begin{lemma}\label{lem:e_bound}
The following bi-criteria bound holds
$$
    \expect{\cost_{k+\extracenters}\rbr{\total}} \leq 5\rbr{1+ \frac{k}{e\rbr{\extracenters-1}}} \opt_k(\pointset).
$$
\end{lemma}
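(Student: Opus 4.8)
The plan is to decompose the cost of the $(k+\extracenters)$-point clustering into covered and uncovered parts, $\cost_{k+\extracenters}(\pointset)=H_{k+\extracenters}(\pointset)+U_{k+\extracenters}(\pointset)$, and bound each separately. The covered part is handled by Corollary~\ref{cor:5opt-martingale}: since $H_t\le\HH_t$ and $\HH_t$ is a supermartingale, $\expect{H_{k+\extracenters}(\pointset)}\le\expect{\HH_{k+\extracenters}(\pointset)}\le\HH_0(\pointset)=5\opt_k(\pointset)$. So everything reduces to showing $\expect{U_{k+\extracenters}(\pointset)}\le\tfrac{5k}{e(\extracenters-1)}\opt_k(\pointset)$ for $\extracenters\ge2$ (for $\extracenters=1$ the claim is vacuous), and by linearity of expectation it suffices to prove, for each fixed optimal cluster $P\in\calP$, that $\expect{U_{k+\extracenters}(P)}\le\tfrac{5\opt_k(\pointset)}{e(\extracenters-1)}$.

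Fix $P$. The first ingredient is a combinatorial fact: each round of $k$-means++ either covers a new cluster or is a miss, and at most $k-1$ of the first $k+\extracenters$ rounds can cover clusters other than $P$; hence, on the event that $P$ is still uncovered after $k+\extracenters$ rounds, at least $\extracenters+1$ rounds are misses, and in particular the first $\extracenters$ rounds that are \emph{$P$-critical} (i.e.\ are a miss or cover $P$) are all misses. The second ingredient is that, conditioned on the state $C_{s-1}$ right before a $P$-critical round $s$ with $P$ still uncovered, the chosen point lands in an already-covered cluster with probability proportional to $H_{s-1}(\pointset)$ and in $P$ with probability proportional to $U_{s-1}(P)$; so the conditional probability that round $s$ is a miss rather than covering $P$ equals $H_{s-1}(\pointset)/(H_{s-1}(\pointset)+U_{s-1}(P))$, which is at most $\HH_{s-1}(\pointset)/(\HH_{s-1}(\pointset)+U_{s-1}(P))$.

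I would then convert this into a bound via a potential/supermartingale argument, exploiting that $U_t(P)$ is non-increasing and drops to $0$ the moment $P$ is covered. Tracking a quantity of the shape
$$
Z_t\;=\;U_t(P)\cdot\prod_{i\,:\,m_i\le t}\frac{U_{m_i-1}(P)+\HH_{m_i-1}(\pointset)}{\HH_{m_i-1}(\pointset)},
$$
with $m_1<m_2<\cdots$ the miss times, the $i$-th factor is (an upper bound on) the reciprocal of the probability that the $i$-th $P$-critical round was a miss, and one checks that $Z_t$ has non-positive drift, using the supermartingale property of $\HH_t$ (Corollary~\ref{cor:5opt-martingale}) and the fact that covering $P$ sends $U_t(P)$ — and hence $Z_t$ — to $0$. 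On the event that $P$ survives $k+\extracenters$ rounds, the product contains $\ge\extracenters$ factors each of the form $1+U_{m_i-1}(P)/\HH_{m_i-1}(\pointset)$ with $U_{m_i-1}(P)\ge U_{k+\extracenters}(P)$, so combining with the elementary one-variable inequality $\max_{u\ge0}u\bigl(\tfrac{h}{u+h}\bigr)^{\extracenters}=\tfrac{h}{\extracenters-1}\bigl(1-\tfrac1\extracenters\bigr)^{\extracenters}\le\tfrac{h}{e(\extracenters-1)}$ and then with $\expect{\HH_t(\pointset)}\le5\opt_k(\pointset)$ should give the per-cluster bound $\expect{U_{k+\extracenters}(P)}\le\tfrac{5\opt_k(\pointset)}{e(\extracenters-1)}$. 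Summing over the $k$ clusters and adding back the covered cost then yields $\expect{\cost_{k+\extracenters}(\pointset)}\le 5\bigl(1+\tfrac{k}{e(\extracenters-1)}\bigr)\opt_k(\pointset)$.

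The main obstacle is making the third step precise. The overview's heuristic pretends $U_t(P)$ and the covered cost stay constant until $P$ is covered, but $H_t(\pointset)$ is not monotone — it jumps up each time a new cluster gets covered — and $U_t(P)$ starts at scale $\opt_1(\pointset)$, which is unrelated to $\opt_k(\pointset)$. A naive telescoping of such a supermartingale from round $1$ therefore only produces a bound in terms of the (huge) initial cost, so getting a bound in terms of $\opt_k(\pointset)$ needs care: one has to either design the potential so that its initial expectation is already $O(\opt_k(\pointset))$, or apply optional stopping to $\HH_t$ at a suitable miss-related stopping time, and in either case verify carefully that neither covering $P$ nor covering some other cluster raises the potential in expectation. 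Once the potential is pinned down, the combinatorial observation and the scalar optimization are routine.
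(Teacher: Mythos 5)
Your decomposition into $H+U$, the $5\opt_k(\pointset)$ bound on the covered part via the supermartingale $\HH_t$, the combinatorial observation that $P$ surviving $k+\Delta$ rounds forces at least $\Delta$ misses, and the scalar optimization $\max_{u\ge 0}u\bigl(h/(u+h)\bigr)^\Delta\le h/(e(\Delta-1))$ are all exactly the ingredients the paper uses. But your proof of the per-cluster uncovered bound has a genuine gap, which you yourself flag at the end and then do not close. Your potential $Z_t$ is plausibly a supermartingale, but a supermartingale bound only gives $\E[Z_{k+\Delta}]\le \E[Z_0]=\E[U_0(P)]$, which is of order $\opt_1(\pointset)$, not $\opt_k(\pointset)$; and the factors $\HH_{m_i-1}(\pointset)$ inside your product vary from miss to miss, so the single-variable optimization with one common $h$ cannot be applied to it. Neither of your two suggested repairs is a small fix: "design the potential so its initial expectation is $O(\opt_k)$" is precisely the hard part, and optional stopping of $\HH_t$ on the original process at a miss-related time does not by itself produce the $(\,\cdot\,)^\Delta$ damping factor you need.

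What the paper actually does is a change-of-measure argument that your sketch does not contain. It introduces an auxiliary Markov chain $Y$ whose transitions are those of $k$-means++ except that the probability mass on covering $P$ is redirected so that $Y$ never covers $P$, and proves a coupling lemma: for any terminal state $C''\in\calM_\Delta$, $\Pr[X \text{ visits } C'']/\Pr[Y \text{ visits } C'']\le\bigl(\HH(\pointset,C'')/(\HH(\pointset,C'')+U(P,C''))\bigr)^{\Delta}$, where, crucially, both $\HH$ and $U$ are evaluated at the \emph{same terminal state} $C''$ (this uses that $U(P,\cdot)$ is non-increasing and $H_t(\pointset)\le\HH_{t'}(\pointset)$ for $t\le t'$, so each per-miss ratio along the path is dominated by the terminal ratio). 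The scalar optimization then applies with a single $h=\HH(\pointset,C'')$, and $\E[\HH(\pointset,Y_\tau)]\le\HH_0(\pointset)=5\opt_k(\pointset)$ follows by optional stopping of the $\HH$-supermartingale along $Y$. It is this collapse of all the per-miss factors onto the terminal state, via the auxiliary chain, that converts an $\opt_1$-scale bound into an $\opt_k$-scale bound, and it is the step your proposal is missing.
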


Consider the discrete time Markov chain $C_t$
associated with $k$-means++ algorithm
(see Section~\ref{sec:framework}). Let $\cluster\in \calP$ be an arbitrary cluster in the optimal solution. Partition all states of the Markov chain into
$k+\Delta$ disjoint groups $\calM_0,\calM_1,\cdots, \calM_{k+\Delta-1}$ and $\calH$. Each set $\calM_i$ contains all states $C$ with $i$
misses that do not cover $P$:
$\calM_i = \cbr{C: M(C) = i, \cluster\cap C = \varnothing}.$
The set $\calH$ contains all states $C$ that cover $P$:
$\calH = \cbr{C: \cluster \cap C \neq \varnothing}$.

We now define a new Markov chain $X_t$. To this end, we first expand the set of states $\{C\}$.
For every state $C$ of the process $C_t$, we create two additional ``virtual'' states $C^a$ and $C^b$. Then, we let $X_{2t} = C_t$ for every even step $2t$,  and
$$
X_{2t+1}=
\begin{cases}
C_t^a, & \mbox{if } C_{t+1}\in \calM_i \\
C_t^b, & \mbox{if } C_{t+1}\in \calM_{i+1} \cup \calH.
\end{cases} $$
for every odd step $2t+1$. We stop $X_t$ when $C_t$ stops or when $C_t$ hits the set $\calH$ (i.e., $C_t\in \calH$).
Loosely speaking,  $X_t$ follows Markov chain $C_t$ but makes additional intermediate stops. When $C_t$ moves from one state in $\calM_i$ to another state in $\calM_i$, $X_{2t+1}$ stops in $C_t^a$; and when  $C_t$ moves from a state in $\calM_i$ to a state in $\calM_{i+1}$ or $\calH$, $X_{2t+1}$ stops in $C_t^b$.

Write transition probabilities for $X_t$:
\begin{align*}
    \prob{X_{2t+1} = C^a \mid X_{2t} =  C} = \frac{U(\pointset,C)- U(\cluster,C)}{\cost(\pointset,C)}, \\
    \prob{X_{2t+1} = C^b \mid  X_{2t} =  C} = \frac{U(\cluster,C) + H(\pointset,C)}{\cost(\pointset,C)},
\end{align*}
and for all $C \in  \calM_{i}$ and $C'= C\cup\{x\}\in \calM_{i}$,
$$
\prob{X_{2t+2} = C' \mid X_{2t+1} = C^a} = \frac{\cost(x,C)}{U(\pointset,C)- U(\cluster,C)},
$$
for all $C \in \calM_i$ and $C'= C\cup\{x\} \in \calM_{i+1} \cup \calH$,
$$
\prob{X_{2t+2} = C' \mid  X_{2t+1} = C^b} = \frac{\cost(x,C)}{U(\cluster,C) + H(\pointset,C)}.
$$
Above, $U(\pointset,C)- U(\cluster,C)$ is the
cost of points in all uncovered clusters except for $P$. If we pick a center from these clusters, we will necessarily cover a new cluster, and therefore $X_{2t+2}$ will stay in $\calM_i$. Similarly,
$U(\cluster,C) + H(\pointset,C)$ is the cost of all covered clusters plus the cost of $P$. If we pick a center from these clusters, then $X_{2t+2}$ will move to $\calM_{i+1}$ or
$\calH$.

Define another Markov chain $\cbr{Y_t}$. The
transition probabilities of $\cbr{Y_t}$
are the same as the transition probabilities
of $X_t$ except $Y$ never visits states in
$\calH$ and therefore for $C\in \calM_i$ and
$C'=C\cup\{x\}\in\calM_{i+1}$, we have
$$
\prob{Y_{2t+2} = C' \mid Y_{2t+1} = C^b} = \frac{\cost(x,C)}{H(\pointset,C)}. 
$$

We now prove a lemma that relates probabilities of visiting states by $X_t$ and $Y_t$.
\begin{lemma}~\label{lem:coupling}
For every $t\leq k+\Delta$ and states $C' \in \calM_i$, $C''\in \calM_\extracenters$, we have
$$
\frac{\prob{C'' \in \cbr{X_j} \mid X_{2t} = C'}}{\prob{C'' \in \cbr{Y_j} \mid Y_{2t} = C'}} \leq \rbr{\frac{\HH(\pointset,C'')}{\HH(\pointset,C'')+U(\cluster,C'')}}^{\Delta - i}
$$
where $\{C'' \in \cbr{X_j}\}$ and $\{C'' \in \cbr{Y_j}\}$ denote the events $X$ visits
$C''$ and $Y$ visits $C''$, respectively.
\end{lemma}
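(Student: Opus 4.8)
The plan is to compare the two probabilities \emph{path by path}. Since $k$-means++ only ever \emph{adds} centers, every trajectory of $X_t$ (resp.\ $Y_t$) that starts at $C'$ and reaches $C''$ corresponds to a unique chain of states
\[
C' = D_0 \subsetneq D_1 \subsetneq \cdots \subsetneq D_m = C'', \qquad D_{l+1} = D_l\cup\{x_l\};
\]
call such a chain a \emph{path} $\gamma$, and let $p_X(\gamma)$, $p_Y(\gamma)$ be the probabilities that the underlying $C$-trajectories of $X$ and $Y$, started from $C'$, pass through $\gamma$. Then $\prob{C''\in\cbr{X_j}\mid X_{2t}=C'}=\sum_\gamma p_X(\gamma)$ and $\prob{C''\in\cbr{Y_j}\mid Y_{2t}=C'}=\sum_\gamma p_Y(\gamma)$, so it suffices to show $\prob{C''\in\cbr{X_j}\mid X_{2t}=C'}\le\big(\HH(\pointset,C'')/(\HH(\pointset,C'')+U(\cluster,C''))\big)^{\Delta-i}\prob{C''\in\cbr{Y_j}\mid Y_{2t}=C'}$, and for that it is enough to prove the per-path inequality $p_X(\gamma)\le\big(\HH(\pointset,C'')/(\HH(\pointset,C'')+U(\cluster,C''))\big)^{\Delta-i}p_Y(\gamma)$.

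Next I would classify the steps of a path. Since $D_l\subseteq C''$ and $C''$ does not cover $\cluster$, no $D_l$ covers $\cluster$, so the whole path stays inside $\bigcup_j\calM_j$. Each step $D_l\to D_{l+1}$ adds one center $x_l$, which lies in exactly one optimal cluster: if that cluster is already covered by $D_l$, the step is a \emph{miss} and the level goes up by one; otherwise it covers a fresh cluster $\neq \cluster$ and the level is unchanged. Hence every path from $\calM_i$ to $\calM_\Delta$ has exactly $\Delta-i$ miss steps. Reading off the transition rules, a non-miss step contributes the same factor $\cost(x_l,D_l)/\cost(\pointset,D_l)$ to both $p_X(\gamma)$ and $p_Y(\gamma)$, while a miss step contributes $\cost(x_l,D_l)/\cost(\pointset,D_l)$ to $p_X(\gamma)$ but $\frac{\cost(x_l,D_l)}{\cost(\pointset,D_l)}\cdot\frac{U(\cluster,D_l)+H(\pointset,D_l)}{H(\pointset,D_l)}$ to $p_Y(\gamma)$ — the extra factor is exactly the renormalization caused by $Y$ refusing to place mass on $\calH$. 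Therefore
\[
\frac{p_X(\gamma)}{p_Y(\gamma)} \;=\; \prod_{l\,:\,\text{miss step}} \frac{H(\pointset,D_l)}{U(\cluster,D_l)+H(\pointset,D_l)},
\]
a product of exactly $\Delta-i$ factors, each in $[0,1]$. (If $p_X(\gamma)=0$ the per-path inequality is trivial; if $p_X(\gamma)>0$ then every relevant $H(\pointset,D_l)>0$, so the ratio is well defined and $p_Y(\gamma)>0$.)

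The only substantive step is to replace $D_l$ by the endpoint $C''$ in each factor. Fix a miss step at $D_l$; since $D_l\subseteq C''$ and $\cluster$ is uncovered in both states, $U(\cluster,C'')=\cost(\cluster,C'')\le\cost(\cluster,D_l)=U(\cluster,D_l)$, and $H(\pointset,D_l)\le\HH(\pointset,C'')$ by the clusterwise monotonicity $H_s(\cluster_j)\le\HH_{s'}(\cluster_j)$ for $s\le s'$ noted in Section~\ref{sec:framework} (the contribution of $\cluster$ itself is $0$ to $H$ at $D_l$ and $5\opt_1(\cluster)\ge0$ to $\HH$ at $C''$). Since $a\mapsto a/(a+b)$ is nondecreasing and $b\mapsto a/(a+b)$ nonincreasing on $\R_{\ge 0}$,
\[
\frac{H(\pointset,D_l)}{U(\cluster,D_l)+H(\pointset,D_l)}\;\le\;\frac{\HH(\pointset,C'')}{U(\cluster,C'')+\HH(\pointset,C'')} .
\]
Multiplying the $\Delta-i$ such bounds gives the per-path inequality, and summing over all paths from $C'$ to $C''$ gives the lemma. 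The main obstacle is purely organizational: pinning down that each step changes the level by $0$ or $1$ (never landing in $\calH$ along such a path) so that the miss-step count is exactly $\Delta-i$, verifying that the per-step $X$-to-$Y$ ratio is precisely $H(\pointset,D_l)/(U(\cluster,D_l)+H(\pointset,D_l))$ and nothing more, and handling vanishing cluster costs (which make some paths have zero probability) without dividing by zero. The proxy $\HH$ — rather than $H$ — appears on the right-hand side precisely because the clean monotone bound on $H(\pointset,D_l)$ in terms of the endpoint is $H(\pointset,D_l)\le\HH(\pointset,C'')$.
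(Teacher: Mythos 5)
Your proof is correct and follows essentially the same route as the paper's. The only cosmetic difference is that you phrase it as a sum over set-chains $\gamma$, whereas the paper points out that because the states $C_t$ are ordered lists, the transition graph is a directed tree and there is a \emph{unique} path from $C'$ to $C''$, so the "sum" has a single term; the substantive steps — identifying the per-miss-step ratio $H(\pointset,D_l)/\bigl(U(\cluster,D_l)+H(\pointset,D_l)\bigr)$ (which in the paper is read off as the ratio of the $C^b\!\to\!C\cup\{x\}$ transition probabilities of $X$ and $Y$), counting that there are exactly $\Delta-i$ such steps, and bounding each factor via $U(\cluster,D_l)\ge U(\cluster,C'')$ and $H(\pointset,D_l)\le\HH(\pointset,C'')$ — are identical.
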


\begin{proof}
Consider the unique path $p$ from $C'$ to $C''$
in the state space of $X$ (note that the transition graphs for $X$ and $Y$ are directed trees).
The probability of transitioning from $C'$ to
$C''$ for $X$ and $Y$ equals the product of
respective transition probabilities for every
edge on the path. Recall that transitions probabilities for $X$ and $Y$ are the same for all states but $C^b$, where $C\in \cup_j\calM_j$.
The number of such states on the path $p$ is
equal to the number transitions from $\calM_j$
to $\calM_{j+1}$, since $X$ and $Y$ can get
from $\calM_j$ to $\calM_{j+1}$ only through
a state $C^b$ on the boundary of
$\calM_j$ and $\calM_{j+1}$. The number of
transitions from $\calM_j$ to $\calM_{j+1}$
equals $\Delta - i$. For each state $C^b$ on the
path, the ratio of transition probabilities
from $C^b$ to the next state $C \cup \{x\}$ for Markov chains
$X$ and $Y$ equals
$$
 \frac{H(\pointset,C)}{U(\cluster,C) + H(\pointset,C)}\leq
 \frac{\HH(\pointset,C'')}{U(\cluster,C'') + \HH(\pointset,C'')},
$$
here we used that (a) $U(P,C)\geq U(P,C'')$ since
$U_t(P)$ is a non-increasing process; and (b)
$H(P,C)\leq \HH(P,C'')$ since
$H_t(P)\leq \HH_{t'}(P)$
if $t \leq t'$ (see Section~\ref{sec:framework}).
\end{proof}
We now prove an analog of Corollary~\ref{cor:5opt-martingale} for
$\HH(\pointset,Y_j)$.
\begin{lemma}~\label{lem:supermartingale}
$\HH(\pointset,Y_t)$ is a supermartingale.
\end{lemma}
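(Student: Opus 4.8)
The plan is to reuse the proof of Corollary~\ref{cor:5opt-martingale} almost verbatim, inserting a single extra averaging step forced by the shape of the chain $Y$. The first observation I would record is that $\HH(\pointset,\cdot)$ is a deterministic function of the \emph{ordered} set of centers: if an ordered set $C$ covers $P_j$, then $\HH(P_j,C)=\cost(P_j,C_{t_j})$, where $C_{t_j}$ is the prefix of $C$ ending with the first center that lands in $P_j$, and otherwise $\HH(P_j,C)=5\opt_1(P_j)$. In particular $\HH(\pointset,C^a)=\HH(\pointset,C^b)=\HH(\pointset,C)$, so the half-steps $Y_{2t}\to Y_{2t+1}$ leave $\HH(\pointset,Y_t)$ unchanged and it suffices to control the half-steps $Y_{2t+1}\to Y_{2t+2}$, each of which appends exactly one new center.

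Next I would split on the two kinds of even half-steps. If $Y_{2t+1}=C^b$ with $C\in\calM_i$, then $Y_{2t+2}=C\cup\{x\}$ for an $x$ lying in an already-covered cluster (that is precisely why the chain advances from $\calM_i$ to $\calM_{i+1}$): appending such a center changes no cluster's covered/uncovered status and none of the frozen values $\cost(P_j,C_{t_j})$, so $\HH(\pointset,Y_{2t+2})=\HH(\pointset,C^b)$ deterministically. If instead $Y_{2t+1}=C^a$ with $C\in\calM_i$, then $Y_{2t+2}=C\cup\{x\}$ with $x$ drawn over the union of the uncovered clusters other than $P$, with $\Pr(x)=\cost(x,C)/(U(\pointset,C)-U(\cluster,C))$; the only change to $\HH$ is that the cluster $P_j\ni x$ switches from uncovered to covered, so its contribution drops from $5\opt_1(P_j)$ to $\cost(P_j,C\cup\{x\})$.

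The one place where $Y$ differs from $k$-means++ is that in the $C^a$ case the new center is spread over several uncovered clusters at once, so Lemma~\ref{lem:5OPT} cannot be invoked directly. The remedy is to condition further on which cluster $x$ lands in: with $A_j=\cost(P_j,C)$ and $A=\sum_j A_j=U(\pointset,C)-U(\cluster,C)$ (both sums over the uncovered $P_j\neq P$), the conditional law of $x$ given $x\in P_j$ is $\Pr(x)=\cost(x,C)/\cost(P_j,C)$, which is exactly the sampling rule of Lemma~\ref{lem:5OPT}. Hence
\begin{align*}
&\E\big[\HH(\pointset,Y_{2t+2})-\HH(\pointset,C^a)\,\big|\,Y_{2t+1}=C^a\big] \\
&\qquad=\ \sum_{j}\frac{A_j}{A}\Big(\E_{c\sim P_j}\big[\cost(P_j,C\cup\{c\})\big]-5\opt_1(P_j)\Big)\ \le\ 0,
\end{align*}
since every summand is nonpositive by Lemma~\ref{lem:5OPT}. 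Combining the two cases with the trivial odd half-steps gives $\E[\HH(\pointset,Y_{t+1})\mid\mathcal F_t]\le\HH(\pointset,Y_t)$ for all $t$; freezing the value once $Y$ stops, $\HH(\pointset,Y_t)$ is a supermartingale with respect to the natural filtration of $Y$.

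The hard part is really just this conditioning-and-averaging trick in the $C^a$ case; everything else is the bookkeeping already present in Corollary~\ref{cor:5opt-martingale}, once we note that $\HH(\pointset,\cdot)$ depends only on the ordered center set and hence is insensitive to the virtual states $C^a,C^b$. It is also worth remarking that $Y$ never enters $\calH$, so the tracked cluster $P$ stays uncovered along every $Y$-trajectory and contributes the constant $5\opt_1(P)$ to $\HH(\pointset,Y_t)$ — although this is not strictly needed, since the argument above treats all clusters uniformly.
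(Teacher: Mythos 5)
Your proof is correct and follows essentially the same route as the paper's: you observe that the virtual half-steps and the $C^b\to\calM_{i+1}$ transitions leave $\HH(\pointset,\cdot)$ fixed, and in the $C^a$ case you condition on the uncovered cluster hit and invoke Lemma~\ref{lem:5OPT}. The paper performs the same conditioning a bit more tersely (its ``therefore'' step is exactly your $\sum_j (A_j/A)(\cdots)\le 0$ averaging), so the only difference is that you make the weighting over clusters explicit.
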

\begin{proof}
If $Y_j = C$, then $Y_{j+1}$ can only be in $\cbr{C^a,C^b}$. Since $\HH(\pointset,C^a) = \HH(\pointset,C^b) = \HH(\pointset,C)$, we have $\expect{\HH(\pointset,Y_{j+1}) \mid  Y_j = C} = \HH(\pointset,Y_j)$.

If $Y_j = C^a$, then $Y_{j+1} = C'$ where the new center $c$ should be in uncovered clusters with respect to $C_t$.
$$
    \expect{H(\cluster',Y_{j+1}) \mid Y_j = C^a, c\in \cluster'} \leq 5\opt_1(\cluster'),
$$
which implies
$$
    \expect{\HH(\cluster',Y_{j+1}) \mid Y_j = C^a, c \in \cluster'} \leq \HH(\cluster',Y_j).
$$
Therefore, we have
$$
    \expect{\HH(\pointset,Y_{j+1}) \mid Y_j= C^a} \leq \HH(\pointset,Y_j).
$$
If $Y_j = C^b$, then for any possible state $C'$ of $Y_{j+1}$,  the new center should be in covered clusters with respect to $C$. By definition, we must have
$\HH(\pointset,C') = \HH(\pointset,C) = \HH(\pointset,C^b)$. Thus, it holds that $\expect{\HH(\pointset,Y_{j+1}) \mid Y_j = C^b} = \HH(\pointset,Y_j)$.

Combining all these cases, we get $\cbr{\HH(\pointset,Y_j)}$ is a supermartingale.
\end{proof}

We now use Lemma~\ref{lem:coupling} and Lemma~\ref{lem:supermartingale} to bound the expected uncovered cost of $P$ after
$k+\extracenters$ rounds of $k$-means++.

\begin{lemma}~\label{lem:e_bound-one-cluster}
For any cluster $P\in \calP$ and $t\leq k+\Delta$, we have
$$
\expect{U_{k+\extracenters}(\cluster) \mid C_t} \leq \frac{\HH_t(\pointset)}{e(\extracenters-M(C_t)-1)}.
$$
\end{lemma}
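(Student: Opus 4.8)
The plan is to control $U_{k+\Delta}(P)$ by the cost of $P$ at the moment the chain first accumulates $\Delta$ misses while $P$ is still uncovered, push this estimate from $X_t$ to $Y_t$ via Lemma~\ref{lem:coupling}, absorb the resulting decay factor with a one-variable optimization, and finish with the supermartingale property of Lemma~\ref{lem:supermartingale}.

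First I would reduce to the case where $P$ is uncovered by $C_t$ (otherwise $U_{k+\Delta}(P)=0$) and $\Delta-M(C_t)\ge 2$ (the other cases being trivial or vacuous), so $C_t\in\calM_i$ with $i=M(C_t)$. The key combinatorial fact is that after $k+\Delta$ rounds of $k$-means++, if $P$ is still uncovered then at most $k-1$ clusters are hit, hence $M(C_{k+\Delta})\ge\Delta+1$; since the miss count is non-decreasing and increases by at most one per step, on this event the chain visits $\calM_\Delta$ at some time $\le k+\Delta$, and by monotonicity of the process $U_t(P)$ (Section~\ref{sec:framework}) we get $U_{k+\Delta}(P)\le U(P,C'')$, where $C''$ is the \emph{first} state of $\calM_\Delta$ visited. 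It is convenient to sum over the \emph{entry states} of $\calM_\Delta$ (the states in $\calM_\Delta$ reached directly from $\calM_{\Delta-1}$ by a miss), because any trajectory visits at most one entry state, and for an entry state the event that $X$ visits it coincides with the event that $X$ first enters $\calM_\Delta$ there. Writing $E_\Delta\subseteq\calM_\Delta$ for this set,
$$\expect{U_{k+\Delta}(P)\mid C_t}\ \le\ \sum_{C''\in E_\Delta}\prob{C''\in\cbr{X_j}\mid C_t}\;U(P,C'').$$

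Next I would apply Lemma~\ref{lem:coupling} with $i=M(C_t)$ to bound each $X$-probability by $\bigl(\HH(\pointset,C'')/(\HH(\pointset,C'')+U(P,C''))\bigr)^{\Delta-M(C_t)}$ times the corresponding $Y$-probability, and then invoke the elementary inequality that for $a,b\ge 0$ and an integer $m\ge 2$ one has $\bigl(a/(a+b)\bigr)^m b\le a/(e(m-1))$ (the left-hand side is maximized at $b=a/(m-1)$, where it equals $\tfrac{a}{m-1}(1-\tfrac1m)^m\le\tfrac{a}{e(m-1)}$), with $a=\HH(\pointset,C'')$, $b=U(P,C'')$, and $m=\Delta-M(C_t)$. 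This leaves
$$\expect{U_{k+\Delta}(P)\mid C_t}\ \le\ \frac{1}{e(\Delta-M(C_t)-1)}\sum_{C''\in E_\Delta}\HH(\pointset,C'')\;\prob{C''\in\cbr{Y_j}\mid C_t}.$$

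Finally, let $\tau$ be the first time $Y$ enters $\calM_\Delta$. Applying the counting argument above to $Y$ (which never covers $P$) shows that $Y$ enters $\calM_\Delta$ within $k+\Delta$ rounds, so $\tau$ is a bounded stopping time; since entry states are visited at most once, $\sum_{C''\in E_\Delta}\HH(\pointset,C'')\,\prob{C''\in\cbr{Y_j}\mid C_t}=\expect{\HH(\pointset,Y_\tau)\mid C_t}$, which the optional stopping theorem for the nonnegative supermartingale $\HH(\pointset,Y_t)$ (Lemma~\ref{lem:supermartingale}) bounds by $\HH(\pointset,C_t)=\HH_t(\pointset)$. This gives the claim. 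I expect the delicate point to be the first step: one must notice that the sum has to run over the entry states of $\calM_\Delta$ (summing over all of $\calM_\Delta$ would overcount and break the final supermartingale bound), and check that uncoveredness at round $k+\Delta$ forces the chain into $\calM_\Delta$ early enough for the monotonicity of $U_t(P)$ to apply. The remaining two steps are a mechanical chaining of Lemmas~\ref{lem:coupling} and~\ref{lem:supermartingale} with the displayed one-variable estimate.
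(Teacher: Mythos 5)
Your proposal is correct and follows essentially the same route as the paper: it reduces the conditional expectation of $U_{k+\Delta}(P)$ to a sum over the frontier (your ``entry states'' $E_\Delta$ are exactly the paper's $\partial\calM_\Delta$), pushes the probabilities from $X$ to $Y$ via Lemma~\ref{lem:coupling}, absorbs the resulting factor with the maximization of $x\mapsto b(a/(a+b))^m$ (which is the paper's $f(x)=x/(1+x)^{\Delta'}$ after the substitution $x=b/a$), and closes with optional stopping for the supermartingale $\HH(\pointset,Y_t)$. The only difference is cosmetic: you spell out the edge cases ($P$ already covered, $\Delta-M(C_t)\le 1$) and make explicit the point that one must sum over entry states rather than all of $\calM_\Delta$, which the paper handles implicitly by defining the frontier and appealing to the tree structure of the transition graph.
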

\begin{proof}
Since $k$-means++ samples $k+\extracenters$ centers and the total number of clusters in the optimal solution $\calP$ is $k$, $k$-means++ must make $\Delta$ misses. Hence, the process $\cbr{X_t}$ which follows $k$-means++
must either visit a state in $\calM_{\geq \extracenters}$
or stop in $\calH$ (recall that we stop process $X_t$ if it reaches $\calH$).

If $\cbr{X_t}$ stops in group $\calH$, then the cluster $P$ is covered which means that $U_{k+\extracenters}(\cluster) = 0$.
Let $\partial \calM_{\Delta}$ be the frontier of
$\calM_{\Delta}$ i.e., the states that $X_t$
visits first when it reaches $\calM_\Delta$
(recall that the transition graph of $X_t$ is
a tree). The expected cost
$\expect{U_{k+\extracenters}(\cluster) \mid C_t}$ is upper bounded by the expected uncovered cost of $P$ at time when $C_t$ reaches $\MM_{\Delta}$.
Thus,
$$
    \expect{U_{k+\extracenters}(\cluster) \mid C_t} \leq \sum_{C \in \MM_\extracenters} \prob{C \in \cbr{X_j} \mid C_t} U(\cluster,C).
$$
Observe that by Lemma~\ref{lem:coupling},  for any $C \in \MM_{\Delta}$, we have
\begin{align*}
\prob{C \in \cbr{X_j} \mid C_t} U(P,C)
\leq \prob{C \in \cbr{Y_j} \mid C_t} \rbr{\frac{\HH(\pointset,C)}{\HH(\pointset,C)+U(\cluster,C)}}^{\extracenters'} U(P,C).
\end{align*}

Let $f(x) = x(1/(1+x))^{\extracenters'}$. Then, $f(x)$ is maximized at $x = 1/(\extracenters'-1)$ and the maximum value $f(1/(\extracenters'-1)) = 1/(e(\extracenters'-1))$. Therefore, for every $C \in \MM_\extracenters$, we have
\begin{align*}
    \pr[C \in \cbr{X_j} \mid C_t] U(P,C) &\leq \prob{C \in \cbr{Y_j} \mid C_t} f\rbr{\frac{U(\cluster,C)}{\HH(\pointset,C)}} \HH(\pointset,C) \\
    &\leq \prob{C \in \cbr{Y_j} \mid C_t} \frac{\HH(\pointset,C)}{e(\extracenters'-1)}.
\end{align*}
Let $\tau = \min\cbr{j: Y_j \in \MM_\extracenters}$ be the stopping time when $Y_j$ first visits $\MM_\extracenters$. We get
\begin{align*}
    \sum_{C \in \MM_\extracenters} \prob{C \in \cbr{Y_j} \mid C_t} \HH(\pointset,C) = \expect{\HH(\pointset,Y_\tau) \mid C_t}.
\end{align*}
By Lemma~\ref{lem:supermartingale},  $\HH(\pointset,Y_j)$ is a supermartingale. Thus,
by the optional stopping theorem,
$$
    \expect{\HH(\pointset,Y_\tau) \mid C_t} \leq \HH(\pointset,C_t).
$$
Therefore, we have
\begin{align*}
    \expect{U_{k+\extracenters}(\cluster) \mid C_t} \leq \frac{\HH_t(\pointset)}{e(\extracenters-M(C_t)-1)},
\end{align*}
This concludes the proof.
\end{proof}
We now add up bounds from  Lemma~\ref{lem:e_bound-one-cluster} with
$t= 0$
for all clusters $P\in \calP$ and obtain
Lemma~\ref{lem:e_bound}.

\subsection{\texorpdfstring{Bi-criteria bound for small $\Delta$}{Bi-criteria bound for small Delta}}
In this section, we give another bi-criteria
approximation guarantee for $k$-means++.

\begin{lemma}\label{thm:log-bound}
Let $\cost_{k+\extracenters}(\pointset)$ be the cost of the the clustering resulting from sampling $k+\Delta$ centers according to the $k$-means++ algorithm (for $\Delta \in \{1,\dots, 2k\}$). Then,
$$
    \expect{\cost_{k+\extracenters}(X)} \leq 5
    \rbr{2 + \frac{1}{2e} + \ln{\frac{2k}{\extracenters}}}
    \opt_{k}(X).
$$
\end{lemma}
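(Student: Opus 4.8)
I follow the three-part outline from the proof overview: reduce to the expected uncovered cost, split on whether $k$-means++ has already made at least $\Delta/2$ misses after its first $k$ rounds, treat the ``few misses'' branch with the per-cluster bi-criteria bound of Lemma~\ref{lem:e_bound-one-cluster}, and treat the ``many misses'' branch with an Arthur--Vassilvitskii/Dasgupta-style potential argument tuned to the constant $5$ of Lemma~\ref{lem:5OPT}. \emph{Reduction.} Write $\cost_{k+\Delta}(\pointset)=H_{k+\Delta}(\pointset)+U_{k+\Delta}(\pointset)$. By Corollary~\ref{cor:5opt-martingale} and the bound $\expect{\HH_{t}(\pointset)}\le 5\opt_k(\pointset)$ established in Section~\ref{sec:5OPT}, together with $H_{t}(P)\le\HH_{t}(P)$ for every $P$, we get $\expect{H_{k+\Delta}(\pointset)}\le 5\opt_k(\pointset)$; this supplies the first $1$ inside the $2$ in the bound. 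So it suffices to bound $\expect{U_{k+\Delta}(\pointset)}$, and since each process $U_t(P)$ is non-increasing, $U_{k+\Delta}(\pointset)\le U_k(\pointset)$, so I split
$$\expect{U_{k+\Delta}(\pointset)}=\expect{U_{k+\Delta}(\pointset)\,\one\{M(C_k)<\Delta/2\}}+\expect{U_k(\pointset)\,\one\{M(C_k)\ge\Delta/2\}}.$$

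\textbf{Few misses.} Condition on a state $C_k$ with $M(C_k)<\Delta/2$. After exactly $k$ rounds every center is either the first center inside its optimal cluster or a miss, so the number of optimal clusters left uncovered by $C_k$ is exactly $M(C_k)$. Applying Lemma~\ref{lem:e_bound-one-cluster} with $t=k$ to each such cluster and summing,
$$\expect{U_{k+\Delta}(\pointset)\mid C_k}\le\frac{M(C_k)\,\HH_k(\pointset)}{e\,(\Delta-M(C_k)-1)}.$$
Because $M(C_k)$ is an integer strictly below $\Delta/2$ we have $\Delta-M(C_k)-1\ge M(C_k)$, and a careful estimate of the function $x\mapsto x(1+x)^{-(\Delta-M(C_k))}$ for small $\Delta-M(C_k)$ (exactly the function that appears in the proof of Lemma~\ref{lem:e_bound-one-cluster}) shows the right-hand side is at most $\HH_k(\pointset)/(2e)$; taking expectations and using $\expect{\HH_k(\pointset)}\le 5\opt_k(\pointset)$ bounds this branch by $\tfrac{5}{2e}\opt_k(\pointset)$, the factor $\tfrac{1}{2e}$ quoted in the statement. (If $M(C_k)=0$ all clusters are already covered, so the branch is $0$ and the edge case $\Delta=1$ is fine.)

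\textbf{Many misses.} Here I discard the extra $\Delta$ rounds and use $U_{k+\Delta}(\pointset)\le U_k(\pointset)$; the target is $\expect{U_k(\pointset)\,\one\{M(C_k)\ge\Delta/2\}}\le 5\rbr{1+\ln\tfrac{2k}{\Delta}}\opt_k(\pointset)$. I would prove this by redoing the potential/harmonic-sum analysis of~\citet{arthur2007k} and~\citet{dasgupta-notes} with the covered-cluster constant $5$ of Lemma~\ref{lem:5OPT} in place of their $8$. The point: after $k$ rounds the number of uncovered clusters is $M(C_k)$, and during the run it decreases by one at each of the $k-M(C_k)$ hit steps, running through the values $k,k-1,\dots,M(C_k)$; the Arthur--Vassilvitskii accounting charges, at a hit step when $u$ clusters are still uncovered, an expected drop of a $1/u$-fraction of the uncovered potential up to the $5\opt_1$ slack from Lemma~\ref{lem:5OPT}, so the total is controlled by $\bigl(1+\sum_{u=M(C_k)+1}^{k}1/u\bigr)\cdot 5\opt_k(\pointset)$. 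On the event $M(C_k)\ge\Delta/2$ the harmonic tail obeys $\sum_{u>\Delta/2}1/u\le\ln\tfrac{2k}{\Delta}$ (the constant slack in $H_n\le 1+\ln n$ absorbed by the leading $1$), which is exactly where restricting to $\{M(C_k)\ge\Delta/2\}$ is used.

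\textbf{Combining, and the main obstacle.} Adding the two branch bounds gives $\expect{U_{k+\Delta}(\pointset)}\le 5\rbr{\tfrac{1}{2e}+1+\ln\tfrac{2k}{\Delta}}\opt_k(\pointset)$, and adding $\expect{H_{k+\Delta}(\pointset)}\le 5\opt_k(\pointset)$ yields $\expect{\cost_{k+\Delta}(\pointset)}\le 5\rbr{2+\tfrac{1}{2e}+\ln\tfrac{2k}{\Delta}}\opt_k(\pointset)$. The reduction and the few-misses branch are bookkeeping on top of already-proved lemmas; the real work is the many-misses branch --- carrying out the Arthur--Vassilvitskii/Dasgupta potential argument with the sharper constant and, above all, arranging the harmonic sum so that it is truncated near index $\Delta/2$ (which converts $\ln k$ into $\ln(2k/\Delta)$). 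Getting the interplay of miss steps, hit steps and the $\HH_t$ proxy right there --- so that the relevant quantity is genuinely a supermartingale between consecutive hits --- is the delicate part.
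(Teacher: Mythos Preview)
Your overall architecture matches the paper's, and the ``many misses'' branch can be made rigorous exactly the way you hint: define the stopping time $\tau=\min\{t:K(C_t)\le\kappa\}\wedge k$ with $\kappa=\lfloor(\Delta-1)/2\rfloor$, use Dasgupta's potential $\Psi_t=M(C_t)U_t(\pointset)/K(C_t)$ and Lemma~\ref{lem:potential}, and note that on $\{M(C_k)\ge\Delta/2\}$ one has $\tau=k$ and $\Psi_k=U_k(\pointset)$, so $\E[U_k(\pointset)\one\{M(C_k)\ge\Delta/2\}]\le\E[\Psi_\tau]\le 5(1+\ln(2k/\Delta))\opt_k(\pointset)$. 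That part is fine.

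The gap is in the ``few misses'' branch: your indicator split cannot deliver the constant $1/(2e)$. At time $k$ you have $K(C_k)=M(C_k)$, and summing Lemma~\ref{lem:e_bound-one-cluster} over the $M(C_k)$ uncovered clusters gives $M(C_k)\,\HH_k(\pointset)/\bigl(e(\Delta-M(C_k)-1)\bigr)$. For $\Delta$ odd and $M(C_k)=(\Delta-1)/2$ this ratio equals exactly $1/e$, and the ``careful estimate'' of $x\mapsto x(1+x)^{-(\Delta-M(C_k))}$ does not rescue it (e.g.\ $\Delta=3$, $M(C_k)=1$ gives a per-cluster maximum of $1/4>1/(2e)$). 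So your argument proves $5(2+\tfrac{1}{e}+\ln\tfrac{2k}{\Delta})$, not the stated bound. The paper avoids this by \emph{not} using an indicator split at time $k$; instead it decomposes $U_{k+\Delta}(\pointset)$ via the \emph{fractional} identity
\[
1=\frac{M(C_\tau)}{K(C_\tau)}+\frac{K(C_\tau)-M(C_\tau)}{K(C_\tau)}
\]
at the stopping time $\tau$. The first fraction times $U_{k+\Delta}(\pointset)$ is dominated by $\Psi_\tau$ and handled by the potential bound; the second term is nonzero only when $\tau<k$, in which case $K(C_\tau)=\kappa$, and multiplying the Lemma~\ref{lem:e_bound-one-cluster} bound by the weight $(K(C_\tau)-M(C_\tau))/K(C_\tau)$ produces $(\kappa-M(C_\tau))/\bigl(e(\Delta-M(C_\tau)-1)\bigr)\le 1/(2e)$ because $\kappa\le(\Delta-1)/2$. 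It is precisely this weight --- absent in an indicator decomposition --- that supplies the missing factor of $2$.
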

\begin{proof}
Consider  $k$-means++ clustering algorithm and the corresponding random process $C_t$.
Fix a $\kappa\in\{1,\dots,k\}$.
Let $\tau$ be the first iteration\footnote{Recall, that $K(C_t)$ is a non-increasing stochastic
process with $K(C_0) = k$.} (stopping time) when $K(C_\tau) \leq \kappa$ if $K(C_k) \leq \kappa$;
and $\tau = k$, otherwise.
We refer the reader to Section~\ref{sec:framework} for
definitions of $M(C_t)$, $U_t(X)=U(X,C_t)$,
and $K(C_t)$.

We separately analyze the cost of uncovered clusters after the first $\tau$ steps and the last $k'-\tau$ steps, where $k'=k+\Delta$ is the total number of centers chosen by $k$-means++.

The first step of our proof follows the analysis of $k$-means++ by~\citet{dasgupta-notes}, and by \citet*{arthur2007k}. Define a potential function $\Psi$ (see~\citealt{dasgupta-notes}):
$$
    \potential_t \coloneqq \cfrac{M(C_t) U(X, C_t)}{K(C_t)}.
$$
If $K(C_t)= 0$, then
$M(C_t)$ and $U(X,C_t)$ must be $0$
and we let $\Psi_t =0$

We use the following result by~\citet{dasgupta-notes} to estimate
$\E[\Psi_{\tau}(X)]$ in Lemma~\ref{lem:many-misses}.
\begin{lemma}[\citet{dasgupta-notes}]\label{lem:potential}
For any $0\leq t \leq k$, we have
$$
\expect{\potential_{t+1} - \potential_{t}\mid C_t}\leq
\frac{H(X, C_t)}{K(C_t)}.
$$
\end{lemma}

\begin{lemma}\label{lem:many-misses}  Then,
the following bound holds:
$$\E[\Psi_{\tau}(X)]
\leq 5\rbr{1+\ln \rbr{\frac{k}{\kappa+1}} } \opt_k(X).$$
\end{lemma}
\begin{proof}
Note that $\potential_{1} = 0$ as
$M(C_1)= 0$. Thus,
$$
\E[\potential_{\tau}]
\leq \sum_{t=1}^{\tau-1}
\E \big[\potential_{t+1} -  \potential_{t}\big]\leq
\E\Big[\sum_{t=1}^{\tau-1}\frac{H(X, C_t)}{K(C_t)}\Big].
$$
Using the inequality $H(X, C_t)\leq
\HH_k(X)$ (see Section~\ref{sec:framework}), we get:
$$
\E[\potential_{\tau}]
\leq
\E\Big[\sum_{t=1}^{\tau-1}\frac{\HH_k(X)}{K(C_t)}\Big]
\leq \E\Big[\HH_k(X)\cdot
\sum_{t=1}^{\tau-1}\frac{1}{K(C_t)}\Big].
$$
Observe that $K(C_1),\dots, K(C_{\tau-1})$ is
a non-increasing sequence in which two consecutive
terms are either equal or $K(C_{i+1}) = K(C_i) - 1$.
Moreover, $K(C_1) =k$ and $K(C_{\tau - 1}) > \kappa$. Therefore,
by Lemma~\ref{lem:harmonic:series} (see below),
for every realization $C_0,C_1,\dots, C_{\tau}$,
we have:
$$\sum_{t=1}^{\tau -1}\frac{1}{K(C_t)}\leq 1 + \log \nicefrac{k}{(\kappa + 1)}.$$
Thus,
$$
\E[\potential_{\tau}] \leq
(1 + \log \nicefrac{k}{(\kappa + 1)}) \E[\HH_k(X)]
\leq
5(1 + \log \nicefrac{k}{(\kappa + 1)})\;\opt_k(X). $$
This concludes the proof.
\end{proof}

Let $\kappa = \floor{(\Delta - 1)/2}$.
By Lemma~\ref{lem:many-misses}, we have
$$\E\Big[\cfrac{M(C_{\tau}) U_{\tau}(X)}{K(C_{\tau})}\Big] \leq 5\rbr{1+\ln \frac{2k}{\Delta}} \opt_k(X).$$
Since $U_{t}(X)$ is a non-increasing
stochastic process, we have $\E[U_{k+\Delta}(X)]\leq \E[U_{\tau}(X)]$. Thus,
$$\E\Big[\cfrac{M(C_{\tau})}{K(C_{\tau})}\cdot
U_{k+\Delta}(X)\Big] \leq 5\rbr{1+\ln \frac{2k}{\Delta}} \opt_k(X).$$
Our goal is to bound $\E[U_{k'}(X)]$.
Write,
\begin{align*}
\E[U_{k'}(X)] =
\E\Big[\cfrac{M(C_{\tau})}{K(C_{\tau})}\cdot
U_{k'}(X)\Big]+
\E\Big[\cfrac{K(C_\tau) - M(C_{\tau})}{K(C_{\tau})}\cdot
U_{k'}(X)\Big].
\end{align*}
The first term on the right hand side is upper bounded by $5\big(1+\ln \frac{2k}{\Delta}\big) \opt_k(X)$. We now estimate the second term,
which we denote by $(*)$.

Note that
$K(C_t) - M(C_{t}) = k - t$,
since the number of uncovered clusters after
$t$ steps of $k$-means++ equals
the number of misses plus the number of steps remaining. Particularly, if $\tau = k$,
we have $K(C_\tau) - M(C_{\tau}) =
K(C_k) - M(C_k) = 0$. Consequently, if $\tau = k$, then the second term $(*)$ equals 0.
Thus, we only need to consider the case, when $\tau < k$. Note that in this case $K(C_{\tau}) = \kappa$. By
Lemma~\ref{lem:e_bound} (applied to all uncovered
clusters), we have
$$
\E[U_{k'}(X) \mid C_{\tau},\tau]
\leq \frac{K(C_{\tau})}{e(\Delta' - 1)}\HH_{\tau}(X),
$$
where $\Delta' = \Delta - M(C_{\tau})$.

Thus,
\begin{align*}
\E\Big[\cfrac{K(C_\tau) - M(C_{\tau})}{K(C_{\tau})}\cdot
U_{k'}(X) \mid C_{\tau},\tau \Big]
\leq
\cfrac{K(C_\tau) - M(C_{\tau})}{K(C_{\tau})}\cdot
\frac{K(C_{\tau})}{e(\Delta' - 1)}
\cdot \HH_{\tau}(X) = (**).
\end{align*}
Plugging in $K(C_{\tau})=\kappa$ and
the expression for $\Delta'$ (see above), and using that $\kappa\leq (\Delta - 1)/2$,
we get
$$
(**)=\cfrac{\kappa - M(C_{\tau})}{e
(\Delta - M(C_{\tau}) - 1)}.
\cdot \HH_{\tau}(X)\leq \frac{1}{2e}\HH_{\tau}(X).
$$
Finally, taking the expectation over all $C_{\tau}$, we obtain the bound
$$
\E\Big[\cfrac{K(C_\tau) - M(C_{\tau})}{K(C_{\tau})}\cdot
U_{k'}(X)\Big]
\leq \frac{5\opt_1(X)}{2e}.
$$
Thus, $\E[U_{k'}(X)] \leq 5(1 + \nicefrac{1}{2e} + \ln \nicefrac{2k}{\Delta})\opt_k(X)$.
Therefore,
$$
\E[\cost_{k'}(X)] = \E[H_{k'}(X)] + U_{k'}(X) \leq 5 \big(2 + \frac{1}{2e} + \ln \frac{2k}{\Delta}\big)\;\opt_k(X).
$$
\end{proof}

We now prove Lemma~\ref{lem:harmonic:series}.
\begin{lemma}~\label{lem:harmonic:series}
For any $t \leq k$ integers $a_1 \geq a_2 \geq \cdots \geq a_t$ such that $a_1 = k$, $a_t > \kappa$ and $a_i -a_{i+1} \in \cbr{0,1}$ for all $1\leq i < t$, the following inequality holds
$$
    \sum_{i=1}^t \frac{1}{a_i} \leq 1+ \log\rbr{\frac{k}{\kappa+1}}.
$$
\end{lemma}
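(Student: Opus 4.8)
The plan is to reduce the inequality to a combinatorial estimate on how many times each value can be repeated, and then to compare the resulting harmonic-type sum with an integral.

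First I would record the easy consequence that, since $a_t > \kappa$ and $a_t \le a_1 = k$, we have $\kappa+1 \le k$, so $\log(k/(\kappa+1)) \ge 0$; this is used only at the last step (and covers degenerate cases painlessly).

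The key structural point is that the sequence is non-increasing, starts at $k$, ends at $a_t$, and drops by at most $1$ at each step, so every integer $v$ with $a_t \le v \le k$ equals $a_i$ for at least one $i$. Letting $n_v = \lvert\{i : a_i = v\}\rvert$, this means $n_v \ge 1$ for $v \in \{a_t,\dots,k\}$ and $n_v = 0$ otherwise, while $\sum_{v=a_t}^{k} n_v = t$. I would then split
$$\sum_{i=1}^{t}\frac{1}{a_i}=\sum_{v=a_t}^{k}\frac{n_v}{v}=\sum_{v=a_t}^{k}\frac{1}{v}+\sum_{v=a_t}^{k}\frac{n_v-1}{v},$$
bound the second sum using $n_v - 1 \ge 0$ and $1/v \le 1/a_t$ for $v\ge a_t$ by $\tfrac{1}{a_t}\sum_{v=a_t}^{k}(n_v-1)=\tfrac{1}{a_t}\bigl(t-(k-a_t+1)\bigr)\le \tfrac{a_t-1}{a_t}$ (this is the one place the hypothesis $t\le k$ enters), and bound the first sum by peeling off the $v=a_t$ term and integrating: $\sum_{v=a_t}^{k}\tfrac{1}{v}\le \tfrac{1}{a_t}+\int_{a_t}^{k}\tfrac{dx}{x}=\tfrac{1}{a_t}+\log\tfrac{k}{a_t}$.

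Adding these two bounds gives $\sum_{i=1}^{t}\tfrac{1}{a_i}\le \tfrac{1}{a_t}+\log\tfrac{k}{a_t}+\tfrac{a_t-1}{a_t}=1+\log\tfrac{k}{a_t}$, and since $a_t\ge\kappa+1$ and $a_t\mapsto\log(k/a_t)$ is decreasing, the right-hand side is at most $1+\log\tfrac{k}{\kappa+1}$, as claimed. I do not anticipate a genuine difficulty here; the only thing to get right is the bookkeeping — noticing that $t\le k$ forces the total excess multiplicity $\sum_{v}(n_v-1)$ to be at most $a_t-1$, which is exactly what makes the extra repetitions contribute at most $(a_t-1)/a_t$ and so telescopes with the leading $1/a_t$ to the clean constant $1$.
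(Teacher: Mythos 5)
Your proof is correct. You verify carefully that $n_v \ge 1$ for each integer $v$ between $a_t$ and $k$, that the total multiplicity is $t$, that the excess multiplicity $\sum_v (n_v - 1) = t - (k - a_t + 1) \le a_t - 1$ (this is exactly where $t \le k$ is used), and that the unit-multiplicity harmonic tail is bounded by $\tfrac{1}{a_t} + \log\tfrac{k}{a_t}$; adding these gives $1 + \log\tfrac{k}{a_t} \le 1 + \log\tfrac{k}{\kappa+1}$.

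The route is genuinely different from the paper's, and arguably cleaner. The paper argues by exhibiting the extremal sequence: it asserts (without detailed justification) that the sum is maximized at $t = k$ with the sequence decreasing by one from $k$ down to $\kappa+1$ and then holding at $\kappa+1$, and then bounds that particular sequence by splitting into a strictly decreasing harmonic piece (bounded by an integral, giving $\log\tfrac{k}{\kappa+1}$) and a flat piece of $\kappa+1$ copies of $\tfrac{1}{\kappa+1}$ summing to $1$. Your argument bypasses the claim about the maximizer entirely: you bound an arbitrary admissible sequence by decomposing it into ``one copy of each value'' plus ``excess multiplicity,'' and the $t \le k$ constraint directly caps the excess contribution at $\tfrac{a_t - 1}{a_t}$, which telescopes with the leading $\tfrac{1}{a_t}$ of the harmonic piece to give the clean constant $1$. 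What your version buys is that the reduction to the worst case — which the paper dismisses as easy — never has to be made; what the paper's version buys is a slightly more vivid picture of where equality (almost) holds.
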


\begin{proof}
It is easy to see that the sum is maximized when $t=k$, and the sequence
$a_1,\dots,a_k$ is as follows:
$$
\underbrace{\frac{1}{k},\frac{1}{k-1},
\dots, \frac{1}{\kappa+2}}_{(k-(\kappa+1)) \text{ terms}},
\underbrace{\frac{1}{\kappa+1},\dots, \frac{1}{\kappa+1}}_{(\kappa + 1) \text{ terms}}.
$$
The sum of the first $(k-(\kappa+1))$ terms is upper bounded by
$$\int_{1/(\kappa + 1)}^{1/k}\frac{1}{x}\;dx = \ln \frac{k}{\kappa + 1}.$$
The sum of the last $(\kappa + 1)$
terms is 1.
\end{proof}

\section{\texorpdfstring{Analysis of $k$-means$\parallel$}{Analysis of k-means Parallel}}\label{sec:po-kmeans-parallel}

In this section, we give a sketch of analysis for the $k$-means$\parallel$ algorithm.
Specifically, we show upper bounds on the expected cost of the solution after $T$ rounds.

\begin{theorem}~\label{thm:kpp_main}
The expected cost of the clustering returned by $k$-means$\parallel$ algorithm after $T$ rounds are upper bounded as follows:
\begin{align*}
    \text{for $\ell < k$, }\qquad \expect{\cost_{T+1}(\total)} &\leq \rbr{e^{-\frac{\ell}{k}}}^T \expect{\cost_1(\total)} + \frac{5\opt_k(\pointset)}{1-e^{-\frac{\ell}{k}}};
    \\\text{for $\ell \geq k$, }\qquad \expect{\cost_{T+1}(\total)} &\leq \rbr{\frac{k}{e\ell}}^T \expect{\cost_1(\total)} + \frac{5\opt_k(\pointset)}{1-\nicefrac{k}{e\ell}}.
\end{align*}
\end{theorem}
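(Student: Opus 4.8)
The plan is to follow the covered/uncovered decomposition set up in Sections~\ref{sec:framework} and~\ref{sec:5OPT}: since $\cost_t(\pointset)=H_t(\pointset)+U_t(\pointset)$ and the analysis of Section~\ref{sec:5OPT} gives $\E[H_{t}(\pointset)]\le 5\opt_k(\pointset)$ for every $t$ (the $5\opt$ bound is established for the $k$-means$\parallel$/$k$-means$\kpp$ sampling rule, not just for $k$-means++), it is enough to show that $\E[U_{T+1}(\pointset)]\le \beta^{T}\E[\cost_1(\pointset)]+\tfrac{5\beta}{1-\beta}\,\opt_k(\pointset)$, where $\beta=e^{-\ell/k}$ when $\ell<k$ and $\beta=k/(e\ell)$ when $\ell\ge k$. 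Adding $\E[H_{T+1}(\pointset)]\le 5\opt_k(\pointset)$ back then yields exactly the claimed bound, because $5+\tfrac{5\beta}{1-\beta}=\tfrac{5}{1-\beta}$ in both regimes.

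The heart of the proof is a one-round contraction $\E[U_{t+1}(\pointset)\mid C_t]\le\beta\,\cost_t(\pointset)$. To get it, fix the state $C_t$ and an optimal cluster $P$ uncovered by $C_t$, so that $\cost(P,C_t)=U_t(P)$. In round $t$ each point $x$ is sampled independently with some probability $p_x$, hence $P$ stays uncovered with probability $\prod_{x\in P}(1-p_x)$; using $1-\min\{\lambda,1\}\le e^{-\lambda}$ for $k$-means$\parallel$ (the product being exactly $\prod_{x\in P}e^{-\lambda_t(x)}$ for $k$-means$\kpp$), this is at most $e^{-\ell\, U_t(P)/\cost_t(\pointset)}$. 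Combining this with the deterministic monotonicity $U_{t+1}(P)\le\cost_{t+1}(P)\le\cost_t(P)=U_t(P)$, and with the fact that a cluster uncovered by $C_{t+1}$ is already uncovered by $C_t$, I obtain
\[
\E[U_{t+1}(\pointset)\mid C_t]\;\le\;\sum_{i:\ P_i\ \text{uncov.\ by}\ C_t} U_t(P_i)\,e^{-\ell\, U_t(P_i)/\cost_t(\pointset)}.
\]
Writing $c:=\cost_t(\pointset)$ and $a_i:=U_t(P_i)/c$ (so $a_i\ge 0$ and $\sum_{i=1}^{k}a_i\le 1$), the one-round bound reduces to the purely analytic inequality $\sum_{i=1}^{k}a_i e^{-\ell a_i}\le\beta$.

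Proving this inequality is the step I expect to be the main obstacle. For $\ell\ge k$ it is immediate from $\max_{a\ge 0} a e^{-\ell a}=1/(e\ell)$, which gives the bound $k/(e\ell)$. For $\ell<k$ one needs the sharper value $e^{-\ell/k}$ (note $k/(e\ell)\ge e^{-\ell/k}$ in this range, so the easy bound does not suffice), and the plan is a supporting-line argument: let $\phi(a)=a e^{-\ell a}$ and let $L$ be its tangent line at $a=1/k$, whose slope $\phi'(1/k)=e^{-\ell/k}(1-\ell/k)$ is nonnegative since $\ell<k$. Because $\sum a_i\le 1$ and $a_i\ge 0$, every $a_i$ lies in $[0,1]$, so it is enough to verify $\phi\le L$ on $[0,1]$; this follows because $\phi$ is concave on $[0,2/\ell]$, which contains $1/k$ (as $\ell<2k$), and $\phi$ can be dominated by $L$ on the remaining convex piece using the endpoint inequality $\phi(1)=e^{-\ell}\le L(1)$. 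Then $\sum_i\phi(a_i)\le\sum_i L(a_i)=k\,\phi(1/k)+\phi'(1/k)\big(\sum_i a_i-1\big)\le k\,\phi(1/k)=e^{-\ell/k}$. Finally, with the one-round contraction in hand, taking expectations and using $\E[H_t(\pointset)]\le 5\opt_k(\pointset)$ gives the recursion $\E[U_{t+1}(\pointset)]\le\beta\,\E[U_t(\pointset)]+5\beta\,\opt_k(\pointset)$; unrolling it over the $T$ rounds starting from $U_1(\pointset)\le\cost_1(\pointset)$ gives $\E[U_{T+1}(\pointset)]\le\beta^{T}\E[\cost_1(\pointset)]+\tfrac{5\beta}{1-\beta}\opt_k(\pointset)$, and adding back $\E[H_{T+1}(\pointset)]\le 5\opt_k(\pointset)$ finishes the proof for each regime of $\ell$. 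Everything except the tangent-line estimate is the standard covered/uncovered bookkeeping already developed earlier in the paper.
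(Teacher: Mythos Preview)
Your proposal is correct and matches the paper's proof: the same covered/uncovered decomposition, the same one-round contraction $\E[U_{t+1}(\pointset)\mid C_t]\le\beta\,\cost_t(\pointset)$ obtained from the survival probability $\prod_{x\in P}(1-p_x)\le e^{-\ell\,\cost_t(P)/\cost_t(\pointset)}$, and the same iteration yielding the geometric series. The only difference is in the analytic step for $\ell<k$: the paper works with $\cost_t(P_i)$ rather than $U_t(P_i)$ (so the weights sum to exactly $1$), bounds $f(x)=xe^{-x}$ above by its concave majorant $g$ (equal to $f$ on $[0,1]$ and to $e^{-1}$ on $[1,\infty)$), and applies Jensen to get $g(\ell/k)$ directly, handling both regimes uniformly; your tangent-line argument is a valid alternative, though the endpoint claim $\phi(1)\le L(1)$ is left unverified and is more cleanly replaced by noting that on $[2/\ell,1]$ the function $\phi$ is decreasing (its maximum is at $1/\ell$) while $L$ is nondecreasing, so $\phi(a)\le\phi(2/\ell)\le L(2/\ell)\le L(a)$.
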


\noindent\textbf{Remark:} For the second bound ($\ell \geq k$), the additive term $5\opt_k(\pointset)/(1-k/(e\ell)) \leq 8 \opt_k(\pointset)$.

The probability that a point is sampled by $k$-means$\parallel$ is strictly greater than the probability that it is sampled by $k$-means$\kpp$ since $1-e^{-\lambda} < \lambda$ for all $\lambda > 0$. Thus, for every round, we can couple $k$-means$\kpp$ and $k$-means$\parallel$ so that each point sampled by $k$-means$\kpp$ is also sampled by $k$-means$\parallel$. Thus, the expected cost returned by $k$-means$\parallel$ is at most the expected cost returned by $k$-means$\kpp$. In the following analysis, we show an upper bound for the expected cost of the solution returned by $k$-means$\kpp$.

As a thought experiment, consider a modified $k$-means$\kpp$ algorithm. This algorithm is given the set $\pointset$, parameter $k$, and additionally
the optimal solution $\calP=\{P_1,\dots, P_k\}$. Although this modified algorithm is useless in practice as we do not know the optimal solution in advance, it will be helpful for our analysis.

In every round $t$, the modified algorithm first draws independent Poisson random variables $Z_t(P_i)\sim\Pois(\lambda_t(P_i))$ for every cluster $i \in \{1,\dots,k\}$ with rate $\lambda_t(P_i) = \sum_{x\in P_i} \lambda_t(x)$. Then, for each $i\in\{1,\dots, k\}$, it samples $Z_t(P_i)$ points $x\in P_i$ with repetitions from $P_i$, picking every point $x$ with probability $\lambda_t(x)/\lambda_t(P_i)$ and adds them to the set of centers $C_t$. We assume that points in every set $C_t$ are ordered in the same way as they were chosen by this algorithm.

We claim that the distribution of the output sets $C_T$ of this algorithm is exactly the same as in the original $k$-means$\kpp$ algorithm. Therefore, we can analyze the modified algorithm instead of $k$-means$\kpp$, using the framework described in
Sections~\ref{sec:framework}.

\begin{lemma}~\label{lem:kmpp}
The sets $C_t$ in the original and modified $k$-means$\kpp$ algorithms are identically distributed.
\end{lemma}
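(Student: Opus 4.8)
The plan is to prove the claim by induction on the round index $t$, with the Poisson splitting (or ``coloring'') theorem doing the real work. Both algorithms form $C_1$ by sampling a single uniform point of $\pointset$, so the base case holds. For the inductive step, condition on a common value of $C_t$. Because the costs $\cost(x,C_t)$ --- and hence the rates $\lambda_t(x)$ and $\lambda_t(P_i)=\sum_{x\in P_i}\lambda_t(x)$ --- are deterministic functions of $C_t$ (the partition $\calP$ is fixed once and for all), it suffices to show that the \emph{set of new centers} added in round $t$ has the same conditional distribution given $C_t$ in the two algorithms; together with the inductive hypothesis this shows that the two processes have the same transition kernel and the same initial law, hence the same law.

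For the modified algorithm, fix a cluster $P_i$: we draw $Z_t(P_i)\sim\Pois(\lambda_t(P_i))$ and then place $Z_t(P_i)$ points into $P_i$ independently, each at $x$ with probability $\lambda_t(x)/\lambda_t(P_i)$. By Poisson splitting, the resulting multiplicities $N_t(x)$, $x\in P_i$, are mutually independent with $N_t(x)\sim\Pois(\lambda_t(x))$. Since $P_1,\dots,P_k$ partition $\pointset$ and $Z_t(P_1),\dots,Z_t(P_k)$ are drawn independently, the whole family $\{N_t(x)\}_{x\in\pointset}$ is mutually independent with $N_t(x)\sim\Pois(\lambda_t(x))$. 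A point $x$ is among the new centers in round $t$ exactly when $N_t(x)\geq 1$, which happens with probability $\prob{N_t(x)\geq 1}=1-e^{-\lambda_t(x)}$, independently over $x$. This is precisely the per-point sampling rule of the original $k$-means$\kpp$ algorithm, so the set of new centers, and therefore $C_{t+1}$ given $C_t$, is distributed identically in the two algorithms, completing the induction.

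The point that needs care is that one round of the modified algorithm may place several points at the same $x$, so its raw output is a priori a multiset, whereas the original algorithm samples each $x$ at most once per round. This is harmless precisely because $C_t$ is a \emph{set}: repeated placements collapse, and ``$x$ is added in round $t$'' is the event $\{N_t(x)\geq 1\}$ in both cases. (The degenerate case $\lambda_t(x)=0$, i.e.\ an $x$ that is already a center, is also consistent: then $N_t(x)=0$ almost surely and $1-e^{-\lambda_t(x)}=0$.) If one additionally wishes to match the \emph{ordered} versions of $C_t$ used in the framework of Section~\ref{sec:framework}, it suffices to break ties inside each round by a fixed deterministic rule (say, by the index of the point in $\pointset$); this is a function of the round's center set and does not change anything. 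Beyond this bookkeeping, the lemma is an immediate consequence of Poisson splitting, so I expect no substantial obstacle.
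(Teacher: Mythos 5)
Your proof is correct and takes essentially the same approach as the paper: both hinge on Poisson splitting/thinning, the only difference being that the paper phrases it via continuous-time Poisson point processes on $[0,1]$ and their superposition, whereas you invoke the discrete splitting theorem directly and organize the argument as an explicit induction on rounds. Your remarks on the multiset collapse, the degenerate $\lambda_t(x)=0$ case, and the within-round ordering just make explicit bookkeeping that the paper leaves implicit.
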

\begin{proof}  Consider $|P_i|$ independent Poisson point processes $N_x(a)$ with rates $\lambda_t(x)$, where $x\in P_i$ (here, we use variable $a$ for time). Suppose we add a center $x$ at step $t$ of the algorithm if $N_x(t)\geq 1$. On the one hand, the probability that we choose $x$ is equal to $1-e^{-\lambda_t(x)}$ which is exactly the probability that $k$-means\kpp\; picks $x$ as a center at step $t$. On the other hand, the sum  $N_{P_i}= \sum_{x\in P_i} N_x$ is a Poisson point process with rate $\lambda_t(P_i)$. Thus, the total number of jumps in the interval $[0,1]$ of processes $N_x$ with $x\in P_i$ is distributed as $Z_t(P_i)$. Moreover, the probability that $N_x$ jumps at time $a$ conditioned on the event that $N_{P_i}$ jumps at time $a$ is $\lambda_t(x)/\lambda_t(P_i)$. Thus, for every jump of $N_{P_i}$, we choose one random center $x$ with probability $\lambda_t(x)/\lambda_t(P_i)$.
\end{proof}

\begin{lemma}~\label{lem:parallel_1}
For $k$-means$\parallel$  algorithm with parameter $\ell$, the
following  bounds hold:
\begin{align*}
    \text{for $\ell < k$, }\qquad  \expect{\cost_{t+1}(\total) } &\leq e^{-\frac{\ell}{k}} \cdot \expect{\cost_t(\total)} + 5\opt_k(\pointset);
    \\\text{for $\ell \geq k$, }\qquad \expect{\cost_{t+1}(\total) } &\leq \rbr{\frac{k}{e\ell}} \cdot \expect{\cost_t(\total)} + 5\opt_k(\pointset).
\end{align*}
\end{lemma}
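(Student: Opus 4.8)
The plan is to reduce everything to the modified $k$-means$\kpp$ algorithm from Lemma~\ref{lem:kmpp} and to prove a clean one-round bound cluster by cluster. By the coupling described above (every point sampled by $k$-means$\kpp$ is also sampled by $k$-means$\parallel$) together with Lemma~\ref{lem:kmpp}, it suffices to prove both inequalities for the modified $k$-means$\kpp$ algorithm. First I would fix the state $C_t$, fix an optimal cluster $\cluster_i$, bound $\expect{\cost_{t+1}(\cluster_i)\mid C_t}$, and then sum over $i$ and take expectations over $C_t$.

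For the per-cluster bound, recall that in round $t+1$ the modified algorithm adds $Z_t(\cluster_i)\sim\Pois(\lambda_t(\cluster_i))$ centers drawn i.i.d.\ from $\cluster_i$ with probability proportional to $\cost_t(\cdot)$, where $\lambda_t(\cluster_i)=\ell\,\cost_t(\cluster_i)/\cost_t(\pointset)$. If $Z_t(\cluster_i)=0$ (probability $e^{-\lambda_t(\cluster_i)}$), no new center lies in $\cluster_i$ and, since adding centers never increases any point's cost, $\cost_{t+1}(\cluster_i)\leq\cost_t(\cluster_i)$. If $Z_t(\cluster_i)\geq 1$, let $c$ be the first center drawn from $\cluster_i$; conditioned on $Z_t(\cluster_i)\geq 1$ it is distributed on $\cluster_i$ with probability proportional to $\cost_t(\cdot)$ and independently of the value of $Z_t(\cluster_i)$, so Lemma~\ref{lem:5OPT} applied with $C=C_t$, $P=\cluster_i$ gives $\expect{\cost(\cluster_i,C_t\cup\{c\})\mid C_t,\, Z_t(\cluster_i)\geq 1}\leq 5\,\opt_1(\cluster_i)$, and hence the same bound holds for $\cost_{t+1}(\cluster_i)\leq\cost(\cluster_i,C_t\cup\{c\})$. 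Combining the two cases,
$$\expect{\cost_{t+1}(\cluster_i)\mid C_t}\leq e^{-\lambda_t(\cluster_i)}\cost_t(\cluster_i)+\bigl(1-e^{-\lambda_t(\cluster_i)}\bigr)5\,\opt_1(\cluster_i)\leq e^{-\lambda_t(\cluster_i)}\cost_t(\cluster_i)+5\,\opt_1(\cluster_i).$$
Summing over $i=1,\dots,k$, using $\sum_i\opt_1(\cluster_i)=\opt_k(\pointset)$ and writing $q_i\coloneqq\cost_t(\cluster_i)/\cost_t(\pointset)$ (so $q_i\geq 0$, $\sum_i q_i=1$, and $\lambda_t(\cluster_i)=\ell q_i$),
$$\expect{\cost_{t+1}(\pointset)\mid C_t}\leq\cost_t(\pointset)\sum_{i=1}^{k}q_i e^{-\ell q_i}+5\,\opt_k(\pointset).$$

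Taking expectations over $C_t$, the lemma follows once I establish that $\sum_{i=1}^{k}q_i e^{-\ell q_i}\leq e^{-\ell/k}$ when $\ell<k$ and $\sum_{i=1}^{k}q_i e^{-\ell q_i}\leq k/(e\ell)$ when $\ell\geq k$, for all $q_i\geq 0$ with $\sum_i q_i=1$. The case $\ell\geq k$ is immediate: each summand is at most $\max_{q\geq 0}q e^{-\ell q}=1/(e\ell)$, and there are $k$ of them. The hard part will be the case $\ell<k$; here I would use a tangent-line argument for $g(q)\coloneqq q e^{-\ell q}$ at the point $q=1/k$. Writing $L$ for that tangent line, one checks $g'(1/k)=e^{-\ell/k}(1-\ell/k)\geq 0$ and $1/k<2/\ell$; since $g''(q)=\ell e^{-\ell q}(\ell q-2)$, $g$ is concave on $[0,2/\ell]$ and convex on $[2/\ell,\infty)$, so $h\coloneqq L-g$ is convex on $[0,2/\ell]$, where $h(1/k)=h'(1/k)=0$ forces $h\geq 0$, and concave on $[2/\ell,\infty)$, where $h'(q)=g'(1/k)-g'(q)$ decreases to the nonnegative limit $g'(1/k)$ so that $h$ is nondecreasing and stays $\geq h(2/\ell)\geq 0$. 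Hence $g(q)\leq L(q)$ for all $q\geq 0$, and summing gives $\sum_i g(q_i)\leq\sum_i L(q_i)=k\,g(1/k)+g'(1/k)\bigl(\sum_i q_i-1\bigr)=k\,g(1/k)=e^{-\ell/k}$. Apart from this short calculus step, the only point requiring care is verifying that the first center drawn from $\cluster_i$ is genuinely distributed proportionally to $\cost_t(\cdot)$ conditioned only on $Z_t(\cluster_i)\geq 1$, which is precisely what makes Lemma~\ref{lem:5OPT} applicable.
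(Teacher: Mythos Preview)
Your proof is correct and follows the same overall arc as the paper's (reduce to $k$-means$\kpp$, bound each optimal cluster's next-round cost, then sum and use concavity of $q\mapsto qe^{-\ell q}$), but you organize it more cleanly in two respects. First, the paper splits $\cost_{t+1}(\pointset)$ into the uncovered part $U_{t+1}(\pointset)$ and the covered part $H_{t+1}(\pointset)$, bounding the former via the Poisson-zero event and the latter via the supermartingale property $\expect{H_{t+1}(\pointset)}\leq 5\opt_k(\pointset)$; you instead bound $\cost_{t+1}(\cluster_i)$ directly by conditioning on whether \emph{this round} deposits a center in $\cluster_i$ and invoking Lemma~\ref{lem:5OPT} on the first such center, which is more direct and sidesteps the supermartingale machinery entirely. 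Second, for the inequality $\sum_i q_i e^{-\ell q_i}\leq e^{-\ell/k}$ when $\ell<k$, the paper builds the concave majorant $g$ of $x\mapsto xe^{-x}$ (equal to $xe^{-x}$ on $[0,1]$ and constant $e^{-1}$ thereafter) and applies Jensen, which handles both regimes $\ell<k$ and $\ell\geq k$ in one stroke; your tangent-line argument at $q=1/k$ reaches the same conclusion and is equally valid, though it requires the separate case analysis on $[0,2/\ell]$ versus $[2/\ell,\infty)$ that you carry out. Both approaches yield the identical bound; yours is slightly more self-contained, the paper's slightly more uniform.
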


\begin{proof}

Since the expected cost returned by $k$-means$\parallel$ is at most the expected cost returned by $k$-means$\kpp$, we analyze the expected cost of the clustering after one step of $k$-means\kpp.

If the algorithm covers cluster $P_i$ at round $t$, then at the next round, its uncovered cost equals $0$. The number of centers chosen in $P_i$ is determined by the Poisson random variable $Z_{t+1}(P_i)$. Hence, $P_i$ is uncovered at round $t+1$ only if $Z_{t+1}(P_i)=0$. Since $U_t(\cluster_i)$ is non-increasing in $t$ and $U_t(\cluster_i) \leq \cost_t(\cluster_i)$, we have
\begin{align*}
    \expect{U_{t+1}(P_i) \mid \centerset_t} \leq  \prob{Z_{t+1}(\cluster_i) = 0} U_t(\cluster_i) \leq \exp\rbr{- \frac{\ell\, \cost_t(\cluster_i)}{\cost_t(\total)}} \cost_t(\cluster_i).
\end{align*}
Define two function: $f(x) = e^{-x}\cdot x$; and $g(x) = f(x)$ for $x \in [0,1]$ and
$g(x) =e^{-1}$ for $x \in [1,\infty)$. Then,
\[ \expect{U_{t+1}(\pointset) \mid \centerset_t} \leq \rbr{\frac{1}{k}\sum_{i=1}^k f\rbr{\frac{\ell \cost_t(\cluster_i)}{\cost_t(\pointset)}}} \frac{k\cost_t(\pointset)}{\ell}. \]
Since $g(x)\leq f(x)$, and $g(x)$ is concave for $x\geq 0$, we have
\begin{align*}
    \expect{U_{t+1}(\pointset) \mid \centerset_t}
    \leq \rbr{\frac{1}{k} \sum_{i=1}^k g\rbr{\frac{\ell\; \cost_t(\cluster_i)}{\cost_t(\pointset)}}} \frac{k\cost_t(\pointset)}{\ell} \leq g\rbr{\frac{\ell}{k}} \frac{k \cost_t(\pointset)}{\ell}.
\end{align*}
Here, we use that $\sum_i \cost_t(P_i) = \cost_t(\pointset)$.

Therefore, for $ \ell \leq k$, we have
$$
    \expect{U_{t+1}(\pointset) \mid \centerset_t} \leq \rbr{e^{-\frac{\ell}{k}}} \cost_t(\pointset);
$$
and for $\ell \geq k$, we have
$$
    \expect{U_{t+1}(\pointset) \mid \centerset_t} \leq \rbr{\frac{k}{e\ell}} \cost_t(\pointset).
$$

Similar to Corollary~\ref{cor:5opt-martingale}, the process $\HH_t(P)$ for $k$-means$\kpp$ is also a supermartingale, which implies $\expect{H_{t+1}(\pointset) } \leq 5\opt_k(\pointset)$.
This concludes the proof.
\end{proof}

\begin{proof}[Proof of Theorem~\ref{thm:kpp_main}]
Applying the bound from Lemma~\ref{lem:parallel_1} for $t$ times, we get the following results. For $\ell \leq k$,
$$
    \expect{\cost_{t+1}(\total)} \leq \rbr{e^{-\frac{\ell}{k}}}^t \expect{\cost_1(\total)} + 5 \opt_k(\pointset) \eta_t,
$$
where
$
    \eta_t = \sum_{j=1}^{t} \rbr{e^{-\frac{\ell}{k}}}^{j-1} < \frac{1}{1-e^{-\frac{\ell}{k}}}.
$ For $\ell \geq k$,
$$
    \expect{\cost_{t+1}(\total)} \leq \rbr{\frac{k}{e\ell}}^t \expect{\cost_1(\total)} + 5 \opt_k(\pointset) \eta_t,
$$
where
$
    \eta_t = \sum_{j=1}^{t} \rbr{\frac{k}{e\ell}}^{j-1} \leq \frac{1}{1-\frac{k}{e\ell}}.
$
\end{proof}

\begin{corollary}~\label{cor:kpp_9opt}
Consider a data set $\pointset$ with more than $k$ distinct points. Let
$$T = \ln\E\bigg[\frac{\cost_1(\total)}
{\opt_k(\pointset))}\bigg]$$
and $\ell > k$. Then, after $T$ rounds of $k$-means$\parallel$, the expected cost of clustering $\expect{\cost_T(\total)}$ is at most $9\opt_k(\pointset)$.
\end{corollary}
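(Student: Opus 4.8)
The plan is to obtain this corollary as an immediate consequence of the second bound in Theorem~\ref{thm:kpp_main} (the case $\ell\geq k$) by substituting the prescribed value of $T$. First I would check that $T$ is well defined: since $\pointset$ contains more than $k$ distinct points, no set of $k$ centers can coincide with all of them, so $\opt_k(\pointset)>0$; moreover $\cost_1(\total)\geq \opt_1(\pointset)\geq \opt_k(\pointset)$ holds deterministically, hence $\E[\cost_1(\total)]\geq \opt_k(\pointset)$ and therefore $T=\ln\E[\cost_1(\total)/\opt_k(\pointset)]$ is a finite nonnegative number. (If one insists on an integer number of rounds, one replaces $T$ by $\lceil T\rceil$; running $k$-means$\parallel$ for more rounds only decreases the expected cost, and every inequality below uses only $T\geq \ln\E[\cost_1(\total)/\opt_k(\pointset)]$, so nothing changes.)

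Next I would apply Theorem~\ref{thm:kpp_main} with $\ell\geq k$ (valid since $\ell>k$): the expected cost after $T$ rounds satisfies
$$
\expect{\cost_{T+1}(\total)}\;\leq\;\rbr{\frac{k}{e\ell}}^{T}\expect{\cost_1(\total)}\;+\;\frac{5\,\opt_k(\pointset)}{1-\nicefrac{k}{e\ell}}.
$$
For the first term, $\ell>k$ gives $k/(e\ell)<1/e$, so (using $T\geq 0$) $\rbr{k/(e\ell)}^{T}\leq e^{-T}$; and by the choice of $T$ we have $e^{-T}=\opt_k(\pointset)/\E[\cost_1(\total)]$, whence $\rbr{k/(e\ell)}^{T}\E[\cost_1(\total)]\leq \opt_k(\pointset)$.

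For the additive term, $\ell>k$ gives $1-k/(e\ell)>1-1/e$, so $\dfrac{5\,\opt_k(\pointset)}{1-k/(e\ell)}<\dfrac{5e}{e-1}\,\opt_k(\pointset)<8\,\opt_k(\pointset)$; this is exactly the remark following Theorem~\ref{thm:kpp_main}. Adding the two estimates yields $\expect{\cost_{T+1}(\total)}\leq \opt_k(\pointset)+8\,\opt_k(\pointset)=9\,\opt_k(\pointset)$, which is the claimed bound (with $\cost_{T+1}$ being the cost after $T$ rounds in the indexing of Section~\ref{sec:po-kmeans-parallel}).

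There is essentially no hard step here: the whole argument is a substitution into Theorem~\ref{thm:kpp_main}. The only points requiring a moment's care are (i) verifying that $T$ is a well-defined, finite, nonnegative quantity — which is precisely what the ``more than $k$ distinct points'' hypothesis guarantees, (ii) the harmless rounding of $T$ up to an integer, and (iii) the elementary numerical estimate $5e/(e-1)<8$ used to turn the additive term into $8\,\opt_k(\pointset)$.
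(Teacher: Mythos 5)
Your proof is correct and is exactly the direct substitution into Theorem~\ref{thm:kpp_main} that the paper intends (the paper gives no separate proof for this corollary). You also correctly flag the small details the paper glosses over: the ``more than $k$ distinct points'' hypothesis ensures $\opt_k(\pointset)>0$ so $T$ is finite, the fact that $\cost_1(\total)\geq\opt_k(\pointset)$ deterministically gives $T\geq 0$, the rounding of $T$ to an integer number of rounds is harmless, the shift between $\cost_T$ and $\cost_{T+1}$ is just indexing, and $5e/(e-1)<8$ closes the additive term.
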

\section{\texorpdfstring{Exponential Race $k$-means++ and Reservoir Sampling}{Exponential Race k-means++ and Reservoir Sampling}}\label{sec:pois-kmeans-pp}

In this section, we show how to implement $k$-means++
algorithm in parallel using $R$ passes over the data set. This implementation, which we refer to as $k$-means$\kpois$ (exponential race $k$-means++), is very similar to $k$-means$\parallel$, but has stronger theoretical guarantees. Like $k$-means$\parallel$, in every round,
$k$-means$\kpois$ tentatively selects $\ell$ centers, in expectation. However, in the same round, it removes some of the just selected centers (without making another pass over the data set). Consequently, by the end of each iteration, the algorithm keeps at most $k$ centers.

We can run $k$-means$\kpois$ till it samples exactly
$k$ centers; in which case, the distribution of
$k$ sampled centers is identical to the distribution of the regular $k$-means++, and the expected number of rounds or
passes over the data set $R$ is upper bounded by
$$O\bigg(\frac{k}{\ell}+ \log\frac{\OPT_1(\pointset)}{\OPT_k(\pointset)}\bigg).$$
We note that $R$ is never greater than $k$. We can also run this algorithm for at most $R^*$ rounds. Then, the expected cost of the clustering is at most
$$
5 (\ln k + 2) \OPT_k(\pointset) +
5R^*\bigg(\frac{4k}{e\ell R^*}\bigg)^{R^*}\cdot \OPT_1(\pointset).
$$

\subsection{Algorithm}
In this section, we give a high level description of our $k$-means$\kpois$ algorithm. In Section~\ref{sec:lazy}, we show how to efficiently implement $k$-means$\kpois$ using lazy updates and explain why our algorithm makes $R$ passes over the data set.

The algorithm simulates $n$ continuous-time stochastic processes. Each stochastic process is associated with one of the points in the data set. We denote the process corresponding to $x\in \pointset$ by $P_t(x)$. Stochastic process $P_t(x)$ is a Poisson process with variable arrival rate $\lambda_t(x)$.

The algorithm  chooses the first center $c_1$ uniformly at random in $\pointset$ and sets the arrival rate of each process $P_t(x)$ to be $\lambda_t(x)=\cost(x,\{c_1\})$. Then, it waits till one of the Poisson processes $P_t(x)$ jumps. When process $P_t(x)$ jumps, the algorithm adds the point $x\in\pointset$ (corresponding to that process) to the set of centers $C_t$ and updates the arrival rates of all processes to be
$$\lambda_t(y)=\cost(y,C_t)$$
for all $y\in \pointset$. Note that if $y$ is a center, then the arrival rate  $\lambda_t(y)$ is $0$.

The algorithm also maintains a round counter $R$. In the lazy version of this algorithm (which we describe in the next section), the algorithm makes a pass over the data set and samples a new batch of centers every time this counter is incremented. Additionally, at the end of each round, the algorithm checks if it chose at least one center in that round, and in the unlikely event that it did not, it selects one center with probability proportional to the costs of the points.

Initially, the algorithm sets $R=0$, $t_0 = 0$, and $t_1 =\ell/\cost(\pointset,\{c_1\})$. Then, at each time point $t_i$ ($i \geq 1$), we increment $R$ and compute
$$t_{i+1}= t_i + \ell / \cost(\pointset,C_{t_i}),$$
where $C_{t_i}$ is the set of all centers selected before time $t_i$. We refer to the time frame
$[t_{i-1},t_{i}]$ for $i\geq 1$ as the $i$-th round. The algorithm stops when one of the following conditions holds true (1)~the number of sampled centers is $k$; or (2) the round counter $R$ equals the prespecified threshold
$R^*$, which may be finite or infinite.

Before analyzing this algorithm, we mention that every Poisson process $P_t$ with a variable arrival rate $\lambda_t$ can be coupled with a Poisson process $Q_s$ with rate $1$. To this end, we substitute the variable
$$s(t) = \int_0^t \lambda_{\tau}d\tau,$$
and let
$$P_t \equiv Q_{s(t)}.$$
Observe that the expected number of arrivals for process $Q_s$ in the infinitesimal interval
$[s,s+ds]$ is $ds = \lambda_{t}dt$ which is exactly the same as for process $P_t$.

It is convenient to think about the variables $s$ as ``current position'', $t$ as ``current time'', and $\lambda_{t}$ as ``current speed'' of $s$. To generate process $P_t(x)$, we can first generate Poisson process $Q_s(x)$ with arrival rate $1$ and then move the position $s_t(x)$ with speed $\lambda_t(x)$. The process $P_t(x) = Q_{s_t(x)}(x)$ is a Poisson process with variable arrival rate $\lambda_t(x)$.

\medskip

\begin{theorem}\label{thm:main-poisson-kpp}
I. If the number of rounds is not bounded (i.e., $R^*=\infty$), then the distribution of centers returned by $k$-means$\kpois$ is identical to the distribution of centers returned by $k$-means++.

II. Moreover, the expected number of rounds $R$ is upper bounded by
$$(1+o_k(1))\cdot\bigg(\ceil{\frac{k}{\ell}} + \log\frac{2\OPT_1(\pointset)}{\OPT_k(\pointset)} \bigg),$$
and never exceeds $k$.

III. If the threshold $R^*$ is given ($R^*<\infty$), then the cost of the solution after $R^*$ rounds is upper bounded by
$$
5 (\ln k + 2) \OPT_k(\pointset) + 2 R^{*}
\bigg(\frac{4k}{e\ell R^*}\bigg)^{R^*}\cdot \OPT_1(\pointset).
$$
\end{theorem}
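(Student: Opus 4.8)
The plan is to prove the three parts essentially independently, relying on the coupling between variable-rate Poisson processes and rate-$1$ Poisson processes introduced just before the statement.

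For Part I, the key observation is that the ``current position'' variables $s_t(x) = \int_0^t \lambda_\tau(x)\,d\tau$ together with the rate-$1$ processes $Q_s(x)$ give a time-change representation of the algorithm. Between consecutive center insertions the rates $\lambda_t(x) = \cost(x, C_t)$ are constant, so the first process to jump next is exactly the one whose next rate-$1$ arrival, measured in its own $s$-clock, is reached first; since $ds_t(x)/dt = \lambda_t(x)$, the point $x$ that jumps next is $x$ with probability proportional to $\cost(x, C_t)/\cost(\pointset, C_t)$ — precisely the $k$-means++ sampling rule. The removal of just-selected centers within a round does not affect which points are ever added, only the bookkeeping of rounds; and the ``safety net'' step that forces at least one center per round fires with probability $0$ in the unbounded-round regime. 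So by induction on the number of centers chosen, the ordered set $C_t$ has the same law as in Algorithm~\ref{alg:kmeans++}. (This is the same style of argument as Lemma~\ref{lem:kmpp}.)

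For Part II, I would bound the total $s$-clock ``work'' done by the algorithm. At round boundary $t_i$, the position advances by $t_{i+1}-t_i$ in real time at aggregate speed $\cost(\pointset, C_{t_i})$, so each round consumes exactly $\ell$ units of aggregate position; hence the number of rounds is $\ell^{-1}$ times the total aggregate position consumed before $k$ centers are drawn. The total aggregate position consumed is $\sum_{\text{rounds}} \ell$, but it is cleaner to integrate: the aggregate position at the moment the $j$-th center is chosen is, in expectation, controlled because $\cost_t(\pointset)$ is a supermartingale-like quantity whose expectation after covering clusters is at most $5\OPT_k(\pointset)$ plus a geometrically-decaying term (Theorem~\ref{thm:kpp_main} / Lemma~\ref{lem:parallel_1}). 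Concretely, the expected number of rounds splits into (a) the rounds needed to bring the cost down from $\OPT_1(\pointset)$ to $O(\OPT_k(\pointset))$, which is $O(\log(\OPT_1/\OPT_k))$ by the geometric decay with ratio $k/(e\ell)<1$, and (b) once the cost is $\Theta(\OPT_k)$, the remaining centers are each drawn after $\Theta(\OPT_k/(\text{current cost}))$ expected position, and since we still need $\le k$ centers while each round supplies $\approx \ell$ of them in expectation, this contributes $O(k/\ell)$ rounds; adding the $\ceil{k/\ell}$ and the $\log(2\OPT_1/\OPT_k)$ with the right constants and a $(1+o_k(1))$ slack gives the claimed bound. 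The bound $R\le k$ is immediate since each round adds at least one center (the safety net guarantees it) and the algorithm stops at $k$ centers. I expect the main obstacle to be exactly this step: making the two-phase accounting rigorous requires carefully handling the conditional expectations of $\cost_t(\pointset)$ across rounds, the optional-stopping argument for when the $k$-th center is drawn, and the fact that $Z_t(P_i)$ being Poisson means a round can overshoot past $k$ centers — one must argue the overshoot costs only $O(1)$ extra rounds in expectation.

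For Part III, run the algorithm for $R^*$ rounds. By Part I truncated at $R^*$, the centers kept are a prefix of the $k$-means++ order, so with probability $1$ the number of centers is at most $k$ and their cost is at least that of a full $k$-means++ run — hence $\E[\cost] \le 5(\ln k + 2)\OPT_k(\pointset)$ on the event that all $k$ centers were chosen within $R^*$ rounds, by the Theorem following Lemma~\ref{lem:5OPT:integral}. On the complementary event (fewer than $k$ centers, cost still large), I would bound the cost using the $k$-means$\parallel$-style decay: after $R^*$ rounds with $\approx \ell$ centers per round, $\E[U_{R^*}(\pointset)] \lesssim (k/(e\ell'))^{R^*}\OPT_1(\pointset)$ where $\ell' \approx \ell R^*/?$ — more precisely, since the algorithm behaves like $k$-means$\parallel$ with the per-round sampling intensities scaled so that $R^*$ rounds sample $\approx \ell R^*$ candidates total but cap at $k$, a Markov/Chernoff argument on the Poisson counts gives the factor $(4k/(e\ell R^*))^{R^*}$, and the extra $2R^*$ factor comes from summing the per-round failure contributions via a union bound over the $R^*$ rounds (each round's ``bad event'' of choosing too few centers contributes an $\OPT_1$-sized term weighted by its probability). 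Combining the two events with the trivial bound $\cost \le \OPT_1$ on the bad event yields the stated additive form. The delicate point here is calibrating the constant $4$ and the exponent: it should follow from bounding $\prob{\text{round samples } < k/R^* \text{ centers}}$ for a Poisson$(\ge \ell)$ variable, i.e. a Poisson lower-tail estimate of the form $\prob{\Pois(\ell) \le m} \le (e\ell/m)^m e^{-\ell}$, chained over rounds.
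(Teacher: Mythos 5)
Parts I and the ``$R\le k$'' claim are fine, but your Part~II contains a genuine error that propagates into Part~III, and in both places you are missing the paper's central technical device.

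In Part~II you assert that ``each round consumes exactly $\ell$ units of aggregate position.'' This is false. The aggregate speed $\lambda_\tau(\pointset)=\cost(\pointset,C_\tau)$ \emph{decreases} during the round as new centers are added, so the position advance over round $i$ is
$\int_{t_i}^{t_{i+1}}\lambda_\tau(\pointset)\,d\tau \le (t_{i+1}-t_i)\,\lambda_{t_i}(\pointset)=\ell$,
with equality only if no center is added. In early rounds, when the cost drops by a large factor within a round, the advance can be far smaller than $\ell$; this is precisely where the $\log(\OPT_1/\OPT_k)$ term comes from. Consequently, ``the number of rounds is $\ell^{-1}$ times the total position consumed'' is not valid as an upper bound on $R$, and your two-phase accounting (borrowing the geometric cost-decay from Theorem~\ref{thm:kpp_main}, which is proved for $k$-means$\parallel$, not for the capped $k$-means$\kpois$ process) does not rescue it. The paper's argument instead lower-bounds the position advance in round $i$ by $\ell\cdot\cost(\pointset,C_{t_{i+1}})/\cost(\pointset,C_{t_i})$, sums over rounds, and applies AM--GM to the telescoping product to get
$s_{R'}(\pointset)\ge \ell R'\bigl(\cost(\pointset,C_{t_{R'}})/\cost(\pointset,\{c_1\})\bigr)^{1/R'}\ge \ell R'\bigl(\OPT_k/\cost(\pointset,\{c_1\})\bigr)^{1/R'}$,
and then chooses $R'$ so that this is $\ge 2k$ and finishes with a Poisson Chernoff bound. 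This AM--GM telescoping step is the idea your sketch is missing; without it the bound does not follow.

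Part~III has the right high-level decomposition into the event $\calE$ that $k$ centers are chosen within $R^*$ rounds and its complement, and the bound $\E[\cost\cdot\one(\calE)]\le 5(\ln k+2)\OPT_k$ is handled correctly. But the $\bar\calE$ term is not obtained by a round-by-round union bound or by chaining per-round Poisson tails; the parameters change across rounds in a way you cannot chain naively, and the stated $2R^*(4k/(e\ell R^*))^{R^*}$ factor does not fall out of that route. The paper again uses the same $s_{R^*}$ lower bound from AM--GM, defines the event $\calD_\rho=\{\cost(\pointset,C_T)\ge(\rho k/(\ell R^*))^{R^*}\cost(\pointset,\{c_1\})\}$, shows $\Pr(\bar\calE\wedge\calD_\rho\mid c_1)\le e^{-(\rho-1)k}\rho^{k-1}$ by a single Poisson Chernoff bound at parameter $\rho k$, and then integrates over $\rho$ (Lemma~\ref{lem:rho-integral}) to get the factor $R^*(4/e)^{R^*-2}$. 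You would need to discover this change of variables and the ensuing integral estimate; a union bound over rounds does not produce it.

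One small point in Part~I: the safety-net step does \emph{not} fire with probability zero (a round can contain no Poisson jump with small but positive probability). Your conclusion still holds because, by memorylessness, the safety-net selection is distributed proportionally to current costs and hence is also a valid $k$-means++ step, but the justification should be the memorylessness argument, not a probability-zero claim.
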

\begin{proof}[Proof of Part I]
For the sake of analysis, we assume that after the algorithm outputs solution $C$, it does not terminate, but instead continues to simulate Poisson processes $P_t(x)$.
It also continues to update the set $C_t$ (but, of course, not the solution) and the arrival rates $\lambda_t(x)$ till the set $C_t$ contains $k$ centers. Once $|C_t|=k$, the algorithm stops updating the set of centers $C_t$ and arrival rates but still simulates
continuous-time processes $P_t(x)$. Clearly, this additional phase of the algorithm does not affect the solution since it starts after the solution is already returned to the user.

We prove by induction on $i$ that the first $i$ centers $c_1,\dots,c_i$ have exactly the same joint distribution as in $k$-means++. Indeed, the first center $c_1$ is drawn uniformly at random from the data set $\pointset$ as in $k$-means++. Suppose centers $c_1,\dots,c_i$ are already selected. Then, we choose the next center $c_{i+1}$ at the time of the next jump of one of the Poisson processes $P_t(x)$.
Observe that the conditional probability that a particular process $P_t(x)$ jumps given that one of the processes $P_t(y)$ ($y\in \pointset$) jumps is proportional to $\lambda_t(x)$, which in turn equals the current $\cost(x,C_t)$ of point $x$. Hence, the distribution of center $c_{i+1}$ is the same as in $k$-means++. This completes the proof of item I.
\end{proof}
\begin{proof}[Proof of Part II]
We now show items II and III. Define process
$$P_t(\pointset)=\sum_{x\in \pointset} P_t(x).$$
Its rate $\lambda_t(\pointset)$ equals $\sum_{x\in \pointset} \lambda_t(x)$. We couple this process
with a Poisson $Q_s(\pointset)$ with arrival rate $1$ as discussed above. We want to estimate the number of centers chosen by the algorithm in the first $R'$ rounds. To this end, we count the number of jumps of the Poisson process $P_t(\pointset)$ (recall that we add a new center to $C_t$ whenever $P_t(\pointset)$ jumps unless $|C_t|$ already contains $k$ centers). The number of jumps equals $P_{t_{R'}}$ which, in turn, equals $Q_{s_{R'}}$ where $s_{R'}(\pointset)$ is the position of $s(\pointset)$
at time $t_{R'}$:
$$
s_{R'}(\pointset) = \int_{0}^{t_{R'}} \lambda_{\tau}(\pointset)\; d\tau =
\sum_{i=0}^{R'-1}\int_{t_i}^{t_{i+1}}
\lambda_{\tau}(\pointset) \;d\tau \geq
\sum_{i=0}^{R'-1} (t_{i+1}-t_i)\cdot \lambda_{t_{i+1}}(\pointset).
$$
Here, we used that $\lambda_{t}(\pointset)$ is non-increasing, and thus,
$\lambda_{t_{i+1}}(\pointset)\leq \lambda_{\tau}(\pointset)$ for all $\tau \in [t_i,t_{i+1}]$. We now recall that $(t_{i+1}-t_i)= \ell / \cost(\pointset, C_{t_i})$ and
$\lambda_{t_{i+1}}(\pointset) = \cost(\pointset, C_{t_{i+1}})$. Hence,
$$
s_{R'}(\pointset) \geq \ell \sum_{i=0}^{R'-1} \frac{\cost(\pointset, C_{t_{i+1}})}{\cost(\pointset, C_{t_{i}})}.
$$
By the inequality of arithmetic and geometric means, we have
\begin{align}
\label{eq:sR}
s_{R'}(\pointset) &\geq \ell \cdot R' \Bigg(\prod_{i=0}^{R'-1} \frac{\cost(\pointset, C_{t_{i+1}})}{\cost(\pointset, C_{t_{i}})}\Bigg)^{\nicefrac{1}{R'}} =
\ell \cdot R' \Bigg(\frac{\cost(\pointset, C_{t_{R'}})}{\cost(\pointset, C_{t_{0}})}\Bigg)^{\nicefrac{1}{R'}}
\\
\nonumber
&=
\ell \cdot R' \Bigg(\frac{\cost(\pointset, C_{t_{R'}})}{\cost(\pointset, \{c_1\})}\Bigg)^{\nicefrac{1}{R'}}.
\end{align}

We now use this equation to prove items II and III. For item II, we
let random variable $R'$ to be
$$R'= 2e\ceil{k/\ell} +
\log \frac{\cost(\pointset, \{c_1\})}{\OPT_k(\pointset)}.$$
Note that $R'$ depends on the first center $c_1$ (which is chosen in the very beginning of the algorithm) but not on the
Poisson processes $P_t(x)$. Since, $C_t$ always contains at most $k$ centers, we have $\cost(x, C_{t_{R'}})\geq \OPT_k(\pointset)$, and consequently
$$s_{R'}(\pointset) \geq
\ell \cdot R' \Bigg(\frac{\OPT_k(\pointset)}{\cost(\pointset, \{c_1\})}\Bigg)^{\nicefrac{1}{R'}}>
\ell \cdot 2e\ceil{k/\ell} \cdot \nicefrac{1}{e}
\geq 2k.$$
The expected number of jumps of the Poisson process $Q_{s}(\pointset)$ in the interval $[0,s_{R'}(\pointset)]$ equals $Q_{s_R(\pointset)}(\pointset)$. Observe that
$$Q_{s_R(\pointset)}(\pointset)\geq Q_{2k}(\pointset)$$
and $Q_{2k}(\pointset)$ is a Poisson random variable with
parameter $2k$. By the Chernoff bound%
\footnote{We use the bound $\Pr\{P \leq k\}\leq e^{-\lambda}\big(e\lambda/k\big)^k$, where $P$ is a Poisson random variable with parameter $\lambda$ and $k< \lambda$. See e.g., Theorem 5.4.2 in~\citet{MU-Book}.},
it makes fewer than $k$ jumps with exponentially small probability in $k$; namely, with probability at most $(e/2)^{-k}$. Thus, with probability at least $1 - (e/2)^{-k}$, the algorithm selects $k$ centers in the first $R'$ rounds. Moreover, if it does not happen in the first $R^*$ rounds, then it selects $k$ centers by the end of the second $R'$ rounds again with  probability at least $1 - (e/2)^{-k}$ and so on. Hence, the expected number of rounds till it selects $k$ centers is $(1+o_k(1))R'$. Finally, observe that the expectation of $\cost(\pointset, \{c_1\})$ over the choice of the first center equals $2\OPT_k(\pointset)$. Since $\log(\cdot)$ is a convex function, we have
$$\E[R']\leq 2e\ceil{k/\ell} +
\log \frac{2\OPT_1(\pointset)}{\OPT_k(\pointset)}.$$
Therefore, we showed that the expected number of rounds is upper bounded by
the right hand side of the expression above times a multiplicative factor
of $(1+o_k(1))$. A slightly more careful analysis gives a bound of
$$(1+o_k(1))\Bigg(e\ceil{k/\ell} +
\log \frac{2\OPT_1(\pointset)}{\OPT_k(\pointset)}\Bigg).$$
This concludes the proof of item II.
\end{proof}
\begin{proof}[Proof of Part III]
We now prove item III. Denote $T = t_{R^*}$.
Consider the event
$$
\mathcal{E} = \big\{\text{algorithm samples $k$ centers in the first $R^*$ rounds}\big\}.
$$
Let $\bar{\calE}$ be the complimentary events to $\calE$.
Then,
$$
\E\big[\cost(\pointset, C_T)\big]
=
\E\big[\cost(\pointset, C_T) \cdot \one(\calE)\big] +
\E\big[\cost(\pointset, C_T) \cdot \one(\bar{\calE})\big].
$$
We now separately upper bound each of the terms on the right hand side. It is easy to upper bound the first term:
$$\E[\cost(\pointset, C_{T}) \cdot \one(\calE)]\leq 5(\ln k +2)\cdot \OPT_k(\pointset),$$
because the distribution of centers returned by
$k$-means$\kpois$ is identical to the distribution of centers
returned by $k$-means++. We now bound the second term.
Denote by $\calD_{\rho}$ the event
$$
\calD_{\rho} = \bigg\{\cost(\pointset, C_T)\geq \Big(\frac{\rho k}{\ell R^*}\Big)^{R^*} \cost(\pointset, \{c_1\})\bigg\}.
$$

We prove the following claim.
\begin{claim}\label{cl:prob-more-rho}
The following inequality holds for every real number $\rho\in[1,\ell R^*/k]$ and any choice of the first center $c_1$:
$$\Pr\big(\bar\calE \text{ and } \calD_{\rho}\mid c_1\big)
\leq e^{-(\rho -1) k}\rho^{k-1}.$$
\end{claim}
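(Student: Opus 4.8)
The plan is to deduce Claim~\ref{cl:prob-more-rho} from inequality~(\ref{eq:sR}) together with the Chernoff bound for Poisson random variables recalled in the proof of Part~II, by comparing the two events in the claim to a single event about the rate-$1$ Poisson process realization-by-realization, so that no independence argument is needed. As in Parts~I--II, I would work with the extended algorithm that keeps simulating the processes $P_t(x)$ and updating $C_t$ and the rates $\lambda_t(\cdot)$ past its stopping time, so that~(\ref{eq:sR}) holds for every realization. Fix the first center $c_1$ and apply~(\ref{eq:sR}) with $R' = R^*$.

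First I would establish the pointwise inclusion
$$
\bar\calE \cap \calD_\rho \;\subseteq\; \bigl\{\, Q_{\rho k}(\pointset) \le k-1 \,\bigr\},
$$
where $Q_s(\pointset)$ is the rate-$1$ Poisson process coupled to $P_t(\pointset)$ via $s(t) = \int_0^t \lambda_\tau(\pointset)\,d\tau$. On the event $\calD_\rho$, the ratio $\cost(\pointset, C_{t_{R^*}}) / \cost(\pointset, \{c_1\})$ is at least $(\rho k / (\ell R^*))^{R^*}$, so~(\ref{eq:sR}) with $R' = R^*$ gives $s_{R^*}(\pointset) \ge \ell R^* \cdot \rho k/(\ell R^*) = \rho k$. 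On the event $\bar\calE$, fewer than $k$ centers are sampled during the first $R^*$ rounds; since each jump of $P_t(\pointset)$ in $[0, t_{R^*}]$ adds a distinct center and the (rare) forced centers only increase the total count, the number of such jumps --- which equals $Q_{s_{R^*}(\pointset)}(\pointset)$ --- is at most $k-1$. Because $s \mapsto Q_s(\pointset)$ is non-decreasing and $s_{R^*}(\pointset) \ge \rho k$, I get $Q_{\rho k}(\pointset) \le Q_{s_{R^*}(\pointset)}(\pointset) \le k-1$ on $\bar\calE \cap \calD_\rho$, which is the claimed inclusion.

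Then I would finish as follows. Conditioned on $c_1$, the variable $Q_{\rho k}(\pointset)$ is Poisson with mean $\rho k$ (this is exactly the time-change coupling already set up in the paper), and since $\rho \ge 1$ we have $k-1 < \rho k$, so the Poisson Chernoff bound gives
$$
\prob{\bar\calE \text{ and } \calD_\rho \mid c_1} \;\le\; \prob{Q_{\rho k}(\pointset) \le k-1} \;\le\; e^{-\rho k}\Bigl(\tfrac{e \rho k}{k-1}\Bigr)^{k-1}.
$$
Using $\bigl(\tfrac{k}{k-1}\bigr)^{k-1} = \bigl(1 + \tfrac1{k-1}\bigr)^{k-1} \le e$,
$$
e^{-\rho k}\Bigl(\tfrac{e \rho k}{k-1}\Bigr)^{k-1}
= e^{-\rho k + k - 1}\,\rho^{k-1}\Bigl(\tfrac{k}{k-1}\Bigr)^{k-1}
\le e^{-\rho k + k}\,\rho^{k-1}
= e^{-(\rho-1)k}\,\rho^{k-1},
$$
which is exactly the bound in the claim; the degenerate case $k = 1$ is immediate, since then $\bar\calE \cap \calD_\rho$ forces $Q_{\rho k}(\pointset) = 0$, an event of probability $e^{-\rho} \le e^{1-\rho}$.

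The hard part is the inclusion in the second paragraph: $s_{R^*}(\pointset)$ is itself a functional of the entire Poisson realization, so it is tempting (but incorrect) to argue via independence between it and $Q$. The right observation is that $\{\, s_{R^*}(\pointset) \ge \rho k \,\} \cap \{\, Q_{s_{R^*}(\pointset)}(\pointset) \le k-1 \,\} \subseteq \{\, Q_{\rho k}(\pointset) \le k-1 \,\}$ holds pointwise purely because $s \mapsto Q_s(\pointset)$ is monotone, after which the estimate is a one-line application of the Poisson Chernoff inequality. A secondary point worth stating carefully is the translation of ``the first $R^*$ rounds produce fewer than $k$ centers'' into ``fewer than $k$ jumps of $P_t(\pointset)$'': forced centers only add to the count, so the bound on the number of jumps is unaffected. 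Note also that only the hypothesis $\rho \ge 1$ is used (it guarantees $k-1 < \rho k$); for $\rho$ above $\ell R^*/k$ the event $\calD_\rho$ is empty, so the restriction on $\rho$ in the claim is harmless.
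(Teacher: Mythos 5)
Your proof is correct and follows essentially the same route as the paper: apply inequality~(\ref{eq:sR}) with $R' = R^*$ to deduce $s_{R^*}(\pointset) \geq \rho k$ on $\calD_\rho$, observe that on $\bar\calE$ the number of jumps of $P_t(\pointset)$ (and hence $Q_{s_{R^*}(\pointset)}(\pointset)$) is below $k$, use monotonicity of $s \mapsto Q_s(\pointset)$ to pass to the fixed index $\rho k$, and finish with the Poisson Chernoff bound and the estimate $(k/(k-1))^{k-1} \le e$. Your remarks about working pointwise rather than via independence, about forced centers only increasing the count, and about the $k=1$ edge case are sound elaborations of steps the paper leaves implicit, but the underlying argument is identical.
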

\begin{proof}
We use inequality (\ref{eq:sR}) with
$R'= R^*$:
$$s_{R^*}(\pointset) \geq \ell \cdot R^* \Bigg(\frac{\cost(\pointset, C_T)}{\cost(\pointset, \{c_1\})}\Bigg)^{\nicefrac{1}{R^*}}.$$
It implies that $s_{R^*}(\pointset)\geq \rho k$ if event $\calD_{\rho}$ occurs. On the other hand if $\bar \calE$ occurs, then the number of centers chosen by the end of round $R^*$ is less than $k$ and,
consequently, the number of jumps of $P_t({\pointset})$ in the interval $[0,T]$ is less than $k$:
$$P_T({\pointset}) \equiv Q_{s_{R^*}(\pointset)}(\pointset) < k.$$
Hence, we can bound $\Pr(\bar \calE \text { and } \calD_{\rho}\mid c_1)$ as follows:
\begin{multline*}
\Pr(\bar \calE \text { and } \calD_{\rho}) \leq
\Pr\big(\calD_{\rho} \text{ and } Q_{s_{R^*}}(\pointset)<
k\mid c_1\big) \leq
\\ \leq
\Pr\big(\calD_{\rho} \text{ and } Q_{\rho k}(\pointset)< k
\mid c_1 \big)
\leq
\Pr\big(Q_{\rho k}(\pointset)< k \mid c_1\big).
\end{multline*}
Random variable $Q_{\rho k}(\pointset)$ has the Poisson distribution with parameter $\rho k$ and is independent of $c_1$. By the Chernoff bound, the probability that $Q_{\rho k}(\pointset)\leq k-1$ is at most (as in Part II of the proof):
$$\Pr\big\{Q_{\rho k}(\pointset)\leq k-1\big\}\leq
e^{-\rho k}\Big(\frac{e \rho k}{k-1}\Big)^{k-1} =
e^{-(\rho -1) k-1}\rho^{k-1}\cdot 
\underbrace{\bigg(\frac{k}{k-1}\bigg)^{k-1}}_{\leq e}
\leq e^{-(\rho -1) k}\rho^{k-1}.
$$
This completes the proof of Claim~\ref{cl:prob-more-rho}.
\end{proof}

Let
$$Z=\bigg(\frac{\ell R^*}{k}\bigg)^{R^*}\cdot \frac{\cost(\pointset, C_T)}{\cost(\pointset,\{c_1\} )}.$$
Then, by Claim~\ref{cl:prob-more-rho},
\begin{equation}\label{eq:prob-DZ}
\Pr\big(\bar\calE \text{ and } Z\geq \rho^{R^*} \mid c_1\big)\leq
e^{-(\rho-1)k}\rho^{k-1}. 
\end{equation}
Write,
$$\E\big[\one(\bar \calE) \cdot Z \mid c_1\big] =
\int_{0}^{\infty} \Pr\big(\one(\bar \calE) \text{ and } Z\geq r \mid c_1\big) dr
\leq 1 + \int_{1}^{\infty} \Pr\big(\one(\bar \calE) \text{ and } Z\geq r \mid c_1\big)\,dr.
$$
We now substitute $r=\rho^{R^*}$ and then use~(\ref{eq:prob-DZ}):
\begin{align*}
\E\big[Z \cdot \one(\bar \calE) \mid c_1\big] &\leq 1 +
R^*\int_{1}^{\infty} \Pr\big(\bar\calE \text{ and } Z\geq \rho^{R^*} \mid c_1\big)
\cdot \rho^{R^*-1}
d\rho\\
&\leq 1 + R^* \int_{1}^{\infty}
e^{-(\rho-1)k}\rho^{k + R^* -2} d\rho.
\end{align*}

We note that $R^*< k$, since our algorithm chooses at least one center in each round. Thus, by Lemma~\ref{lem:rho-integral} (which we prove below), the integral on the right hand side is upper bounded by
$\nicefrac{eR^*}{2} \cdot (\nicefrac{4}{e})^{R^*}$.
Hence,
$$\E\big[Z \cdot \one(\bar \calE) \mid c_1\big]\leq 1+
R^* \cdot \bigg(\frac{4}{e}\bigg)^{R^*-2}.
$$
Multiplying both sides of the inequality by $(\nicefrac{k}{\ell R^*})^{R^*}\cdot
{\cost(\pointset,\{c_1\})}$ and taking the expectation over $c_1$, we get the desired inequality:
\begin{align*}
\E\big[\cost(\pointset, C_T) \cdot \one(\bar{\calE})\big] &\leq
\bigg(1 + R^*\; \bigg(\frac{4}{e}\bigg)^{R^*}\bigg)
\Big(\frac{k}{\ell R^*}\Big)^{R^*}
\E_{c_1}\big[\cost(\pointset,\{c_1\}\big]\\
&=\bigg(1 + R^* \; \Big(\frac{4}{e}\Big)^{R^*-2}\bigg)
\Big(\frac{k}{\ell R^*}\Big)^{R^*}
\cdot 2\OPT_1(\pointset)
\\
&<2R^* \; \bigg(\frac{4k}{e\ell R^*}\bigg)^{R^*}\OPT_1(\pointset).
\end{align*}

This finishes the proof of Theorem~\ref{thm:main-poisson-kpp}.
\end{proof}

\begin{lemma}\label{lem:rho-integral}
For $R^*< k$, we have
$$\int_{1}^{\infty}
e^{-(\rho-1)k}\rho^{k + R^* -2} d\rho\leq
\frac{e}{2}\bigg(\frac{4}{e}\bigg)^{R^*}.$$
\end{lemma}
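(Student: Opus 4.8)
Write $m=k+R^{*}-2$; since the algorithm selects at least one center per round we may assume $1\le R^{*}<k$, so $k\ge2$ and $m\ge1$. The first step is to reduce the integral to a clean closed form. Using $e^{-(\rho-1)k}=e^{k}e^{-k\rho}$ and the fact that the integrand is nonnegative, enlarging the range of integration from $[1,\infty)$ to $[0,\infty)$ only increases the value, so
$$\int_{1}^{\infty}e^{-(\rho-1)k}\rho^{m}\,d\rho\;\le\;e^{k}\int_{0}^{\infty}\rho^{m}e^{-k\rho}\,d\rho\;=\;\frac{e^{k}\,m!}{k^{m+1}}\;=:\;a_k ,$$
where the last equality is the standard Gamma integral $\int_0^\infty\rho^m e^{-k\rho}\,d\rho=m!/k^{m+1}$. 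It then suffices to prove $a_k\le\tfrac{e}{2}(4/e)^{R^{*}}$.

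The second step is to reduce to the extremal value $k=R^{*}+1$ by showing that, for a fixed $R^{*}\ge1$, the sequence $a_k$ is non-increasing in $k$. A direct computation gives $a_{k+1}/a_k=e\,(k+R^{*}-1)\,k^{k+R^{*}-1}/(k+1)^{k+R^{*}}$, and factoring out $(1-\tfrac1{k+1})^{k+1}\le e^{-1}$ bounds this by $\tfrac{k+R^{*}-1}{k+1}\big(\tfrac{k}{k+1}\big)^{R^{*}-2}$, which is at most $1$ exactly when $(k+R^{*}-1)\,k^{R^{*}-2}\le(k+1)^{R^{*}-1}$. For $R^{*}\ge2$ this is immediate from $(k+1)^{R^{*}-1}\ge k^{R^{*}-1}+(R^{*}-1)k^{R^{*}-2}$ (binomial theorem), and it holds with equality for $R^{*}=1$. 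Hence $a_k\le a_{R^{*}+1}$ for all $k\ge R^{*}+1$.

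The third step is the base case: for $k=R^{*}+1$ we have $m=2R^{*}-1$ and $a_{R^{*}+1}=e^{R^{*}+1}(2R^{*}-1)!/(R^{*}+1)^{2R^{*}}$, so writing $n=2R^{*}$ the desired inequality $a_{R^{*}+1}\le\tfrac{e}{2}(4/e)^{R^{*}}$ rearranges to $(n-1)!\le\tfrac12\big(\tfrac{n+2}{e}\big)^{n}$. I would prove this for every integer $n\ge2$ by the same ratio trick applied to $R(n):=2e^{n}(n-1)!/(n+2)^{n}$: one checks $R(n+1)/R(n)=\tfrac{en}{n+3}\big(\tfrac{n+2}{n+3}\big)^{n}\le\tfrac{n(n+3)^{2}}{(n+2)^{3}}<1$ (using $(1-\tfrac1{n+3})^{n+3}\le e^{-1}$ and $(n+2)^{3}-n(n+3)^{2}=3n+8>0$), while $R(2)=e^{2}/8<1$; therefore $R(n)\le R(2)<1$ for all $n\ge2$, which closes the argument.

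All the estimates above are elementary, so there is no real conceptual difficulty; the one point worth watching is that the crude first step — replacing the incomplete Gamma integral by the full one, which costs roughly a factor of two — should not consume all the slack, and it does not, since even at the worst case $k=R^{*}+1$ the inequality $R(2)=e^{2}/8<1$ holds with a little room. A less wasteful variant would retain the incomplete-Gamma factor $e^{-k}\sum_{j\le m}k^{j}/j!$, but this seems only to complicate the two monotonicity computations without any genuine gain.
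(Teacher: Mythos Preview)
Your proof is correct, and it follows a genuinely different route from the paper's. The paper never enlarges the domain to $[0,\infty)$ or touches the Gamma integral; instead it works entirely on $[1,\infty)$ with two pointwise bounds. First, since $e^{-(\rho-1)}\rho\le 1$ for $\rho\ge 1$, raising to the $(k-R^*)$-th power replaces $k$ by $R^*$ in the exponential factor, reducing the integrand to $e^{-(\rho-1)R^*}\rho^{2R^*-3}$, which they rewrite as $e^{R^*}(e^{-\rho}\rho^2)^{R^*}\rho^{-3}$. Second, since $e^{-\rho}\rho^2\le 4/e^2$ on $[1,\infty)$, they peel off $R^*-1$ of the factors $(e^{-\rho}\rho^2)$ as the constant $(4/e^2)^{R^*-1}$ and are left with $\int_1^\infty e^{-\rho}\rho^{-1}\,d\rho<1/4$. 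This yields the sharper bound $(4/e)^{R^*-2}$ in three lines. Your argument trades these two ``spot the right inequality'' steps for a more mechanical program --- bound by the full Gamma integral $a_k=e^k m!/k^{m+1}$, show $a_k$ is non-increasing in $k$ via the ratio test, and verify the base case $k=R^*+1$ by a Stirling-type estimate --- which is longer and loses a constant factor (your crude first step costs roughly a factor of two, as you note), but is entirely systematic and avoids any cleverness about which pointwise bound to apply.
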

\begin{proof}
Since $e^{-(\rho-1)}\rho\leq 1$ for all $\rho \geq 1$, we have
$e^{-(\rho-1)k}\rho^{k} \leq e^{-(\rho-1)R^*}\rho^{R^*}$ for any $R^* < k$.
Thus, we have
\begin{align*}
\int_{1}^{\infty}
e^{-(\rho-1)k}\rho^{k + R^* -2} d\rho &\leq
\int_{1}^{\infty}
e^{-(\rho-1)R^*}\rho^{2R^* -3} d\rho
= e^{R^*}\int_{1}^{\infty}
e^{-\rho R^*}\rho^{2R^* - 3} d\rho\\
&= e^{R^*}\int_{1}^{\infty}
(e^{-\rho}\rho^{2})^{R^*} \rho^{-3} d\rho.
\end{align*}
Observe that $e^{-\rho}\rho^2 \leq 4/e^2$ for any $\rho \geq 1$. Hence,
$$(e^{-\rho}\rho^{2})^{R^*}
=
(e^{-\rho}\rho^{2})^{R^*-1}\cdot e^{-\rho}\rho^{2}
\leq (4/e^2)^{R^*-1}e^{-\rho}\rho^{2}.$$
Thus,
$$\int_{1}^{\infty}
e^{-(\rho-1)k}\rho^{k + R^* -2} d\rho \leq
\frac{4^{R^*-1}\cdot e^{R^*}}{e^{2(R^*-1)}}\cdot
\int_1^{\infty}\frac{e^{-\rho}}{\rho} \; d\rho =
\frac{4^{R^*-1}}{e^{R^*-2}}
\cdot \frac{1}{4} = \bigg(\frac{4}{e}\bigg)^{R^*-2}.
$$
\end{proof}

\subsection{\texorpdfstring{Lazy implementation of $k$-means$\kpois$}{Lazy implementation of Exponential Race k-means++}}\label{sec:lazy}

We now describe how we can efficiently implement
the $k$-means$\kpois$ algorithm using a lazy reservoir sampling.  We remind the reader that the time of the first jump of a Poisson process with parameter $\lambda$ is distributed as the exponential distribution with parameter $\lambda$. Imagine for a moment, that the arrival rates of our Poisson processes were constant. Then, in order to select the first $k$ jumps, we would generate independent exponential random variables with parameters $\lambda(x)$ for all $x$ and choose $k$ smallest values among them. This algorithm is known as the reservoir sampling(see~\citet{ReservoirSampling}). To adapt this algorithm to our needs, we need to update the arrival rates of the exponential random variables. Loosely speaking, we do so by generating exponential random variables with rate $1$ for Poisson processes $Q_s(x)$ which are described above and then updating the speeds $\lambda_t(x)$ of variables $s_t(x)$. We now formally describe the algorithm.

In the beginning of every round $i$, we recompute costs of all points in the data set. Then, we draw an independent exponential random variable $\mathcal{S}_x$ with rate $1$ for every point $x$, and let $S_t(x)=\mathcal{S}_x$ . We set
$$\tau_t(x)=\frac{S_t(x)}{\lambda_t(x)}.$$
Think of $S_t(x)$ as the distance $s_t(x)$ needs to travel till process $Q_s(x)$ jumps; $\lambda_t(x)$ is the speed of point~$s_t(x)$; and $\tau_t(x)$ is the time left till $Q_s(x)=P_t(x)$
jumps if the speed $\lambda_t$ does not change. Among all points $x\in X$, we select a tentative set of centers $Z$ for this round. The set $Z$ contains all points $x$
with $t_{i-1} + \tau_t(x)\leq t_{i}$. This is the set of all points for which their Poisson processes would jump in the current round if their arrival rates remained the same till the end of the round. Since the arrival rates can only decrease in our algorithm, we  know for sure that for points $x$ outside of $Z$, the corresponding processes $P_t(x)$ will not jump in this round. Thus, we can safely ignore those points during the current round.

We also note that in the unlikely event that the initial set $Z$ is empty, we choose $x$ with the smallest time $\tau_t(x)$ and add it to the set of centers $C_t$. (This is equivalent to choosing a point with probability proportional to $\cost(x,C_t)$ by the memorylessness property of the exponential distribution).

The steps we described above -- updating costs $\cost(x,C_t)$, drawing exponential random variables $\mathcal{S}_x$, and selecting points in the set $Z$ -- can be performed in parallel using one pass over the data set. In the rest of the current round, our algorithm deals only with the set $Z$ whose size in expectation is at most $\ell$ (see below).

While the set $Z$ is not empty we do the following. We choose $x\in Z$ with the smallest value of $\tau_t(x)$. This $x$ corresponds to the process that jumps first.
Then, we perform the following updates: We add $x$ to the set of centers $C_t$. We set the ``current time'' $t$ to $t = t' + \tau_{t'}(x)$, where $t'$ is the time of the previous update. If $x$ is the first center selected in the current round, then we let $t'$ to be the time when the round started (i.e., $t_{i-1}$). We recompute the arrival rates (speeds) $\lambda_t(x)$ for each $x$ in $Z$. Finally, we update the values of all $\tau_t(x)$ for $x\in Z$ using the formula
$$\tau_t(x) =
\frac{S_t(x)-\lambda_{t'}(x)\cdot(t-t')}{\lambda_t(x)},$$
here $\lambda_{t'}(x)\cdot(t-t')$ is the distance variable $s_t(x)$ moved from the position where it was at time~$t'$; $S_t(x)-\lambda_{t'}(x)\cdot(t-t')$ is the remaining distance $s_t(x)$ needs to travel till the process $Q_t(x)$ jumps; and $\tau_t(x)$ is the remaining
time till $P_t(x)$ jumps if we do not update its arrival rate. After we update $\tau_t(x)$, we prune the set $Z$. Specifically, we remove from set $Z$ all points $x$ with
$t+\tau_t(x) > t_{i}$. As before, we know for sure that if $x$ is removed from $Z$, then the corresponding  processes $P_t(x)$ will not jump in the current round.

This algorithm simulates the process we described in the previous section. The key observation is that Poisson processes $P_t(x)$ we associate with points $x$ removed from $Z$ cannot jump
in this round and thus can be safely removed from our consideration. We now show that the expected size of the set $Z$ is at most $\ell$. In the next section, we analyze the running time of this algorithm.

\medskip

Then we show that the expected size of the set $Z$ in the beginning of each round $i+1$ is at most $\ell$. Since every point $x$ belongs to $Z$ with probability
$$\Pr\{x\in Z\} =
\Pr\bigg\{\ \frac{\mathcal{S}_x}{\cost(x,C_{t_i})}
\leq \frac{\ell}{\cost(\pointset, C_{t_i})} \bigg\} =
\Pr\bigg\{\ \mathcal{S}_x
\leq \ell \cdot \frac{\cost(x,C_{t_i})}{\cost(\pointset, C_{t_i})}\bigg\}.$$
The right hand side is the probability that the Poisson  process $Q_s(x)$ with rate 1 jumps in the interval
of length $\ell \cdot \cost(x,C_{t_i})/\cost(\pointset, C_{t_i})$ which is upper bounded by the expected number of jumps of $Q_s(x)$ in this interval. The expected number of jumps exactly equals $\ell \cdot \cost(x,C_{t_i})/\cost(\pointset, C_{t_i})$. Thus, the expected size of $Z$ is upper bounded as
$$\E|Z| = \sum_{z\in \pointset}
\Pr\{z\in Z\} \leq \sum_{z \in \pointset} \ell \cdot \frac{\cost(z,C_{t_i})}{\cost(\pointset, C_{t_i})} = \ell.$$
\subsection{Run time analysis}
According to our analysis above, the number of new centers chosen at each round of $k$-means$\kpois$ is at most the size of set $Z$, which is $O(\ell)$ with high probability. In the beginning of every round, we need to update costs of all data points, which requires $O(n\ell d)$ time. In each round, we also need to maintain the rates of all points in set $Z$, which needs $O(\ell^2 d)$ time. Thus, the total running time for $k$-means$\kpois$ with $R$ rounds is $O(Rn\ell d)$. We note that before running our algorithm, we can reduce the dimension $d$ of the space to $O(\log k)$ using the Johnson–Lindenstrauss transform (see~\citet{JLpaper}). This will increase the approximation factor by a factor of $(1+\varepsilon)$ but make the algorithm considerably faster (see~\citet{MMR19},
\citet{becchetti2019oblivious}, and \citet{boutsidis2010random}).

\bibliographystyle{abbrvnat}
\bibliography{references}

%%new page to make room for plots
\newpage

\appendix
\section*{\centering{Appendix}}

In this appendix, we present our experiments, give proofs omitted in the main part of the paper, and provide complimentary lower bounds.

\section{Experiments}\label{sec:experiments}
In this section, we present plots that show that the performance of $k$-means$\parallel$
and ``$k$-means++ with oversampling and pruning'' algorithms are very similar in practice.
Below, we compare the following algorithms on the datasets BioTest from KDD Cup 2004 \cite{kddcup2004} and COVTYPE from the UCI ML repository \cite{Dua:2019}:
\begin{itemize}
  \item Regular $k$-means++. The performance of this algorithm is shown with a solid black line on the plots below.
  \item $k$-means$\parallel$ without pruning. This algorithm samples $k$ centers using $k$-means$\parallel$ with $T = 5$ rounds and $\ell = k/T$.
  \item $k$-means$\parallel$. This algorithm first samples $5k$ centers using $k$-means$\parallel$ and then subsamples $k$ centers using $k$-means++.
  The performance of this algorithm is shown with a dashed blue line on the plots below.
  \item $k$-means++ with oversampling and pruning. This algorithm first samples $5k$ centers using $k$-means++ and then subsamples $k$ centers using $k$-means++. The performance of this algorithm is shown with a thin red line on the plots below.
\end{itemize}

For each $k=5,10,\cdots, 200$, we ran these algorithms for 50 iterations and took their average. We normalized all costs by dividing them by the cost of $k$-means++ with $k=1000$ centers.

%For each $k=5,10,\cdots, 200$, we ran these algorithms 50 times each. We then averaged the cost over 50 runs and also computed the minimum cost over 50 runs. We normalized all costs by dividing them by the cost of $k$-means++ with $k=1000$ centers.

\begin{figure}[h]
    \begin{minipage}{0.45\linewidth}
        \begin{tikzpicture}[scale=0.7]
        \begin{axis}[
        title= {BioTest},
        xlabel={\#centers},
        ylabel={cost},
        grid = major]
        \addplot[black,domain=1:2, line width = 1pt]  table[x=centers,y=avgKMeansPP,col sep=comma] {plotdata/bio-test-results.csv}; \addlegendentry{$k$-means++}
        \addplot[red,domain=1:2, line width = 0.5pt]  table[x=centers,y=avgBicriteria,col sep=comma] {plotdata/bio-test-results.csv}; \addlegendentry{BiCriteria $k$-means++ w/Pruning}
        \addplot [blue, domain=1:2, dashed, line width = 1pt] table[x=centers,y=avgKMeansParallel,col sep=comma] {plotdata/bio-test-results.csv}; \addlegendentry{$k$-means$\parallel$}
        \end{axis}
        \end{tikzpicture}
        \captionsetup{justification=centering}
        %\caption{BioTest}
        %\label{fig:BioTest}
    \end{minipage}
    \quad
    \begin{minipage}{0.45\linewidth}
        \begin{tikzpicture}[scale=0.7]
        \begin{axis}[
        title= {BioTest},
        xlabel={\#centers},
        ylabel={cost},
        grid = major]
        \addplot[black,domain=1:2, line width = 1pt]  table[x=centers,y=avgKMeansPP,col sep=comma] {plotdata/bio-test10-50.csv}; \addlegendentry{$k$-means++}
        \addplot[red,domain=1:2, line width = 0.5pt]  table[x=centers,y=avgBicriteria,col sep=comma] {plotdata/bio-test10-50.csv}; \addlegendentry{BiCriteria $k$-means++ w/Pruning}
        \addplot [blue, domain=1:2, dashed, line width = 1pt] table[x=centers,y=avgKMeansParallel,col sep=comma] {plotdata/bio-test10-50.csv}; \addlegendentry{$k$-means$\parallel$}
        \end{axis}
        \end{tikzpicture}
        \captionsetup{justification=centering}
        %\caption{BioTest}
        %\label{fig:BioTest}
    \end{minipage}
\end{figure}

\begin{figure}
    \begin{minipage}{0.45\linewidth}
        \begin{tikzpicture}[scale=0.7]
        \begin{axis}[
        title= {COVTYPE},
        xlabel={\#centers},
        ylabel={cost},
        grid = major]
        \addplot[black,domain=1:2, line width = 1pt]  table[x=centers,y=avgKMeansPP,col sep=comma] {plotdata/covtype.csv}; \addlegendentry{$k$-means++}
        \addplot[red,domain=1:2, line width = 0.5pt]  table[x=centers,y=avgBicriteria,col sep=comma] {plotdata/covtype.csv}; \addlegendentry{BiCriteria $k$-means++ w/Pruning}
        \addplot [blue, domain=1:2, dashed, line width = 1pt] table[x=centers,y=avgKMeansParallel,col sep=comma] {plotdata/covtype.csv};\addlegendentry{$k$-means$\parallel$}
        \end{axis}
        \end{tikzpicture}
        \captionsetup{justification=centering}
        %\caption{COVTYPE}
        %\label{fig:CovType}
    \end{minipage}
    \quad
    \begin{minipage}{0.45\linewidth}
        \begin{tikzpicture}[scale=0.7]
        \begin{axis}[
        title= {COVTYPE},
        xlabel={\#centers},
        ylabel={cost},
        grid = major]
        \addplot[black,domain=1:2, line width = 1pt]  table[x=centers,y=avgKMeansPP,col sep=comma] {plotdata/covtype10-50.csv}; \addlegendentry{$k$-means++}
        \addplot[red,domain=1:2, line width = 0.5pt]  table[x=centers,y=avgBicriteria,col sep=comma] {plotdata/covtype10-50.csv}; \addlegendentry{BiCriteria $k$-means++ w/Pruning}
        \addplot [blue, domain=1:2, dashed, line width = 1pt] table[x=centers,y=avgKMeansParallel,col sep=comma] {plotdata/covtype10-50.csv};\addlegendentry{$k$-means$\parallel$}
        \end{axis}
        \end{tikzpicture}
        \captionsetup{justification=centering}
    \end{minipage}
\end{figure}

%%%%%%%%%%%%%%%%%%%%%

\begin{figure}
    \begin{minipage}{0.45\linewidth}
        \begin{tikzpicture}[scale=0.7]
        \begin{axis}[
        title= {BioTest},
        xlabel={\#centers},
        ylabel={cost},
        grid = major]
        \addplot[black,domain=1:2, line width = 1pt]  table[x=centers,y=avgKMeansPP,col sep=comma] {plotdata/biotest_results_kpar_kpp.csv}; \addlegendentry{$k$-means++}
        \addplot [red, domain=1:2, dashed, line width = 1pt] table[x=centers,y=avgKMeansParallelForK,col sep=comma] {plotdata/biotest_results_kpar_kpp.csv};\addlegendentry{$k$-means$\parallel$ without Pruning}
        \end{axis}
        \end{tikzpicture}
        \captionsetup{justification=centering}
    \end{minipage}
    \quad
    \begin{minipage}{0.45\linewidth}
        \begin{tikzpicture}[scale=0.7]
        \begin{axis}[
        title= {BioTest},
        xlabel={\#centers},
        ylabel={cost},
        grid = major]
        \addplot[black,domain=1:2, line width = 1pt]  table[x=centers,y=avgKMeansPP,col sep=comma] {plotdata/biotest_results_kpar_kpp_50.csv}; \addlegendentry{$k$-means++}
        \addplot [red, domain=1:2, dashed, line width = 1pt] table[x=centers,y=avgKMeansParallelForK,col sep=comma] {plotdata/biotest_results_kpar_kpp_50.csv};\addlegendentry{$k$-means$\parallel$ without Prunning}
        \end{axis}
        \end{tikzpicture}
        \captionsetup{justification=centering}
    \end{minipage}
\end{figure}

\begin{figure}
    \begin{minipage}{0.45\linewidth}
        \begin{tikzpicture}[scale=0.7]
        \begin{axis}[
        title= {COVTYPE},
        xlabel={\#centers},
        ylabel={cost},
        grid = major]
        \addplot[black,domain=1:2, line width = 1pt]  table[x=centers,y=avgKMeansPP,col sep=comma] {plotdata/covtype_kpar_kpp.csv}; \addlegendentry{$k$-means++}
        \addplot [red, domain=1:2, dashed, line width = 1pt] table[x=centers,y=avgKMeansParallelForK,col sep=comma] {plotdata/covtype_kpar_kpp.csv};\addlegendentry{$k$-means$\parallel$ without Prunning}
        \end{axis}
        \end{tikzpicture}
        \captionsetup{justification=centering}
    \end{minipage}
    \quad
    \begin{minipage}{0.45\linewidth}
        \begin{tikzpicture}[scale=0.7]
        \begin{axis}[
        title= {COVTYPE},
        xlabel={\#centers},
        ylabel={cost},
        grid = major]
        \addplot[black,domain=1:2, line width = 1pt]  table[x=centers,y=avgKMeansPP,col sep=comma] {plotdata/covtype_kpar_kpp_50.csv}; \addlegendentry{$k$-means++}
        \addplot [red, domain=1:2, dashed, line width = 1pt] table[x=centers,y=avgKMeansParallelForK,col sep=comma] {plotdata/covtype_kpar_kpp_50.csv};\addlegendentry{$k$-means$\parallel$ without Pruning}
        \end{axis}
        \end{tikzpicture}
        \captionsetup{justification=centering}
    \end{minipage}
\end{figure}

%%new page to make room for plots
\newpage

\section{Details for Preliminaries}\label{sec:prelim_details}
For any set of points $\Y \subset \R^d$, let $\mu = \sum_{x \in \Y} x/\abs{\Y}$ be the \textit{centroid} of the cluster $\Y$. Then, the optimal cost of $\Y$ with one center,
\begin{align*}
    \opt_1(\Y) = \sum_{x\in \Y} \norm{x-\mu}^2 = \frac{\sum_{(x,y)\in\Y\times\Y} \norm{x-y}^2}{2\abs{\Y}}.
\end{align*}
This is a well known formula which is often used for analyzing of $k$-means algorithms. For completeness, we give a proof below.
\begin{proof}
Consider any point $z \in \R^d$, then we have:
\begin{align*}
    \cost(\Y,\{z\}) &= \sum_{x\in \Y} \norm{x-z}^2 =  \sum_{x\in \Y} \norm{(x-\mu) + (\mu - z)}^2\\
    &= \sum_{x\in \Y} \rbr{\norm{x-\mu}^2 + \norm{\mu-z}^2 + 2\ip{x-\mu, \mu-z}}
    \\&= \sum_{x\in \Y} \norm{x-\mu}^2 + \abs{\Y} \cdot \norm{\mu-z}^2 + 2 \ip{\sum_{x\in \Y} (x-\mu), \mu-z}
    \\&= \sum_{x\in \Y} \norm{x-\mu}^2 + \abs{\Y} \cdot \norm{\mu-z}^2.
\end{align*}
Thus, the optimal choice of $z$ to minimize $\cost(\Y,\{z\})$ is $\mu$ and $\opt_1(\Y) = \sum_{x\in \Y} \norm{x-\mu}^2$.

\begin{align*}
    \sum_{x\in \Y} \norm{x-\mu}^2 &= \sum_{x\in \Y} \ip{x-\mu, x-\mu}
    = \sum_{x\in \Y} \ip{x, x-\mu}
    \\&= \sum_{x\in \Y} \ip{x, x - \sum_{y \in \Y} \frac{y}{\abs{\Y}}} =\frac{1}{\abs{\Y}} \sum_{(x,y)\in\Y\times\Y} \ip{x, x - y}
    \\&= \frac{1}{2\abs{\Y}} \rbr{\sum_{(x,y)\in\Y\times\Y} \ip{x, x - y} + \sum_{(x,y)\in\Y\times\Y} \ip{y, y - x}}
    \\&= \frac{\sum_{(x,y)\in\Y\times\Y} \norm{x-y}^2}{2\abs{\Y}}.
\end{align*}
\end{proof}

\section{Lower bounds}\label{sec:lb}
\subsection{Lower bound on the cost of covered clusters}
We show the following lower bound on the expected cost of a covered cluster in $k$-means++. Therefore, the $5$-approximation in Lemma~\ref{lem:5OPT} is tight.
\begin{theorem}\label{thm:5-approx-tight}
For any $\varepsilon > 0$, there exists an instance of $k$-means such that for a set $\cluster \in \pointset$ and a set of centers $C \in \real^\dimension$, if a new center $c$ is sampled from $\cluster$ with probability $\Pr(c = x) = \cost(x, C)/\cost(P,C)$, then
$$
\E_c\sbr{\cost(P,C\cup\cbr{c})} \geq (5-\varepsilon) \opt_1(P).
$$
\end{theorem}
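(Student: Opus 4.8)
The factor $5$ in Lemma~\ref{lem:fxy} is attained, at the boundary of Case~2 of its proof, precisely when $\cost(y,C)=4\,\cost(x,C)=4\norm{x-y}^2$ --- that is, when the center of $C$ closest to $y$ sits on the ray from $y$ through $x$, one step of length $\norm{x-y}$ beyond $x$, so that the triangle inequality used in Case~2 is tight. To turn such a tight estimate on $f(x,y)$ into a tight estimate on $\E_c\sbr{\cost(P,C\cup\cbr{c})}$ one also needs $F(\phi^*)\approx \opt_1(P)$; by the proof of Lemma~\ref{lem:5OPT} this forces the cost vector $\phi^*$ to be, after rescaling, close to the indicator of a large subset of $P$ --- i.e.\ most of $P$ should consist of points having one and the same small cost with respect to $C$. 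I would therefore build the following one-dimensional instance. Fix $\varepsilon>0$ and an integer $m$ to be chosen at the end. Work on the line: put $C=\cbr{2}$, $y=0$, and $x_1=\dots=x_m=1$, and set $\pointset=P=\cbr{y,x_1,\dots,x_m}$. (Replacing the $x_i$ by $m$ distinct points inside a ball of radius $\delta$ about $1$ and sending $\delta\to 0$ turns $P$ into a genuine set of distinct points; every quantity below is continuous in the locations of the points, so this does not affect the limit.)

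For this instance $\cost(x_i,C)=1$, $\cost(y,C)=4$, $\norm{x_i-y}^2=1$ and $\norm{x_i-x_j}^2=0$, and the plan is simply to evaluate both sides of the claimed inequality directly. For the left-hand side: sampling $c$ with probability proportional to $\cost(\cdot,C)$, we pick a given $x_i$ with probability $1/(m+4)$, in which case the new cost of every other $x_j$ (and of $x_i$) is $0$ while the new cost of $y$ is $\min\cbr{4,1}=1$, so $\cost(P,C\cup\cbr{c})=1$; and we pick $y$ with probability $4/(m+4)$, in which case the new cost of every $x_i$ is $\min\cbr{1,1}=1$ and that of $y$ is $0$, so $\cost(P,C\cup\cbr{c})=m$. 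Hence $\E_c\sbr{\cost(P,C\cup\cbr{c})}=\frac{m}{m+4}\cdot 1+\frac{4}{m+4}\cdot m=\frac{5m}{m+4}$. For the right-hand side, the closed form~(\ref{eq:opt-1-closed-form}) gives $\opt_1(P)=\frac{2m}{2(m+1)}=\frac{m}{m+1}$, since only the $2m$ ordered pairs $\cbr{x_i,y}$ contribute to $\sum_{(x,y)\in P\times P}\norm{x-y}^2$.

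Combining the two computations, $\E_c\sbr{\cost(P,C\cup\cbr{c})}/\opt_1(P)=\frac{5(m+1)}{m+4}=5-\frac{15}{m+4}$, which is at least $5-\varepsilon$ as soon as $m\ge 15/\varepsilon-4$; choosing such an $m$ (and, in the perturbed version, a small enough $\delta$) proves the theorem. The intuition is that as $m\to\infty$ the algorithm almost surely picks one of the cheap points $x_i$ and recovers cost $\approx\opt_1(P)$, but with the small probability $4/(m+4)$ it picks the far point $y$, which helps the $x_i$ not at all and leaves cost $m\approx m\cdot\opt_1(P)$, contributing the extra $\approx 4\,\opt_1(P)$. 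I do not expect a genuine obstacle: once the construction is written down the verification is a short direct calculation, and the only point requiring (routine) care is the reduction from the degenerate configuration with coincident $x_i$ to one in general position, which is immediate from continuity of $\cost$ and $\opt_1$ in the locations of the points.
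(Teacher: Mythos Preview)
Your proof is correct and uses essentially the same construction as the paper: a one-dimensional cluster with $m$ coincident ``cheap'' points and one ``far'' point at distance~$1$, with the existing center placed so that the far point has cost~$4$ and the cheap points have cost~$1$ (your instance is the paper's reflected about $x=1/2$). You carry out the computation more carefully --- evaluating $\opt_1(P)=m/(m+1)$ exactly rather than bounding it by~$1$ --- and you add the continuity remark to pass to distinct points, but the idea and the calculation are the same.
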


\begin{proof}
Consider the following one dimensional example, where $\cluster$ contains $t$ points at $0$ and one point at $1$, and the closest center already chosen in $C$ to $\cluster$ is at $-1$.

\begin{figure}[ht]
\centering
\begin{tikzpicture}
    \draw (-1,0) -- (1,0);
    \fill[black] (-1,0) circle (0.5 mm) node[below] {$-1$};
    \fill[black] (0,0) circle (0.5 mm) node[below] {$0$} node[above] {$t$};
    \fill[black] (1,0) circle (0.5 mm) node[below] {$1$} node[above] {$1$};
  \end{tikzpicture}
\end{figure}

The new center $c$ will be chosen at $0$ with probability $\frac{t}{t+4}$, and at $1$ with probability $\frac{4}{t+4}$. Then, the expected cost of $\cluster$ is
$$
\E_c\sbr{\cost(P,C\cup\cbr{c})} = 1\cdot\frac{t}{t+4} + t\cdot\frac{4}{t+4} = \frac{5t}{t + 4};
$$
and the optimal cost of $\cluster$ is $\opt_1(\cluster) \leq 1$. Thus, by choosing $t \geq 4(5-\varepsilon)/\varepsilon$, we have
$$
\E_c\sbr{\cost(P,C\cup\cbr{c})} \geq (5-\varepsilon) \opt_1(P).
$$
\end{proof}

\subsection{Lower bound on the bi-criteria approximation}

In this section, we show that the bi-criteria approximation
bound of $O(\ln \frac{k}{\extracenters})$ is tight up to constant factor. Our proof follows the approach by~\citet{brunsch2013bad}. We show the following theorem.

\begin{theorem}~\label{thm:lowerbound_bi}
For every $k>1$ and $\extracenters \leq k$, there exists an instance $\pointset$ of $k$-means such that the bi-criteria $k$-means++ algorithm with $k+\extracenters$ centers returns a solution of cost greater than
$$\frac{1}{8}\log \frac{k}{\extracenters} \cdot \opt_k(\pointset)$$ with probability at least
$1 - e^{-\sqrt{k}/2}$.
\end{theorem}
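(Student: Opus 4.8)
\emph{Proof plan.} We follow the hard‑instance paradigm of \citet{brunsch2013bad}. Fix the target gap parameter $s:=\max(\extracenters,\,C_0\sqrt k)$ for a suitable absolute constant $C_0$, and set the ``isolation ratio''
$$\theta:=\frac{k\ln(k/s)-k+s}{\extracenters+s}\,,$$
which is a large number whenever $\log(k/\extracenters)$ is bounded away from a small constant (if it is not, the claimed bound is weaker than a fixed constant and follows from $\cost(\pointset,C_{k+\extracenters})\ge \opt_{k+\extracenters}(\pointset)\ge\tfrac12\opt_k(\pointset)$, so we may assume $\theta\gg 1$). The instance $\pointset$ consists of $k$ identical ``tight'' clusters $P_1,\dots,P_k$, each a regular simplex of $m\gg k$ points with pairwise distance $\delta$, placed at the vertices of a regular $(k-1)$‑simplex of edge length $D:=\delta\sqrt\theta$, with the internal simplex of each $P_i$ living in its own orthogonal subspace. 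Then $\opt_k(\pointset)=\sum_i\opt_1(P_i)=k\cdot\frac{(m-1)\delta^2}{2}$ (attained by the centroids), any two points from distinct clusters are at distance $\sqrt{D^2+O(\delta^2)}$, and hence the cost of one \emph{uncovered} cluster is $mD^2\rbr{1+O(1/\theta)}=2\theta\rbr{1+O(1/\theta)}\cdot\opt_1(P_i)$ — it is ``$\approx 2\theta$ times its own optimum''.

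\medskip\noindent\textbf{Reduction to a pure‑birth chain.} Run $k$‑means++ for $k+\extracenters$ rounds. By symmetry the first center lands in a uniformly random cluster; let $J_t$ denote the number of distinct clusters hit after $t$ centers have been placed. Every center lies inside an already‑hit cluster, so a cluster hit $a_i$ times has cost exactly $(m-a_i)\delta^2$, and $\sum_i a_i=t$; therefore the covered cost is the deterministic quantity $H_t=(J_tm-t)\delta^2$, while the uncovered cost is $U_t=(k-J_t)\,mD^2\rbr{1+O(1/\theta)}$. Consequently
$$\Pr\sbr{\text{step }t\text{ hits a new cluster}\mid C_t}=\frac{U_t}{U_t+H_t}$$
depends on $(J_t,t)$ up to a factor $1+O(1/\theta)$; moreover, since $m\gg k\ge t$, the ``$-t$'' in $H_t$ is negligible over the relevant range, so $H_t\approx J_t m\delta^2$ and this probability is $\approx p_j:=\frac{(k-j)\theta}{j+(k-j)\theta}$ at level $j$. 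Replacing $1+O(1/\theta)$ by the two‑sided bounds $1\mp c/\theta$ sandwiches $(J_t)$ between two honest Markov birth chains in which advancing from level $j$ costs an independent $\mathrm{Geom}(p_j)$ number of ``misses'' with mean $\mathbb{E}G_j=\frac{j}{(k-j)\theta}$; it suffices to analyse these.

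\medskip\noindent\textbf{Deterministic estimate and concentration.} Reaching level $k-s'$ costs $(k-s')+\sum_{j=1}^{k-s'-1}G_j$ centers, whose expectation equals $(k-s')+\frac1\theta\bigl(k\ln(k/s')-k+s'\bigr)$. Plugging $s'=s$ and the definition of $\theta$ gives exactly $k+\extracenters$; plugging $s'=s/2$ gives $k+\extracenters+\Theta(s)$. Thus, in expectation, after $k+\extracenters$ centers the chain sits near level $k-s$, and to reach level $k-s/2$ it needs $\Omega(s)$ more centers than are available. The $G_j$ are independent with tiny means $\mathbb{E}G_j\ll1$ and variances $\mathbb{E}G_j(\mathbb{E}G_j+1)=O(\mathbb{E}G_j)$, so $\sum_{j\le k-s/2}G_j$ is a sum of independent sub‑exponential terms with total variance $O(\extracenters+s)=O(s)$; Bernstein's inequality yields $\Pr\bigl(\sum G_j\le \mathbb{E}\sum G_j-\tfrac s2\bigr)\le e^{-\Omega(s)}\le e^{-\sqrt k/2}$, where the last step fixes $C_0$. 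Hence, with probability at least $1-e^{-\sqrt k/2}$, the chain has not reached level $k-s/2$ after $k+\extracenters$ centers, i.e. $k-J_{k+\extracenters}\ge s/2$, and therefore
$$\cost(\pointset,C_{k+\extracenters})\ \ge\ U_{k+\extracenters}\ \ge\ \tfrac s2\,mD^2\rbr{1-O(1/\theta)}\ =\ \tfrac{s\theta}{k}\rbr{1-O(1/\theta)}\opt_k(\pointset)\ \ge\ \tfrac18\log\tfrac{k}{\extracenters}\,\opt_k(\pointset),$$
the last inequality being a short calculation from $s=\max(\extracenters,C_0\sqrt k)$ and the value of $\theta$ (with the small‑$\log(k/\extracenters)$ and small‑$k$ cases absorbed by $\cost\ge\opt_{k+\extracenters}(\pointset)\ge\frac12\opt_k(\pointset)$ and a minor tuning of constants).

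\medskip\noindent\textbf{Main obstacle.} The two delicate points are (i) making the birth‑chain reduction rigorous: the covered cost $H_t$ is a clean function of $(J_t,t)$, but the cost of each still‑uncovered cluster fluctuates by a $1+O(1/\theta)$ factor depending on where earlier misses landed, so one must genuinely squeeze $(J_t)$ between two Markov chains and verify the squeeze is tight enough — this is exactly why $\theta$ must be taken large; and (ii) the lower‑tail concentration of $\sum G_j$, which looks like a sum of $\Theta(k)$ interdependent quantities — the point is that, once the chain is set up, the $G_j$ are genuinely independent and almost surely $0$, so a Bernstein bound with variance $O(s)$ applies, and the choice $s=\Theta(\sqrt k)$ is precisely what turns $e^{-\Omega(s)}$ into the stated $e^{-\sqrt k/2}$ while still leaving $\ln(k/s)=\Theta(\log k)\ge\Theta(\log(k/\extracenters))$ room for the approximation ratio.
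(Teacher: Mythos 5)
Your proposal follows the same Brunsch--R\"oglin hard-instance paradigm as the paper, but the construction and the analysis are both genuinely different. The paper's instance is \emph{asymmetric}: one heavy simplex of $N\gg k$ points plus $k-1$ singleton ``light'' points at distance $\sqrt{\alpha}$ from the heavy centroid with $\alpha=\ln(k/\Delta)/(4\Delta)$; the analysis counts ``misses'' back into the heavy cluster via independent Bernoulli indicators $X_u$ (one per level $u$ of remaining uncovered singletons), lower-bounds $\E[\sum X_u]\geq 4(1-\varepsilon)\Delta$, and applies a Chernoff bound. Your instance is \emph{symmetric} ($k$ identical tight simplices at the vertices of a separating $(k-1)$-simplex), and you track the level process $J_t$ as a pure-birth chain with geometric wait times $G_j$, tuning the isolation ratio $\theta$ so that the expected number of centers to reach level $k-s$ is exactly $k+\Delta$, then finishing with a Bernstein lower-tail bound. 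The paper's singleton light clusters make the miss-counting cleaner (once a singleton is covered it contributes exactly zero, so there is no drift to control), whereas your symmetric instance requires the $1\pm c/\theta$ squeeze between two birth chains that you correctly flag as the delicate step; in exchange your construction has the tidy feature that the covered cost is deterministically $(J_t m - t)\delta^2\approx 2\opt_k(\pointset)$.

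One concrete error: the catch-all step for small $\log(k/\Delta)$ invokes $\opt_{k+\Delta}(\pointset)\geq\tfrac12\opt_k(\pointset)$, which is false for general point sets (take $\pointset$ with exactly $k+\Delta$ distinct points, so $\opt_{k+\Delta}=0$ while $\opt_k>0$). The needed fact is instance-specific and should be argued as such: for your construction, any data-point center in a regular $m$-simplex leaves the remaining $m-a_i$ points at distance exactly $\delta$, so the covered cost alone is $(J_t m - t)\delta^2\geq (k m - (k+\Delta))\delta^2 = 2\opt_k(\pointset) - O\!\left((k+\Delta)\delta^2\right)$, which for $m\gg k$ already exceeds $\tfrac18\log(k/\Delta)\opt_k(\pointset)$ whenever $\log(k/\Delta)$ is below a fixed constant. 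With that patch, plus the constant bookkeeping and the Markov-squeeze formalization you already acknowledge, the argument goes through and is a valid alternative to the paper's proof.
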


\textbf{Remark:} This implies that the expected cost of bi-criteria $k$-means with $k+\Delta$ centers is at least $$\frac{1-e^{-\sqrt{k}/2}}{8}\cdot \log \frac{k}{\extracenters} \cdot \opt_k(\pointset).$$

\begin{proof}
For every $k$ and $\extracenters \geq \sqrt{k}$, we consider the following instance. The first cluster is a scaled version of the standard simplex with $N \gg k$ vertices centered at the origin, which is called the heavy cluster. The length of the edges in this simplex is $1/\sqrt{N-1}$. Each of the remaining $k-1$ clusters contains a single point on $k-1$ axes, which are called light clusters. These clusters are located at distance $\sqrt{\alpha}$ from the center of the heavy cluster and $\sqrt{2\alpha}$ from each other, where $\alpha = \frac{\ln(k/\extracenters)}{4\extracenters}$.

For the sake of analysis, let us run $k$-means++ till we cover all  clusters. At the first step, the $k$-means++ algorithm almost certainly selects a
center from the heavy cluster since $N \gg k$. Then, at each step, the algorithm can select a center either from one of uncovered light clusters or from the
heavy cluster. In the former case, we say that the algorithm hits a light cluster,
and in the latter case we say that the algorithm misses a light cluster. Below,
we show that with high probability the algorithm makes at least $2\extracenters$ misses
before it covers all but $\Delta$ light clusters.

\begin{lemma}~\label{lem:lb_miss}
Let $\extracenters\geq \sqrt{k}$. By the time the $k$-means++ algorithm covers all but $\extracenters$ light clusters, it makes greater than $2\extracenters$ misses with probability at least $1-e^{-\sqrt{k}/2}$.
\end{lemma}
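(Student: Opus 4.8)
I would track $k$-means++ by the number $m$ of still-uncovered light clusters and reduce the claim to a lower-tail estimate for a sum of independent geometric random variables. First pin down the geometry. Since $N\gg k$, with probability at least $1-(k-1)/N$ the first center lies in the heavy cluster, and I condition on this (the error term is absorbed at the end by taking $N$ large). The circumradius of the scaled simplex is $1/\sqrt{2N}$, which tends to $0$, so after the first step the cost of the heavy cluster is $(N-j)/(N-1)=1\pm o(1)$ whenever it contains $j\le k+\extracenters$ centers (all of which sit at simplex vertices), and the cost of each still-uncovered light point is $\bigl(\sqrt{\alpha}\pm 1/\sqrt{2N}\bigr)^2=\alpha(1\pm o(1))$, its nearest center always being a simplex vertex since a covered light point lies at distance $\sqrt{2\alpha}>\sqrt{\alpha}$ from it. Hence, when exactly $m$ light clusters are uncovered ($\extracenters\le m\le k-1$), the next $k$-means++ center is a \emph{miss} (it lands in the heavy cluster) with probability $1-q_m$ and a \emph{hit} (it covers a new light cluster) with probability $q_m=\tfrac{m\alpha}{1+m\alpha}(1\pm o(1))$, the $o(1)$ being uniform over states and vanishing as $N\to\infty$.

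The sequence of uncovered counts decreases deterministically by one per hit, from $k-1$ down to $\extracenters$, so by the Markov property the number $M$ of misses made before the count reaches $\extracenters$ is distributed as $M=\sum_{m=\extracenters+1}^{k-1}G_m$ with the $G_m$ independent and $G_m$ geometric with success probability $q_m$. Its mean is $\mathbb{E}[M]=\sum_{m=\extracenters+1}^{k-1}\tfrac{1-q_m}{q_m}=(1\pm o(1))\tfrac1\alpha\sum_{m=\extracenters+1}^{k-1}\tfrac1m$, and inserting $\alpha=\ln(k/\extracenters)/(4\extracenters)$ and $\sum_{m=\extracenters+1}^{k-1}\tfrac1m\ge\ln\tfrac{k}{\extracenters+1}\ge\ln\tfrac{k}{\extracenters}-\tfrac1\extracenters$ gives $\mathbb{E}[M]\ge(4-o(1))\extracenters$ (in the relevant range, where $k/\extracenters$ is bounded away from $1$). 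Note also that $\{M>2\extracenters\}$ is exactly the event that after the $k+\extracenters$ centers chosen by bi-criteria $k$-means++ more than $\extracenters$ light clusters remain uncovered, which is what the lemma asserts (modulo the conditioning above).

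It then remains to show $\Pr[M\le 2\extracenters]\le e^{-\sqrt{k}/2}$. I would do this with the Chernoff bound $\Pr[M\le 2\extracenters]\le\theta^{-2\extracenters}\prod_{m=\extracenters+1}^{k-1}\tfrac{1}{1+(1-\theta)/(m\alpha)}$ for $\theta\in(0,1)$, bounding $\prod_m\tfrac{1}{1+(1-\theta)/(m\alpha)}$ from above via concavity of $x\mapsto\ln(1+x)$ (or, more crudely, via $\ln(1+x)\ge x-x^2/2$ together with $\sum_m 1/(m\alpha)^2\le 1/(\alpha^2\extracenters)$) and then optimizing over $\theta$; since $\extracenters\ge\sqrt{k}$ it is enough to obtain an exponent linear in $\extracenters$. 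I expect this concentration step to be the main obstacle, because the summands $G_m$ are highly non-uniform --- of mean $\approx 4/\ln(k/\extracenters)$ near $m=\extracenters$ but $o(1)$ near $m=k$ --- so no single crude estimate works uniformly. One likely splits into the case $\ln(k/\extracenters)$ large (where the dominant geometrics near $m=\extracenters$ are well conditioned and the optimized bound goes through directly) and the case $\ln(k/\extracenters)$ small (where there are $\Theta(k)$ geometrics of mean $\Theta(1)$ and one even gets $e^{-\Omega(k)}$), tracking constants carefully so that the exponent is at least $\sqrt{k}/2$. Finally, folding back in the $O(k/N)$ conditioning error and the $o(1)$ discretization errors --- all driven to $0$ by enlarging $N$ --- yields the claimed probability $1-e^{-\sqrt{k}/2}$.
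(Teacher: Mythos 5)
Your reduction of the miss count to a sum of independent geometric random variables $M=\sum_{m=\Delta+1}^{k-1}G_m$ is correct, and your mean calculation $\E[M]\geq(4-o(1))\Delta$ matches the paper. But you explicitly flag the concentration step --- bounding $\Pr[M\le 2\Delta]$ --- as ``the main obstacle'' and leave it unfinished, and that step is precisely where the proof lives. As you correctly observe, a direct MGF/Chernoff bound for the sum of geometrics is awkward: the $G_m$ have wildly different means, the ones near $m=\Delta$ are heavy, and no single uniform estimate works cleanly. Speculating that one ``likely splits into cases'' is not a proof.

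The paper avoids this entirely with a small but decisive simplification you did not find: instead of counting the total number of misses, it lower-bounds $M$ by the number of \emph{levels at which at least one miss occurs}. Formally, it replaces $G_u$ by the Bernoulli indicator $X_u=\ind\{G_u\ge1\}$, so $M\ge\sum_u X_u$, the $X_u$ are independent $\{0,1\}$-valued, and $\Pr[X_u=1]=1-q_u\ge(1-\varepsilon)/(1+\alpha u)$. Summing these gives essentially the same expectation lower bound you computed (since $\sum 1/(1+\alpha u)\approx\alpha^{-1}\ln(k/\Delta)$), but now the lower tail of $\sum_u X_u$ is controlled by the standard multiplicative Chernoff bound for bounded independent variables, which immediately yields $\Pr[\sum_u X_u\le 2\Delta]\le e^{-\Delta/2}\le e^{-\sqrt{k}/2}$. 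The point is that the Bernoulli truncation throws away the heavy tails of the geometrics but keeps enough of the mean, making the concentration step routine. Without this (or some completed alternative tail bound for sums of non-identical geometrics), your argument has a genuine gap at its central step.
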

\begin{proof}[Proof sketch]
Let $\varepsilon = 1/\sqrt{N}$. Observe that $k$-means++ almost certainly covers all clusters in $\varepsilon N$ steps (since $N\gg k$). So in the rest of this proof sketch, we assume that the
number chosen centers is at most $\varepsilon N$ and, consequently, at least $(1-\varepsilon)N$ points in the heavy cluster are not selected as centers. Hence, the cost of the heavy cluster is at least $1-\varepsilon$.

Consider a step of the algorithm when exactly $u$ light clusters remain uncovered. At this
step, the total cost of all light clusters is $\alpha u$ (we assume for simplicity that distance between the light clusters and the closest chosen center in the heavy cluster is the same as the distance to the origin). The cost of the heavy cluster is at least $1-\varepsilon$. The probability that the algorithm chooses a center from the heavy cluster and thus misses a light cluster is at least $(1-\varepsilon)/(1+\alpha u)$.

Define random variables $\cbr{X_u}$ as follows. Let $X_u = 1$ if the algorithm misses a
cluster at least once when the number of uncovered light clusters is $u$; and let
$X_u = 0$, otherwise. Then, $\cbr{X_u}$ are independent Bernoulli random variables.
For each $u$, we have $\prob{X_u = 1} \geq (1-\varepsilon)/(1+\alpha u)$.

Observe that the total number of misses is lower bounded by $\sum_{u = \extracenters}^{k-1} X_u$. Then, we have
\begin{align*}
\E\sbr{\sum_{u=\extracenters}^{k-1} X_u } &\geq (1-\varepsilon)\sum_{u = \extracenters}^{k-1} \frac{1}{1+\alpha u} \geq (1-\varepsilon)\int_{\extracenters}^k \frac{\de u}{1+\alpha u} \\
&= (1-\varepsilon)\alpha^{-1}\ln {\frac{1+\alpha k}{1+\alpha\extracenters}}  \\
&\geq (1-\varepsilon)\alpha^{-1} \ln \frac{k}{\extracenters} = 4(1-\varepsilon)\extracenters.
\end{align*}
Let $\mu = \E\sbr{\sum_{u=\extracenters}^{k-1} X_u } \geq 4(1-\varepsilon)\extracenters$. By the Chernoff bound for Bernoulli random variables, we have
$$
\prob{\sum_{u = \extracenters}^k X_u \leq 2\extracenters } \leq e^{-\mu} \rbr{\frac{e\mu}{2\extracenters}}^{2\extracenters}.
$$
Since $f(x) = e^{-x}(\frac{ex}{2\extracenters})^{2\extracenters}$ is a monotone decreasing function for $x \geq 2\extracenters$, we have
$$
\prob{\sum_{u = \extracenters}^k X_u \leq 2\extracenters }  \leq e^{-(2-4\varepsilon)\extracenters}\cdot 2^{2\extracenters} \leq e^{-\extracenters/2}.
$$
Hence, with probability as least $1-e^{-\sqrt{k}/2}$, the number of misses is greater than $2\extracenters$.
\end{proof}

For every $k$ and $\extracenters \geq \sqrt{k}$, consider the instance we constructed. By Lemma~\ref{lem:lb_miss}, the algorithm chooses more than $k+\extracenters$ centers to cover all but $\extracenters$ light clusters with probability at least $1-e^{-\sqrt{k}/2}$. Thus, at the time when the algorithm chose $k+\extracenters$ centers, the number of uncovered light clusters was greater than $\extracenters$. Hence, in the clustering with $k+\extracenters$ centers sampled by $k$-means++, the total cost is at least  $\frac{1}{4}\ln\rbr{k/\extracenters}$,
while the cost of the optimal solution with $k$ clusters is $1$. For every $k$ and $\extracenters < \sqrt{k}$, the total cost is at least $\frac{1}{4}\ln(k/\extracenters')$ with $\extracenters' = \sqrt{k}$ extra centers, which concludes the proof.
\end{proof}
\end{document}